\theoremstyle{definition}
\newtheorem{definition}{\sffamily Definition}
\numberwithin{definition}{chapter}
\newtheorem{example}{\sffamily Example}
\numberwithin{example}{chapter}
\newtheorem{assumption}{\sffamily Assumption}
\numberwithin{assumption}{chapter}
\numberwithin{approximation}{chapter}
\newtheorem{ansatz}{\sffamily Ansatz}
\numberwithin{ansatz}{chapter}
\theoremstyle{plain}
\newtheorem{theorem}{\sffamily Theorem}
\numberwithin{theorem}{chapter}
\newtheorem{proposition}{\sffamily Proposition}
\numberwithin{proposition}{chapter}
\newtheorem{lemma}{\sffamily Lemma}
\numberwithin{lemma}{chapter}
\newtheorem{corollary}{\sffamily Corollary}
\numberwithin{corollary}{chapter}
\newcommand\mydots{\makebox[1em][c]{.\hfil.\hfil.}}
\DeclareMathOperator*{\argmin}{arg\,min}
\newcommand{\el}{\mathcal{L}}
\newcommand{\gp}{\textsc{gp}}
\newcommand{\pgp}{P_\mathrm{GP}}
\newcommand{\qgp}{Q_\mathrm{GP}}
\newcommand{\qsph}{Q_\mathrm{sph}}
\newcommand{\gibbserr}{\eps_{\mathrm{Gibbs}}}
\newcommand{\bayeserr}{\eps_{\mathrm{Bayes}}}
\newcommand{\bpmerr}{\eps_{\mathrm{BPM}}}
\newcommand{\kl}{\mathrm{KL}}
\newcommand{\econst}{\mathrm{e}}
\newcommand{\eps}{\varepsilon}
\renewcommand{\phi}{\varphi}
\newcommand{\half}{\tfrac{1}{2}}
\newcommand{\R}{\mathbb{R}}
\newcommand{\Sph}{\mathbb{S}}
\newcommand{\sign}{\operatorname{sign}}
\newcommand{\trace}{\operatorname{tr}}
\newcommand{\Id}{\mathbf{I}}
\newcommand{\abs}[1]{\vert {#1} \vert}
\newcommand{\norm}[1]{\Vert {#1} \Vert}
\newcommand{\diff}{\mathrm{d}}
\newcommand{\idiff}{\,\diff}
\newcommand{\Expect}{\operatorname{\mathbb{E}}}
\newcommand{\Probe}{\mathbb{P}}
\newcommand{\normal}{\textsc{normal}}
\newcommand{\bern}{\textsc{bern}}
\renewcommand{\normal}{\textsc{normal}}
\newcommand{\uniform}{\textsc{unif}}
\begin{document}

\unilogo{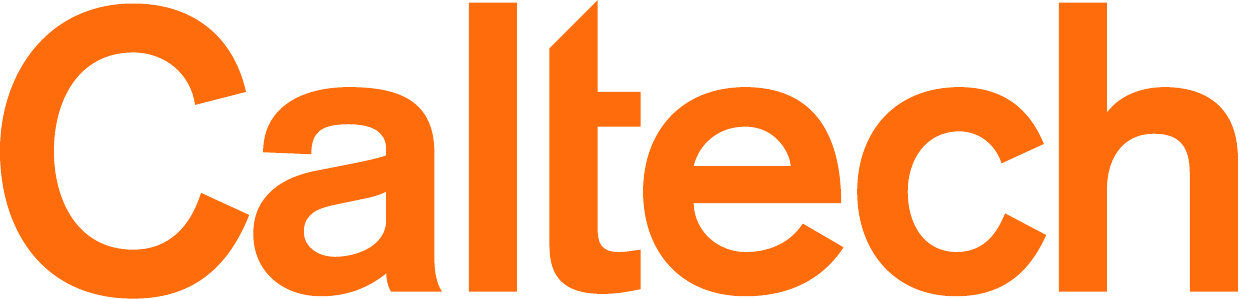}
\title{Optimisation \& Generalisation\\in Networks of Neurons}
\author{Jeremy Bernstein}

\degreeaward{Doctor of Philosophy}
\university{California Institute of Technology}
\address{Pasadena, California}
\copyyear{2023}
\defenddate{September 23, 2022}

\orcid{0000-0001-9110-7476}
\website{https://jeremybernste.in/}
\rightsstatement{All rights reserved}

\maketitle[logo]
\begin{acknowledgements}

I am grateful to the following people: 
My dear friends and my dear family, without whom this thesis would not have been written. My advisor Yisong Yue. The Yue Crew. My close collaborators Kamyar Azizzadenesheli, Dawna Bagherian, Alex Farhang, Kevin Huang, Yang Liu, Kushal Tirumala and Jiawei Zhao. My internship mentors Ming-Yu Liu, Arash Vahdat, Yu-Xiang Wang and Greg Yang. My thesis committee Ming-Yu Liu, Markus Meister, Matt Thomson and Joel Tropp. My Computation \& Neural Systems cohort Jon Kenny, Matt Rosenberg, Anish Sarma and Tony Zhang---as well as head honchos Pietro Perona and Thanos Siapas. Ollie Stephenson and everyone at Caltech Letters. My co-conspirators David Brown and Tatyana Dobreva. Laura Flower Kim and Daniel Yoder at International Student Programs. Natalie Gilmore in the Graduate Studies Office. Claire Ralph in Computing \& Mathematical Sciences. Athena Castro and Greg Fletcher at Caltech Y. Thank you for your presence, advice, friendship and support, which has enriched my life.

The artwork in this thesis was created by OpenAI's DALL$\cdot$E diffusion model.

\end{acknowledgements}
\begin{abstract}
    The goal of this thesis is to develop the optimisation and generalisation theoretic foundations of learning in artificial neural networks. The thesis tackles two central questions. Given training data and a network architecture:
    \begin{enumerate}
        \item Which weight setting will generalise best to unseen data, and why?
        \item What optimiser should be used to recover this weight setting?
    \end{enumerate}
    
    On optimisation, an essential feature of neural network training is that the network weights affect the loss function only indirectly through their appearance in the network architecture. This thesis proposes a three-step framework for deriving novel ``architecture aware'' optimisation algorithms. The first step---termed \textit{functional majorisation}---is to majorise a series expansion of the loss function in terms of functional perturbations. The second step is to derive \textit{architectural perturbation bounds} that relate the size of functional perturbations to the size of weight perturbations. The third step is to substitute these architectural perturbation bounds into the functional majorisation of the loss and to obtain an optimisation algorithm via minimisation. This constitutes an application of the \textit{majorise-minimise meta-algorithm} to neural networks.
    
    On generalisation, a promising recent line of work has applied PAC-Bayes theory to derive non-vacuous generalisation guarantees for neural networks. Since these guarantees control the average risk of ensembles of networks, they do not address which individual network should generalise best. To close this gap, the thesis rekindles an old idea from the kernels literature: the \textit{Bayes point machine}. A Bayes point machine is a single classifier that approximates the aggregate prediction of an ensemble of classifiers. Since aggregation reduces the variance of ensemble predictions, Bayes point machines tend to generalise better than other ensemble members. The thesis shows that the space of neural networks consistent with a training set concentrates on a Bayes point machine if both the network width and normalised margin are sent to infinity. This motivates the practice of returning a wide network of large normalised margin.
    
    Potential applications of these ideas include novel methods for uncertainty quantification, more efficient numerical representations for neural hardware, and optimisers that transfer hyperparameters across learning problems.
\end{abstract}
\newpage\tableofcontents
\listoffigures
\extrachapter{Notation}

\subsection*{Measuring size}

\bgroup
\def\arraystretch{1.5}
\begin{tabular}{p{1in}p{4in}}
$\displaystyle \|x\|_2$ & Euclidean norm of vector $x$\\
$\displaystyle \|W\|_F$ & Frobenius norm of matrix $W$\\
$\displaystyle \|W\|_*$ & operator norm of matrix $W$\\
$\displaystyle \|f\|_\mathrm{RKHS}$ & reproducing kernel Hilbert space norm of function $f$\\
\end{tabular}
\egroup
\vspace{0.5em}

\subsection*{Describing data}
\bgroup
\def\arraystretch{1.5}
\begin{tabular}{p{1in}p{4in}}
$\displaystyle \mathcal{X}$ & input space\\
$\displaystyle \mathcal{Y}$ &
output space\\
$\displaystyle X$ & collection of $m$ train inputs $X=\{x_1,...,x_m\}\in\mathcal{X}^m$\\
$\displaystyle Y$ & vector of $m$ train labels $Y=[y_1,...,y_m]\in\mathcal{Y}^m$\\
$\displaystyle S$ & train set $S = (X,Y) \equiv \{(x_1,y_1),...,(x_m,y_m)\}$\\
$\displaystyle f_X$ & projected function $f_X = [f(x_1),...,f(x_m)]\in\mathcal{Y}^m$\\
\end{tabular}
\egroup
\vspace{0.5em}

\subsection*{Working with kernels and Gaussian processes}
\bgroup
\def\arraystretch{1.5}
\begin{tabular}{p{1in}p{4in}}
$\displaystyle k(\cdot,\cdot)$ & kernel function $k:\mathcal{X}\times\mathcal{X}\to\R$\\
$\displaystyle K_{XX^\prime}$ & Gram matrix $K_{XX^\prime}^{ij} \coloneqq k(x_i,x^\prime_j)$\\
$\displaystyle K_{xX}$ & Gram vector $K_{xX}^{i} \coloneqq k(x,x_i)$\\
$\displaystyle K_{xx}$ & Gram scalar $K_{xx} \coloneqq k(x,x)$\\
\end{tabular}
\egroup
\vspace{0.5em}

\subsection*{Describing neural architecture}
\bgroup
\def\arraystretch{1.5}
\begin{tabular}{p{1in}p{4in}}
$\displaystyle \mathcal{W}$ & weight space\\
$L$ & number of layers\\
$\displaystyle d_l$ & width of $l$th layer\\
$\displaystyle f(\cdot;\cdot)$ & neural network $f: \R^{d_0}\times\mathcal{W}\to\R^{d_L}$\\
\end{tabular}
\egroup

\mainmatter

\partimage{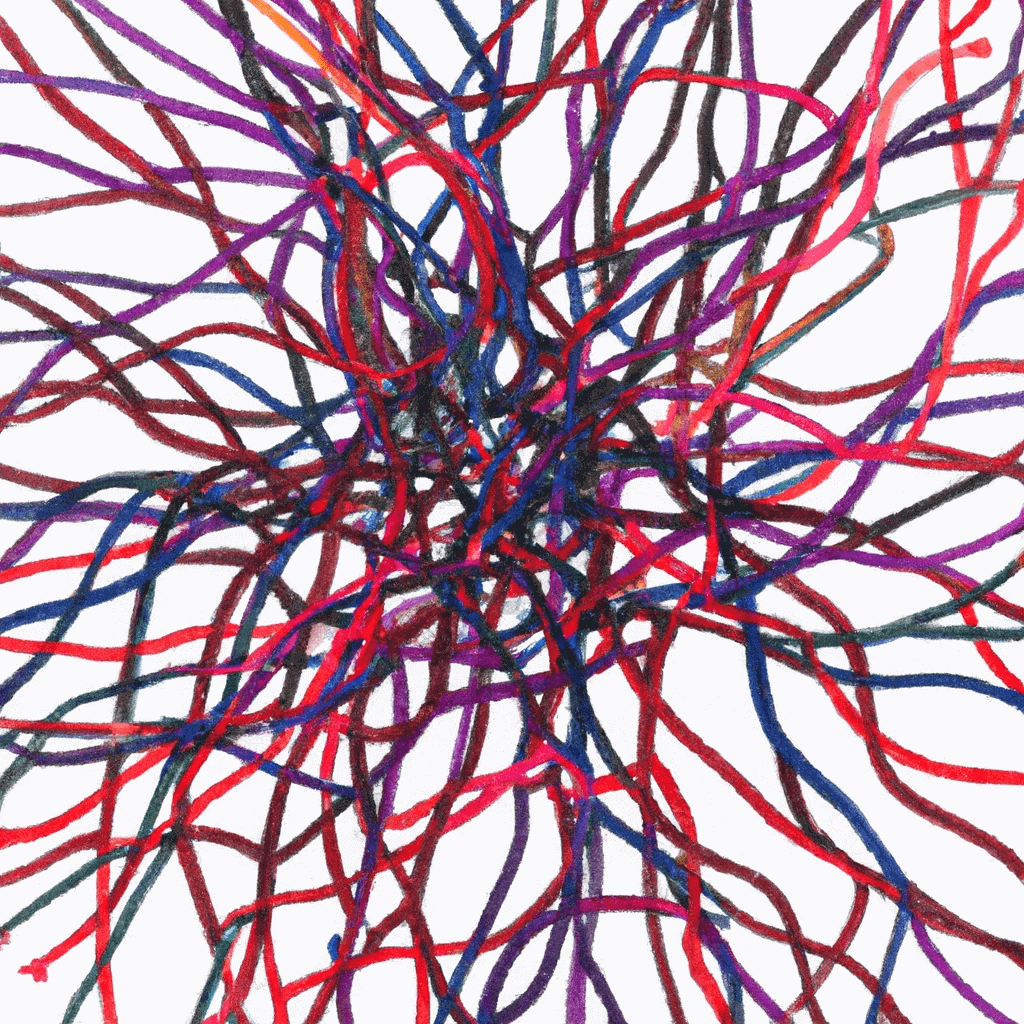}
\partquote{It’s much closer to Freud, the idea that there’s this thin film of consciousness and deliberate reasoning and all this seething stuff underneath.}{Geoffrey E.~Hinton, 2018}
\part{Introduction}
\label{part:intro}

\begin{refsection}

\chapter{Finding the Foundations}

\begin{tcolorbox}
This chapter introduces the central goal of this thesis: to find the foundations of optimisation and generalisation in artificial neural networks.
\end{tcolorbox}

Research into artificial neural networks has drawn on various disciplines of science and engineering. Neuroscience and psychology have inspired basic learning frameworks \citep{Sutton1998} as well as specific neural architectures \citep{Fukushima2004NeocognitronAS}. Computer engineering has yielded hardware accelerators that enable both experimentation and applications at larger and larger scale \citep{steinkrau}. And mathematics and statistics offer the toolkits needed to understand the basic properties of learning systems. 

From this breadth of scientific input, a paradigm known as \textit{deep learning} has emerged \citep{SCHMIDHUBER201585,deeplearning}. Deep learning has been driving progress in machine learning applications across science and industry over the course of the last decade. While certain applications deviate from the following schema, at its core, deep learning involves three steps:
\begin{enumerate}
    \item A large dataset of training examples is collected. These examples should in a sense span the richness of behaviour present in the task of interest.
    \item An expressive neural network is constructed. A neural network consists of simple linear building blocks chained together and interspersed with simple elementwise nonlinearities to yield an overall highly complex and nonlinear function. The neural network is parameterised by the weights of the linear building blocks, and adjusting these weights adjusts the function that the network implements.
    \item The error---otherwise known as the \textit{loss}---of the network over the training examples is evaluated, and the mathematical gradient of this error with respect to the network's weights is computed. The weights are then adjusted according to this gradient so as to reduce the error. This step is iterated until the error on the training examples has been made small. 
\end{enumerate}

The fascinating aspect of this procedure, and of learning in general, is that minimising error on \textit{train examples} is often sufficient to attain good performance on previously unseen \textit{test examples}. Furthermore, there is a certain conceptual simplicity: deep learning is just gradient descent on a neural network's error over a set of examples. Despite this simplicity, some of the most basic questions surrounding its underlying mathematics are not resolved. For example:

\begin{enumerate}
    \item[$\langle?\rangle$] \textit{Optimisation.} Given the gradient of a neural network's error, how far and in which direction should the network weights be best adjusted?
    \item[$\langle?\rangle$] \textit{Generalisation.} Which of the functions that a neural network implements will perform best on unseen data. And why?
\end{enumerate}

In practice, these questions are usually addressed by trial-and-error over a set of heuristic techniques. For instance, a few variants of gradient descent are known to work quite well for neural networks. On a given task, one such variant will often perform well, but it is not known in advance which it will be \citep{crowded_valley}. Such trial-and-error has been highly successful---it is responsible for the wealth of deep learning applications that are seen today. Nevertheless, it is a contention of this thesis that, by answering these questions by way of a formal theory, there is potential both to simplify practical workflows as well as to unlock fundamentally new deep learning functionalities.

Part of the reason that these questions are still open is that researchers do not know which theoretical framework should be used to answer them. For optimisation, researchers have attempted to apply such varied frameworks as information geometry \citep{amari} and mirror descent \citep{azizan2018stochastic}. For generalisation, the situation is similar \citep{Prez2020GeneralizationBF}. The aim of this thesis, then, is to develop foundational frameworks and principles that should be used to study learning in artificial neural networks. 

These principles could be useful to the machine learning practitioner, since they could provide her with learning algorithms that generalise better while requiring less arbitrary tuning of hyperparameters. They could be useful to the computer hardware engineer, since they could help him design chips that more effectively support learning. And the principles could be useful to the neuroscientist who is seeking to transfer ideas ``upstream'' to the study of biological neural networks.

The next two sections introduce the steps taken by this thesis toward tackling these questions of optimisation and generalisation in networks of neurons.

\section{Optimisation via perturbation}

\textit{Hyperparameter tuning} is the bane of every deep learning practitioner's existence. A large number of optimisation algorithms have been proposed for neural networks \citep{crowded_valley}, and each has a set of adjustable parameters known as \textit{hyperparameters} that affect the performance of the method. The \textit{learning rate} is the canonical example of a hyperparameter---this controls how strongly the network weights are adjusted in response to the gradient of the network's error. In the absence of compelling theoretical guidance on how to set the learning rate, best engineering practice is to try a logarithmic grid of possibilities and to see what works best \citep{Goodfellow-et-al-2016}. This tuning process inflates the computational cost of applications since a network must be trained many times in order to find a single network that works well.

This thesis argues that the reason learning rate tuning in deep learning is so cumbersome is that a proper \textit{perturbation analysis} of neural architecture is missing. Roughly, what this means is that there is not a simple, computationally tractable means of estimating how sensitive the network's function is to adjustments of its weights. And even given such a sensitivity measure, there is no way to apply it to  derive optimisation algorithms for learning problems. To move beyond this situation, this thesis poses the following question:
\begin{quote}
    How far can the weights of a neural network be perturbed before the function of the network is damaged?
\end{quote}

Answering this question is important for optimisation, since an optimiser must not damage the network that it is training. But the question could be of more general interest, too. It gets at the \textit{precision} with which weights need to be stored, so it could be important for the computer hardware engineer to consider. Furthermore, the question could be interesting to the neuroscientist studying the dynamics of synaptic plasticity in living brains.

This question is tackled in Part \ref{part:opt}. Chapter \ref{chap:perturb} surveys classic iterative optimisation algorithms and shows how they may be put on common footing by way of a perturbation analysis operating in the \textit{weight space} of the optimisation problem. Next, Chapter \ref{chap:maj-min} restricts attention to machine learning optimisation problems where the weights enter the optimisation problem via the machine learning model architecture. The chapter develops a novel technique termed \textit{functional majorisation} that is essentially a perturbation analysis of the loss function operating in the function space of the machine learning model. Finally, Chapter \ref{chap:nn-maj-min} develops novel \textit{architectural perturbation bounds} for deep neural networks. These bounds connect the size of weight perturbations to the size of the induced perturbation in the network function. They may be substituted into the functional majorisation of the loss function to obtain novel \textit{architecture aware} optimisation methods for deep neural networks. These methods address how the learning rate should depend on details such as the depth of the neural network that is being trained.

In short, the thesis develops a perturbation analysis of deep networks through \textit{architectural perturbation bounds}, and shows how these bounds interact with the neural network's error via \textit{functional majorisation}. The resulting optimisation algorithms, obtained by minimising the functional majorisation of the error with respect to weight perturbations, constitute an application of the \textit{majorise-minimise meta-algorithm} \citep{mm} to neural networks.

\section{Generalisation via aggregation}

What allows a machine learning model that has been fit to a finite set of training data to generalise to test examples that it has never seen before? This is, in a sense, the fundamental question of learning. This question is particularly interesting in the case of neural networks that are vastly \textit{over-parameterised}, meaning that they have far more weights than training data. In this case, the neural network may have enough capacity to simply \textit{memorise} its training data, without performing any useful computational processing that could lead to generalisation \citep{Zhang2017UnderstandingDL}. So why, when these kinds of vastly over-parameterised networks are trained, do they generalise regardless?

The study of generalisation in machine learning algorithms has a rich history. For instance, \textit{uniform convergence theory}---dating back to the work of \citet{vcpaper}---attempts to bound the difference between train and test error for all functions in the space of functions in which one is interested. Meanwhile \textit{PAC-Bayes theory} \citep{mcallester1999some} provides another means of bounding the generalisation gap of machine learning algorithms. Unlike uniform convergence bounds, PAC-Bayes bounds are on the average generalisation gap over a distribution of functions. PAC-Bayes bounds have been found to be significantly tighter than uniform convergence bounds \citep{seeger}, while incorporating information about both the training set and the machine learning model architecture in a natural way.

Unfortunately, since PAC-Bayes bounds hold in expectation over distributions of functions, they say nothing about the generalisation of an individual function. To address this shortcoming, this thesis poses the following question:

\begin{quote}
    Given a neural network with the capacity to fit a set of training data in many ways, which of these functions should generalise best?
\end{quote}
    
Answering this question is important for two practical reasons. First, most directly, in many applications one is interested in returning the single network that makes the best possible predictions. And second, less directly, in some applications one is interested in obtaining some measure of the uncertainty of the predictions of this best network. One idea for assessing uncertainty involves training an ensemble of networks in order to measure the variance across their predictions. But then it is important to have a means of ensuring that ensemble members do not all just collapse on to the single best generalising network.

These issues are tackled in Part \ref{part:gen}. Chapter \ref{chap:g-theory} surveys some classic ideas in generalisation theory, including uniform convergence theory and PAC-Bayes theory. Chapter \ref{chap:gp-pac-bayes} develops a PAC-Bayes theory of Gaussian process classification. While this chapter contains little conceptual novelty in comparison to prior work \citep{seeger}, the chapter derives some novel analytical results that will be useful later on. Finally, Chapter \ref{chap:bpm} rekindles the idea of the \textit{Bayes point machine} \citep{bpms}. A Bayes point machine is a single classifier that approximates an ensemble's aggregate prediction. Since aggregation tends to improve ensemble performance, Bayes point machines are thought to generalise significantly better than other predictors. Via a detour through the neural network--Gaussian process correspondence \citep{radford}, the thesis finds that maximising the \textit{normalised margin} of a neural network's training predictions causes the network function to concentrate on a Bayes point machine.

The main conceptual agenda of this part of the thesis is to put forward a novel perspective on generalisation in artificial neural networks as arising from a specific form of approximate Bayesian inference. In particular, by leveraging a statistical characterisation of the neural network function space known as the neural network--Gaussian process correspondence, it can be seen that a single neural network may itself approximate an aggregated predictor with good generalisation properties.

But before all that, the remaining chapters of this first part of the thesis will formally introduce the machine learning problem, as well as the technical tools needed to study it. Chapter \ref{chap:functionspaces} formally introduces neural networks, Gaussian processes and kernel methods, while Chapter \ref{chap:correspondences} introduces various correspondences between these spaces of functions.

\printbibliography[heading=subbibliography]
\end{refsection}
\begin{refsection}

\chapter{Constructing Spaces of Functions}
\label{chap:functionspaces}

\begin{tcolorbox}
This chapter provides an introduction to three popular machine learning techniques: kernel methods, Gaussian processes and neural networks. The material is expositional and is included for the reader's aid.
\end{tcolorbox}

A machine learning algorithm uses training data to select a function from a space of functions. In order for this procedure to work well, it is important to construct an appropriate space of functions for the algorithm to select from. When choosing a space of functions, there are three main considerations:
\begin{enumerate}
    \item \textit{Leveraging prior information.} How can prior knowledge or belief about the structure of the data be incorporated into the function space?
    
    \item \textit{Measuring complexity.} How can one assess the complexity of a function within the space, in order to choose a simple explanation of the data?
    
    \item \textit{Computational efficiency.} How can one design a space of functions that is cheap to select from in the face of data?
\end{enumerate}

This chapter surveys three popular techniques for constructing spaces of functions: kernel methods, Gaussian processes and neural networks. Each technique embodies a different philosophical approach to the above considerations. Kernel methods use \textit{functional analysis} to incorporate prior knowledge and measure complexity. Gaussian processes employ the tools of \textit{Bayesian probability}. While for neural networks, these considerations are mainly addressed via \textit{empiricism}---the proof of a method's validity is in its pudding.

While kernel methods, Gaussian processes and neural networks each take a different approach to the above questions, correspondences exist between their respective function spaces which allow tools from one to be ported to another. These correspondences are surveyed in Chapter \ref{chap:correspondences}. The present chapter focuses on introducing the function spaces themselves, their complexity measures and their approaches to fitting data.

Before moving on to the techniques, it will first be useful to define the notion of \textit{projecting} a function on to a set of inputs.

\begin{definition}[Function projection]\label{def:project} Given a function $f:\mathcal{X}\to\mathcal{Y}$ and a collection of $m$ inputs $X=\{x_1,...,x_m\}$, the \textit{projection} $f_X$ of the function $f$ on to the input data $X$ is given by:
\begin{equation}
    f_X \coloneqq \big[f(x_1), ..., f(x_m)\big] \in \mathcal{Y}^m.
\end{equation}
\end{definition}
In words: the projection of a function on to a set of inputs is the vector of function outputs across the inputs. Given a function $f(\cdot,w):\mathcal{X}\to\mathcal{Y}$ that is parameterised by a weight vector $w$, the projection on to $X$ is denoted $f_X(w)$.

\section{Kernel methods}

A kernel $k(\cdot,\cdot)$ is a function that measures the degree of similarity between its two arguments. In particular, when $k(x,x^\prime)$ is large then inputs $x$ and $x^\prime$ are similar, and when $k(x,x^\prime)$ is small then the two inputs are dissimilar. The notion of similarity is encoded by the choice of kernel function---a simple example being the Gaussian kernel:

\begin{example}[Gaussian kernel]\label{def:kgauss} For a pair of inputs $x,x^\prime\in\R^n$ and a length scale $\sigma>0$, the Gaussian kernel is given by:
\begin{equation}
    k_{\mathrm{Gaussian}}(x,x^\prime)\coloneqq \exp \left( - \frac{\|x-x^\prime\|_2^2}{2\sigma^2}\right).
\end{equation}
\end{example}

Beyond measuring similarity, a kernel may be used to construct a space of functions. To see this, consider that the Gaussian kernel viewed as a function of $x$ may be interpreted as an unnormalised Gaussian measure centred at the point $x^\prime$. The basic idea is that one may construct a more-or-less arbitrary function by superposing many Gaussian kernels centred at different locations. This idea is illustrated in Figure \ref{fig:kernel}. In accordance with this construction, the function $k_{\mathrm{Gaussian}}(\cdot,x^\prime)$ is referred to as the \textit{kernel basis function} centred at $x^\prime$.

\begin{figure}
    \centering
    \begin{minipage}[c]{0.38\textwidth}
    \includegraphics{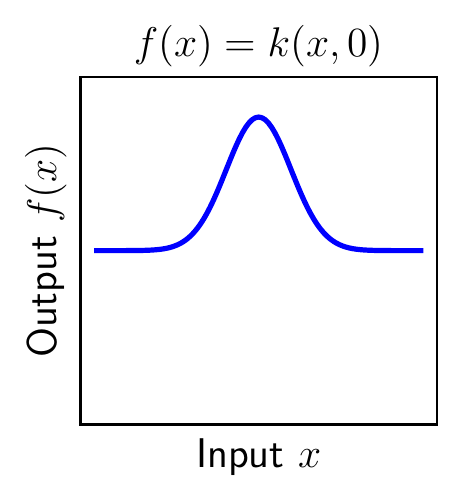}
    \end{minipage}
    \begin{minipage}[c]{0.22\textwidth}
    \begin{center}
        \Huge $\xrightarrow{\text{\normalsize superpose}}\;$
    \end{center}
    \end{minipage}
    \begin{minipage}[c]{0.38\textwidth}
    \includegraphics{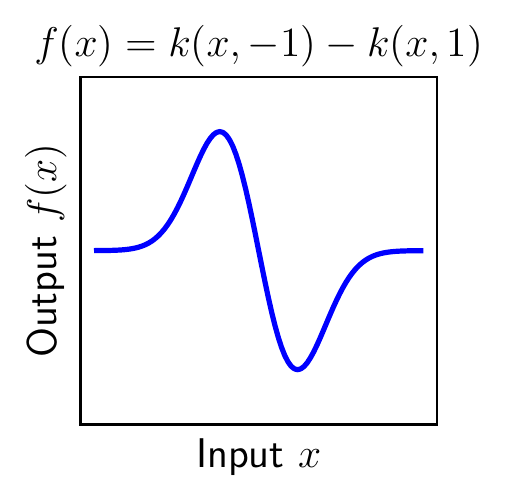}
    \end{minipage}
    \caption[Constructing a function by superposing kernel basis functions]{Constructing a function by superposing kernel basis functions. The left panel displays a kernel basis function for the Gaussian kernel (Example \ref{def:kgauss}). The right panel shows how two kernel basis functions can be superposed to build a more complicated function. A \textit{reproducing kernel Hilbert space} (Definition \ref{def:rkhs}) consists of superpositions of arbitrarily many kernel basis functions.}
    \label{fig:kernel}
\end{figure}

By superposing kernel basis functions in this way, any kernel $k(\cdot,\cdot)$ can be used to construct a space of functions. The functions are parameterised by the centres and strengths of each kernel basis function within the superposition. Such a space of functions is known as a \textit{reproducing kernel Hilbert space} (RKHS). The remainder of this section will introduce the concept of an RKHS more formally, including how to measure the complexity of a function within an RKHS, and how to select a function from an RKHS in light of data.

\subsection{Using a kernel to construct a space of functions} 

For a function $k(\cdot,\cdot)$ to qualify as a kernel, it must satisfy two conditions:

\begin{definition}[Kernel]\label{def:kernel} A function $k:\mathcal{X}\times\mathcal{X}\to\R$ is a \textit{kernel} provided that:
\begin{enumerate}[label=\roman*)]
    \item $k$ is \textit{symmetric}: for any pair of inputs $x,x^\prime \in \mathcal{X}$, $k(x,x^\prime) = k(x^\prime, x)$;
    \item $k$ is \textit{positive definite}: for any set of $m$ distinct inputs $X=\{x_1,...,x_m\}$, the corresponding \textit{Gram matrix} $K_{XX}^{ij}\coloneqq k(x_i,x_j)$ is positive definite.
\end{enumerate}
\end{definition}
These conditions imply that important computations involving the kernel are well-defined. For instance, the inverse $K_{XX}^{-1}$ exists. Given Definition \ref{def:kernel}, it is simple to construct a space of functions by superposing kernel basis functions:

\begin{definition}[Pre-RKHS]\label{def:pre-rkhs} Given a kernel $k: \mathcal{X} \times \mathcal{X} \to \R$, the \textit{precursor} to a \textit{reproducing kernel Hilbert space} (pre-RKHS) consists of all linear combinations of finitely many kernel basis functions---that is, all functions of the form:
\begin{equation}
    f(\cdot)=\sum_{i=1}^m \alpha_i \, k(\cdot,x_i),
\end{equation}
for any weights $\alpha_1,...,\alpha_m\in\R$, centres $x_1, ..., x_m \in \mathcal{X}$, and positive integer $m$.
\end{definition}

One of the attractive features of a pre-RKHS is that one may measure the similarity of two functions within it by taking an inner product:

\begin{definition}[Pre-RKHS inner product] Given a kernel $k$ and two functions $f(\cdot) = \sum_{i=1}^m \alpha_i \,k(\cdot,x_i)$ and $g(\cdot) = \sum_{i=1}^{m^\prime} \beta_i\,k(\cdot,x_i^\prime)$, the \textit{inner product} of $f$ and $g$ in the pre-RKHS induced by $k$ is given by:
\begin{equation*}
    \langle f, g \rangle_\mathrm{RKHS} \coloneqq \sum_{i=1}^m \sum_{j=1}^{m^\prime} \alpha_i k(x_i,x_j^\prime) \beta_j \eqqcolon \alpha^\top K_{XX^\prime} \beta.
\end{equation*}
\end{definition}
It can be checked that, by the symmetry and positive definiteness of $k$, this definition serves as a valid inner product.

A standard treatment of kernel methods would now proceed to \textit{complete} the pre-RKHS to obtain a true \textit{Hilbert space} of functions. Completion essentially involves augmenting the pre-RKHS with certain limits of sequences of functions. This process adds significant technical overhead, while the pre-RKHS is already sufficient for the techniques studied in this thesis. As such, the term RKHS will be used herein merely as convenient shorthand for a pre-RKHS.

\begin{definition}[RKHS]\label{def:rkhs} By an abuse of terminology, an \textit{RKHS} is a pre-RKHS.
\end{definition}
\begin{definition}[RKHS inner product] By an abuse of terminology, an \textit{RKHS inner product} is a pre-RKHS inner product.
\end{definition}

Notice that the inner product of a function $f(\cdot) = \sum_{i=1}^m \alpha_i \,k(\cdot,x_i)$ with the kernel basis function $k(\cdot,x)$ \textit{reproduces} the function evaluated at $x$:
\begin{equation}\label{eq:reproducing}
    \langle f, k(\cdot,x) \rangle_{\mathrm{RKHS}} = \sum_{i=1}^m \alpha_i \,k(x,x_i) = f(x).
\end{equation}
Equation \ref{eq:reproducing} is known as the \textit{reproducing property} of the RKHS.

\subsection{Measuring complexity via RKHS norm}

The RKHS inner product also leads to a natural tool for measuring the complexity of functions within the RKHS: the RKHS norm.
\begin{definition}[RKHS norm]\label{def:rkhs-norm} Given a kernel $k$, the \textit{RKHS norm} of a function $f(\cdot) = \sum_{i=1}^m \alpha_i\,k(\cdot,x_i)$ in the RKHS induced by $k$ is given by:
\begin{equation*}
    \norm{f}_\mathrm{RKHS}^2 \coloneqq \langle f, f \rangle_\mathrm{RKHS} = \sum_{i=1}^m \sum_{j=1}^m \alpha_i k(x_i,x_j) \alpha_j \eqqcolon \alpha^\top K_{XX} \alpha.
\end{equation*}
\end{definition}

To demonstrate that the RKHS norm constrains the complexity of a function within an RKHS, the following lemma shows that the RKHS norm limits how fast a function can vary as its input is varied:

\begin{lemma} Consider a function $f:\mathcal{X}\to\R$ in an RKHS induced by kernel $k$. For any two inputs $x,x^\prime\in\mathcal{X}$, the variation in $f$ satisfies:
\begin{equation}
    \left|f(x) - f(x^\prime)\right| \leq \|f\|_\mathrm{RKHS} \cdot \mathrm{d}(x,x^\prime),
\end{equation}
where the distance function $\mathrm{d}(x,x^\prime)\coloneqq\sqrt{k(x,x) + k(x^\prime,x^\prime) - 2\cdot k(x,x^\prime)}$.
\end{lemma}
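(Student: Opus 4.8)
The plan is to route everything through the reproducing property of Equation \ref{eq:reproducing} together with the Cauchy--Schwarz inequality, which holds for any valid inner product and hence for the RKHS inner product. The first step is to express the difference $f(x) - f(x^\prime)$ as a single RKHS inner product. Using the reproducing property to write $f(x) = \langle f, k(\cdot,x)\rangle_\mathrm{RKHS}$ and $f(x^\prime) = \langle f, k(\cdot,x^\prime)\rangle_\mathrm{RKHS}$, and then appealing to the bilinearity of the inner product, I would obtain
\begin{equation*}
    f(x) - f(x^\prime) = \langle f,\, k(\cdot,x) - k(\cdot,x^\prime)\rangle_\mathrm{RKHS}.
\end{equation*}

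The second step is to apply Cauchy--Schwarz to this inner product, yielding
\begin{equation*}
    \left| f(x) - f(x^\prime)\right| \leq \|f\|_\mathrm{RKHS} \cdot \left\| k(\cdot,x) - k(\cdot,x^\prime)\right\|_\mathrm{RKHS}.
\end{equation*}
This already exposes the factor $\|f\|_\mathrm{RKHS}$; it remains only to identify the second factor with the claimed distance $\mathrm{d}(x,x^\prime)$.

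The third step is therefore to compute $\| k(\cdot,x) - k(\cdot,x^\prime)\|_\mathrm{RKHS}^2$ by expanding it as an inner product of the difference with itself and using bilinearity to split it into three terms. Each term is an inner product of two kernel basis functions, and the reproducing property (applied with $f = k(\cdot, x)$, say) evaluates such an inner product as $\langle k(\cdot,x), k(\cdot,x^\prime)\rangle_\mathrm{RKHS} = k(x,x^\prime)$. Collecting the three terms gives $k(x,x) - 2\,k(x,x^\prime) + k(x^\prime,x^\prime)$, which is exactly $\mathrm{d}(x,x^\prime)^2$. Substituting this back into the Cauchy--Schwarz bound completes the argument.

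I do not anticipate a genuine obstacle here, since the result is a direct consequence of the reproducing property and Cauchy--Schwarz. The only point requiring mild care is that the RKHS inner product of this excerpt is defined on the pre-RKHS (Definition \ref{def:pre-rkhs}), so I should confirm that $k(\cdot,x)$ and $k(\cdot,x^\prime)$ are themselves legitimate elements of the space (they are, being single-term superpositions with $m=1$) and that the inner product restricted to them behaves as the reproducing property dictates. Given that Cauchy--Schwarz was already asserted to hold for this inner product by its validity, no completion of the space is needed, consistent with the \emph{abuse of terminology} adopted in Definition \ref{def:rkhs}.
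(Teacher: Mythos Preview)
Your proposal is correct and follows essentially the same approach as the paper: express $f(x)-f(x^\prime)$ via the reproducing property as $\langle f, k(\cdot,x)-k(\cdot,x^\prime)\rangle_\mathrm{RKHS}$, apply Cauchy--Schwarz, and then evaluate $\|k(\cdot,x)-k(\cdot,x^\prime)\|_\mathrm{RKHS}$ by expanding the inner product. Your additional remarks about the pre-RKHS setting are sound but not needed for the argument to go through.
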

\begin{proof} By the reproducing property and the Cauchy-Schwarz inequality:
\begin{align*}
    \left|f(x) - f(x^\prime)\right| &= \left|\langle f, k(\cdot,x)-k(\cdot,x^\prime) \rangle_{\mathrm{RKHS}}\right|\\
    &\leq \|f\|_\mathrm{RKHS} \cdot \|k(\cdot,x)-k(\cdot,x^\prime)\|_\mathrm{RKHS}.
\end{align*}
The proof is completed by observing that, by the definition of RKHS norm, it holds that $\|k(\cdot,x)-k(\cdot,x^\prime)\|_\mathrm{RKHS}=\sqrt{k(x,x) + k(x^\prime,x^\prime) - 2\cdot k(x,x^\prime)}.$
\end{proof}
So a function's RKHS norm serves as a kind of \textit{Lipschitz constant} for the function's continuity. The distance between inputs in this notion of continuity is measured according to a special distance function $\mathrm{d}(\cdot,\cdot)$ related to the degree of kernel similarity between inputs. To gain further intuition about this lemma, it may help to consider its specialisation to the Gaussian kernel:
\begin{corollary}Consider a function $f:\mathcal{X}\to\R$ in the RKHS induced by the Gaussian kernel (Definition \ref{def:kgauss}) with length scale $\sigma=1$. For any two inputs $x,x^\prime\in\mathcal{X}$, the variation in $f$ satisfies:
\begin{equation}
    \left|f(x) - f(x^\prime)\right| \leq \sqrt{2} \cdot \|f\|_\mathrm{RKHS} \cdot \sqrt{1 - \econst^{-\half\|x-x^\prime\|_2^2}}.
\end{equation}
\end{corollary}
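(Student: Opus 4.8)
The plan is to apply the preceding lemma directly, taking $k$ to be the Gaussian kernel of Example~\ref{def:kgauss} at length scale $\sigma = 1$, and then simply to evaluate the distance function $\mathrm{d}(x,x^\prime)$ for this specific kernel. The lemma already supplies the bound $|f(x)-f(x^\prime)| \leq \|f\|_\mathrm{RKHS}\cdot\mathrm{d}(x,x^\prime)$ for every RKHS function, so no new inequality is required; everything reduces to computing the three Gram scalars that enter $\mathrm{d}$.

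First I would observe that for the Gaussian kernel with $\sigma = 1$ the two diagonal Gram values satisfy $k(x,x) = k(x^\prime,x^\prime) = \exp(0) = 1$, since in each case the argument $\|x-x\|_2^2$ vanishes. The off-diagonal value is $k(x,x^\prime) = \econst^{-\half\|x-x^\prime\|_2^2}$ straight from the definition. Substituting these into $\mathrm{d}(x,x^\prime)^2 = k(x,x) + k(x^\prime,x^\prime) - 2\,k(x,x^\prime)$ collapses the right-hand side to $2 - 2\,\econst^{-\half\|x-x^\prime\|_2^2} = 2\,\big(1 - \econst^{-\half\|x-x^\prime\|_2^2}\big)$.

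Taking the positive square root factors out the constant to give $\mathrm{d}(x,x^\prime) = \sqrt{2}\cdot\sqrt{1 - \econst^{-\half\|x-x^\prime\|_2^2}}$, and inserting this expression into the lemma's bound reproduces the claimed corollary verbatim.

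Honestly there is no genuine obstacle here: the result is a direct specialisation of the lemma, and the only things to watch are the arithmetic bookkeeping — correctly tracking the factor of two so that it emerges as $\sqrt{2}$ out front — and checking that the sign of the exponent keeps the quantity under the inner square root non-negative. The latter holds automatically, since $\econst^{-\half\|x-x^\prime\|_2^2} \in (0,1]$ forces $1 - \econst^{-\half\|x-x^\prime\|_2^2} \geq 0$, so the bound is well-defined for all $x,x^\prime$.
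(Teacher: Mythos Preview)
Your proposal is correct and is exactly the intended argument: the paper states this corollary immediately after the lemma without a separate proof, treating it as a direct specialisation obtained by plugging $k(x,x)=k(x^\prime,x^\prime)=1$ and $k(x,x^\prime)=\econst^{-\half\|x-x^\prime\|_2^2}$ into $\mathrm{d}(x,x^\prime)$. Your bookkeeping of the factor of two and the non-negativity check are both fine.
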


\subsection{Fitting data subject to minimum RKHS norm}

So far, this section has defined a space of functions called an RKHS, and a way to measure the complexity of functions in that space called the RKHS norm. It now makes sense to think of finding the least complex function in the RKHS that fits a set of data. This object admits a simple description, as follows:

\begin{theorem}[Minimum RKHS norm kernel interpolation]\label{thm:min-rkhs} Consider a set of $m$ distinct training inputs $X=\{x_1,...,x_m\}$ with respective training labels arranged into a vector $Y=[y_1,...,y_m]\in\R^m$. Say that a function $f$ \textit{interpolates} $(X,Y)$ if the projection $f_X$ (Definition \ref{def:project}) satisfies $f_X = Y$. In the RKHS induced by kernel $k$, the interpolator $f_\star$ of $(X,Y)$ with minimum RKHS norm is given by:
\begin{equation}\label{eq:min-rkhs-interpolator}
    f_\star(x) = \sum_{i=1}^m (K_{XX}^{-1} Y)_i\,k(x,x_i) \eqqcolon K_{xX} K_{XX}^{-1} Y,
\end{equation}
for Gram vector $K_{xX}^{i}\coloneqq k(x,x_i)$ and Gram matrix $K_{XX}^{ij}\coloneqq k(x_i,x_j)$.
\end{theorem}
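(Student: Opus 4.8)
The plan is to cast the problem as the minimisation of $\norm{f}_\mathrm{RKHS}^2$ subject to the interpolation constraints $f(x_i) = y_i$ for all $i$, and to solve it by an orthogonal projection argument onto the finite-dimensional subspace $V \coloneqq \lspan\{k(\cdot,x_1),\dots,k(\cdot,x_m)\}$ spanned by the kernel basis functions centred at the training inputs. First I would check that the proposed $f_\star$ is admissible: evaluating $f_\star(x_j) = \sum_i (K_{XX}^{-1}Y)_i\,k(x_j,x_i) = (K_{XX} K_{XX}^{-1} Y)_j = y_j$ confirms that $f_\star$ interpolates $(X,Y)$, where the inverse $K_{XX}^{-1}$ exists by the positive definiteness demanded in Definition \ref{def:kernel}.

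The heart of the argument is to show that no interpolator can have smaller norm. Given any interpolator $f$ in the RKHS, I would orthogonally decompose it as $f = f_\parallel + f_\perp$, where $f_\parallel$ is the projection of $f$ on to $V$ and $f_\perp$ is orthogonal to $V$ in the RKHS inner product. Because each $k(\cdot,x_i)$ lies in $V$, the reproducing property (Equation \ref{eq:reproducing}) gives $f(x_i) = \langle f, k(\cdot,x_i)\rangle_\mathrm{RKHS} = \langle f_\parallel, k(\cdot,x_i)\rangle_\mathrm{RKHS} = f_\parallel(x_i)$, so $f_\parallel$ interpolates the very same data. The Pythagorean identity then yields $\norm{f}_\mathrm{RKHS}^2 = \norm{f_\parallel}_\mathrm{RKHS}^2 + \norm{f_\perp}_\mathrm{RKHS}^2 \ge \norm{f_\parallel}_\mathrm{RKHS}^2$, with equality precisely when $f_\perp = 0$. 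Hence any minimiser must lie in $V$.

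It then remains to identify the unique interpolator inside $V$. Writing $f_\parallel = \sum_i \alpha_i\,k(\cdot,x_i)$, the constraint $f_X = Y$ reads $K_{XX}\alpha = Y$ by definition of the Gram matrix. Since $K_{XX}$ is positive definite and therefore invertible, this linear system has the unique solution $\alpha = K_{XX}^{-1}Y$, which is exactly the coefficient vector defining $f_\star$. Combining this with the previous paragraph shows that $f_\star$ is the unique minimum-norm interpolator, establishing the theorem.

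The step I expect to require the most care is the orthogonal projection on to $V$, because the ambient space here is the pre-RKHS of finite superpositions (Definition \ref{def:rkhs}), and a generic interpolator $f = \sum_j \gamma_j\,k(\cdot,z_j)$ may place its centres $z_j$ away from the training inputs. Fortunately $V$ is finite dimensional with invertible Gram matrix $K_{XX}$, so the projection is well defined: one sets $f_\parallel = \sum_i \beta_i\,k(\cdot,x_i)$ with $\beta = K_{XX}^{-1} f_X$, which is again a finite superposition and so stays inside the pre-RKHS, while $f_\perp = f - f_\parallel$ is orthogonal to every $k(\cdot,x_i)$ by construction. This keeps the entire argument within the finite combinations the thesis works with, sidestepping any appeal to Hilbert-space completion.
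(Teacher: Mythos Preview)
Your proposal is correct and takes essentially the same approach as the paper: both arguments use the reproducing property to show that the difference between an arbitrary interpolator and $f_\star$ is orthogonal to each $k(\cdot,x_i)$, and hence to $f_\star$ itself, so that the Pythagorean identity yields $\norm{g}_\mathrm{RKHS}^2 = \norm{f_\star}_\mathrm{RKHS}^2 + \norm{g-f_\star}_\mathrm{RKHS}^2$. The only cosmetic difference is that the paper decomposes directly as $g = f_\star + (g-f_\star)$ and checks orthogonality, whereas you first project on to $V$ and then identify the projection with $f_\star$; your added remark about the projection staying inside the pre-RKHS is a nice piece of care that the paper leaves implicit.
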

\begin{proof}To see that $f_\star$ interpolates $(X,Y)$, observe that
\begin{equation*}
    [f_\star(x_1), ..., f_\star(x_m)] = K_{XX} K_{XX}^{-1} Y = Y.
\end{equation*}
To see that no interpolator exists with smaller RKHS norm, consider that another interpolator $g(\cdot)$ may be decomposed as:
\begin{equation*}
    g(\cdot) = f_\star(\cdot) + (g-f_\star)(\cdot).
\end{equation*}
But since $f_\star$ and $g$ both interpolate the training set, for any training input $x_i$:
\begin{equation*}
    \langle g-f_\star, k(\cdot,x_i) \rangle_\mathrm{RKHS}= g(x_i) - f_\star(x_i) =0,
\end{equation*}
where the first equality follows by the reproducing property. So $g-f_\star$ is orthogonal to kernel basis functions $k(\cdot,x_1),...,k(\cdot,x_m)$. Since $f_\star$ is constructed purely from those basis functions, this implies that $g-f_\star$ and $f_\star$ are themselves orthogonal: $\langle g-f_\star,f_\star\rangle_\mathrm{RKHS} = 0$. In turn:
\begin{equation}\label{eq:orthog-functions}
    \|g\|_\mathrm{RKHS}^2 = \|f_\star\|_\mathrm{RKHS}^2 + \|g-f_\star\|_\mathrm{RKHS}^2 \geq \|f_\star\|_\mathrm{RKHS}^2,
\end{equation}
which establishes the result.
\end{proof}

The implication of this theorem is that given a training set $S=(X,Y)$ of $m$ examples, the minimum RKHS norm interpolator $f_\star$ may be constructed by simple linear algebra operations $f_\star(x)=K_{xX} K_{XX}^{-1} Y$ involving kernel Gram matrix $K_{XX}$ and Gram vector $K_{xX}$. The dominant computational cost is that of inverting an $m \times m$ matrix, which is naïvely $\mathcal{O}(m^3)$. For this reason, the cost of kernel methods is typically cubic in the size of the training set.

Furthermore, Equation \ref{eq:min-rkhs-interpolator} shows that the kernel interpolator of minimum RKHS norm may be represented using only finitely many kernel basis functions. As such, Theorem \ref{thm:min-rkhs} is connected to a classic result known as the \textit{representer theorem} \citep{lwk}. 

\section{Gaussian processes}

The previous section showed that a function space may be constructed by superposing kernel basis functions. But there are other means of constructing a function space from a kernel. In the case of \textit{Gaussian processes}, a kernel is used to place a probability measure over a space of functions. The functions themselves may be sampled from this measure.

A probability measure on function space can be thought of as encoding \textit{prior belief} about which kinds of functions would have a good chance of explaining the data when it arrives. Phrased another way, functions of smaller prior probability might be considered more complex. An advantage of this approach is that, given the training data, a posterior distribution over functions consistent with the data may be constructed simply by turning the handle of Bayes' rule. This posterior provides not just a single prediction for a new data point, but a range of predictions each accompanied by a posterior probability.

\subsection{Constructing a space of functions by random sampling}

Given an input space $\mathcal{X}$, consider drawing a Gaussian random variable at each point $x\in\mathcal{X}$ and recording the value of each random draw as $f(x)\in\R$. To make life most interesting, one may choose not to draw these random variables independently, but rather from a joint Gaussian distribution. The covariance of this joint Gaussian could encode, for instance, that the closer an input $x$ is to another input $x^\prime$, the more likely it is that $f(x)$ would be similar to $f(x^\prime)$. This construction, of jointly Gaussian \textit{function values} $f(\cdot)$ with covariance structure based on similarity in the input space $\mathcal{X}$, is known as a Gaussian process:
\begin{definition}[Gaussian process]\label{def:gp} Consider an input space $\mathcal{X}$ and a kernel $k:\mathcal{X}\times\mathcal{X}\to\R$. If for any finite set of inputs $X=\{x_1, ..., x_m\}\in\mathcal{X}^m$, the distribution of function values $f(x_1),...,f(x_m) \sim \normal(0, K_{XX})$, then the function $f$ is drawn from a \textit{Gaussian process} with covariance $k$: $f\sim\gp(0,k)$.
\end{definition}
This definition is said to establish the \textit{prior measure} that a Gaussian process with a given kernel assigns to a space of functions.

\subsection{Measuring complexity via probability}
In principle, a Gaussian process provides a simple means to compare the complexity of functions. For instance, given two functions $f_1$ and $f_2$, one may simply declare the less likely of $f_1$ and $f_2$ under the Gaussian process prior to be more complex. To make this idea workable, one notices in Definition \ref{def:gp} that it is easier to assess the probability of a Gaussian process function when it is inspected only on a finite collection of inputs. As such, one could collect a collection of $m$ inputs $X\in\mathcal{X}^m$ and compare the density that $\normal(0,K_{XX})$ assigns to $\{f_1(x)\}_{x\in X}$ versus $\{f_2(x)\}_{x\in X}$. This procedure is not entirely satisfactory, since the answer depends on the choice of the set of inputs $X$. Still it conveys the spirit of the idea that prior probability may be used to assess complexity.

That being said, the complexity of individual functions does not usually play a major role in discussion of Gaussian processes. \textit{Distributions over functions} are considered to be more important, and Bayesians advocate for making predictions by integrating over these distributions of functions \citep{radford}. The next section will discuss how a posterior distribution may be constructed from the Gaussian process prior in light of data.

\subsection{Fitting data via Bayesian inference}

An attractive feature of Gaussian processes is that, given data, one may use Bayes' rule to derive a posterior distribution over functions. Given a training sample $S=(X,Y)$, Bayes' rule states that:
\begin{equation}\label{eq:bayes}
\Probe[f\mid S] = \frac{\Probe[S\mid f]\cdot \Probe[f]}{\sum_{f^\prime}\Probe[S\mid f^\prime]\cdot \Probe[f^\prime]}.
\end{equation}
There are three important quantities appearing in this equation:
\begin{enumerate}
    \item The \textit{prior} $\Probe[f]$ denotes the prior probability of function $f$.
    \item The \textit{likelihood} $\Probe[S\mid f]$ denotes how likely it is that a training sample $S=(X,Y)$ was obtained from a particular function $f$. The choice of likelihood is a modelling decision. The \textit{zero-one likelihood} is common:
    \begin{equation}\label{eq:zero-one-like}
        \Probe_{0/1}[S\mid f] \coloneqq \mathbb{I}[f(X) = Y].
    \end{equation}
    \item The \textit{posterior} $\Probe[f\mid S]$ denotes the probability of $f$ in light of the training sample under the choice of likelihood.
\end{enumerate}

\begin{figure}
    \centering
    \includegraphics{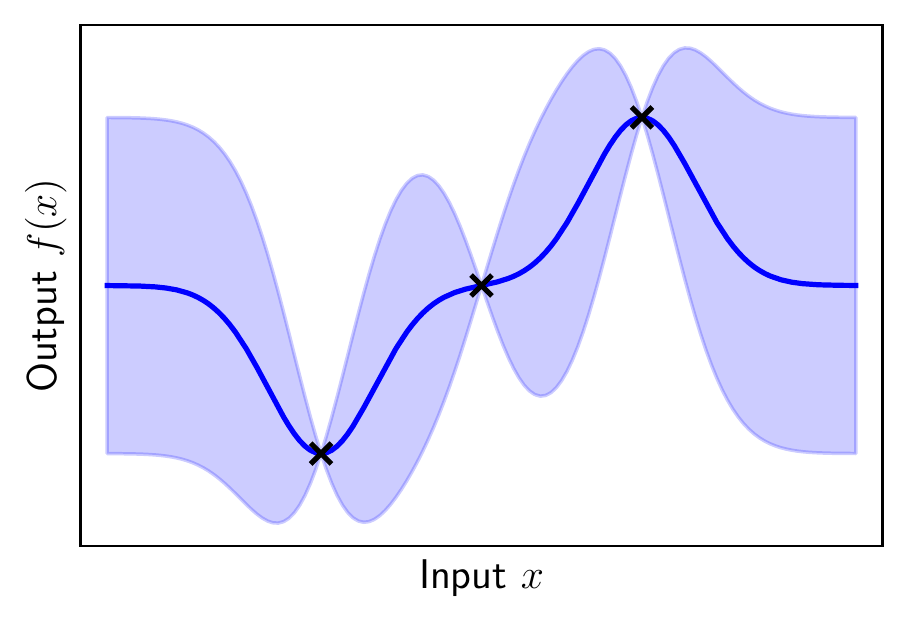}
    \caption[Posterior distribution of a Gaussian process]{Posterior distribution of a Gaussian process, conditioned on fitting data marked by the black crosses. The kernel was set to the Gaussian kernel (Definition \ref{def:kgauss}) and the likelihood was set to zero-one (Equation \ref{eq:zero-one-like}). The solid blue line depicts the posterior mean, and the shaded region represents $\pm 1$ standard deviations about this mean.}
    \label{fig:gp}
\end{figure}

Bayesian inference is generally intractable due to the high-dimensional summation or integration that appears in the denominator of Equation \ref{eq:bayes}. But for Gaussian processes the required integrals are Gaussian in nature and often admit closed-form solutions. The following theorem is an important example:
\begin{theorem}\label{thm:gp-cond} Suppose that $f\sim\gp(0,k)$. Conditioned on $f$ interpolating dataset $(X,Y)$, the distribution of $f$ projected on to a fresh set of inputs $X^\prime$ is:
\begin{equation}\label{eq:gp-cond}
    f_{X^\prime} \sim \normal(K_{X^\prime X}K_{XX}^{-1}Y, K_{X^\prime X^\prime}-K_{X^\prime X}K_{XX}^{-1}K_{X X^\prime}).
\end{equation}
\end{theorem}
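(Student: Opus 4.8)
The plan is to reduce the claim to the standard Gaussian conditioning formula, whose only inputs are the joint Gaussianity supplied by the Gaussian process together with a decorrelation argument that dispenses cleanly with the measure-zero conditioning event.

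First I would pool the training inputs $X$ and the fresh inputs $X^\prime$ into a single finite collection and apply Definition~\ref{def:gp} directly. This yields that the stacked vector $[f_X, f_{X^\prime}]$ is jointly Gaussian with mean zero and block covariance matrix built from the Gram matrices $K_{XX}$, $K_{XX^\prime}$, $K_{X^\prime X}$ and $K_{X^\prime X^\prime}$. Since $k$ is a kernel, Definition~\ref{def:kernel} guarantees that $K_{XX}$ is positive definite and hence that $K_{XX}^{-1}$ exists, so every expression appearing in the target formula is well-defined.

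Second, rather than manipulate conditional densities on the measure-zero event $f_X = Y$, I would introduce the residual $Z \coloneqq f_{X^\prime} - K_{X^\prime X} K_{XX}^{-1} f_X$. A short covariance computation shows $\Cov(Z, f_X) = K_{X^\prime X} - K_{X^\prime X} K_{XX}^{-1} K_{XX} = 0$, so $Z$ is uncorrelated with $f_X$; because $(Z, f_X)$ is a linear image of a jointly Gaussian vector, zero correlation forces genuine independence. Independence is the crucial step, since it means that conditioning on $f_X = Y$ does not alter the law of $Z$. Writing $f_{X^\prime} = Z + K_{X^\prime X} K_{XX}^{-1} f_X$ and substituting the conditioning value then gives $f_{X^\prime} \mid (f_X = Y) \stackrel{d}{=} Z + K_{X^\prime X} K_{XX}^{-1} Y$, a Gaussian whose mean is $K_{X^\prime X} K_{XX}^{-1} Y$ as required.

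Finally I would compute the covariance of $Z$ by expanding $\Cov(Z)$ with $A \coloneqq K_{X^\prime X} K_{XX}^{-1}$; the two cross terms and the quadratic term all collapse to $K_{X^\prime X} K_{XX}^{-1} K_{X X^\prime}$, leaving $\Cov(Z) = K_{X^\prime X^\prime} - K_{X^\prime X} K_{XX}^{-1} K_{X X^\prime}$, which matches Equation~\ref{eq:gp-cond}. I expect the main obstacle to be conceptual rather than computational: justifying that conditioning on the probability-zero interpolation event is legitimate at all. The decorrelation trick is precisely what sidesteps this difficulty — it replaces ``condition on $f_X = Y$'' with an honest independence statement followed by substitution — so the only point demanding care is the passage from zero correlation to true independence, which is valid exactly because the underlying vector is jointly Gaussian.
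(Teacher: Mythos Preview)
Your proposal is correct and follows the same reduction as the paper: use the Gaussian-process definition to get joint Gaussianity of $(f_X, f_{X^\prime})$, then invoke the standard multivariate Gaussian conditioning formula. The only difference is that the paper simply cites this formula from a textbook \citep{bishop}, whereas you actually derive it via the residual/decorrelation argument; your version is therefore more self-contained and handles the measure-zero conditioning issue explicitly, but the route is the same.
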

\begin{proof}
    Because $f\sim\gp(0,k)$, then $(f_X, f_{X^\prime})$ is jointly Gaussian with covariance $K_{X\cup X^\prime\,X\cup X^\prime}$ under this prior. The conditional distribution of this multivariate Gaussian, given that $f_X=Y$, is given by Equation \ref{eq:gp-cond} \citep{bishop}.
\end{proof}

Theorem \ref{thm:gp-cond} allows one to sample functions from the Gaussian process conditioned on interpolating a set of $m$ training examples. This process is illustrated in Figure \ref{fig:gp}. Just as was the case for kernel methods, the dominant computational cost is that of inverting the $m\times m$ kernel Gram matrix $K_{XX}$ which is naïvely $\mathcal{O}(m^3)$. This means that Gaussian processes, like kernel methods, are said to have a cost that is cubic in the size of the training set.

\section{Neural networks}

The function spaces corresponding to kernel methods and Gaussian processes are both derived starting from a kernel function. This renders certain global properties of their function spaces amenable to analysis---for instance, one can write down the kernel interpolator that \textit{globally minimises} the RKHS norm. Or one may write down a Gaussian process posterior distribution that includes within its support \textit{all functions} that interpolate a particular training set. Neural networks eschew these properties, in favour of something else.

A neural network function space is constructed by composing many operators, where each operator has a set of weights. Adjusting the weights in each operator adjusts the function realised by the entire network. The choice of operators and how they are connected to each other is referred to as the \textit{network architecture}. 

Given a particular neural network architecture, it has so far been hard to characterise global properties of the space of functions that it realises, unlike kernel methods and Gaussian processes. Though some things can be said in certain limiting regimes, as will be discussed in Chapter \ref{chap:correspondences}.

\begin{figure}
    \centering
    \def\layersep{2cm}
\begin{tikzpicture}[-,draw=black, node distance=\layersep]
    \tikzstyle{neuron}=[circle,draw=black,minimum size=17pt,inner sep=0pt,fill={rgb,255:red,204;green,204;blue,255}]
    \tikzstyle{annot} = [text width=4em, text centered, text height=3ex, text depth=3ex]

    \foreach \name / \y in {1,...,3}
        \node[neuron] (I-\name) at (0,-\y) {};

    \foreach \name / \y in {1,...,4}
        \path[yshift=0.5cm]
            node[neuron] (H-\name) at (\layersep,-\y cm) {};
            
    \foreach \name / \y in {1,...,4}
        \path[yshift=0.5cm]
            node[neuron] (H1-\name) at (\layersep*2,-\y cm) {};

    \node[neuron] (O) at (\layersep*3,-2 cm) {};

    \foreach \source in {1,...,3}
        \foreach \dest in {1,...,4}
            \path (I-\source) edge (H-\dest);
            
    \foreach \source in {1,...,4}
        \foreach \dest in {1,...,4}
            \path (H-\source) edge (H1-\dest);

    \foreach \source in {1,...,4}
        \path (H1-\source) edge (O);

    \node[annot,above of=H-1, node distance=0.75cm] (hl) {$d_1$ units};
    \node[annot,left of=hl] {$d_0$ inputs};
    \node[annot,right of=hl] (h2) {$d_2$ units};
    \node[annot,right of=h2] {$1$ output};
\end{tikzpicture}
    \caption[A multilayer perceptron]{A multilayer perceptron of depth $L=3$.}
    \label{fig:mlp}
\end{figure}
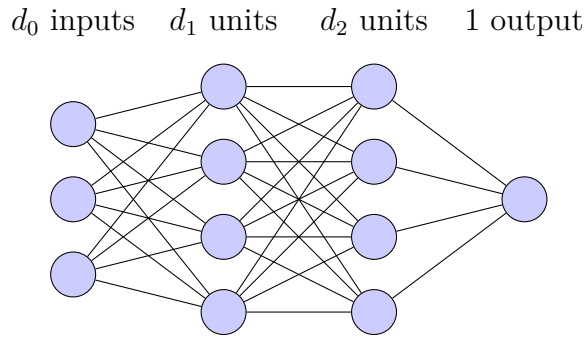

\subsection{Constructing a space of functions by composing parameterised operators}

The simplest example of a neural network architecture is the \textit{multilayer perceptron}---depicted in Figure \ref{fig:mlp}. The multilayer perceptron comprises a composition of matrices interspersed by elementwise nonlinearities, meaning that it encapsulates many of the key features of more general neural networks. As such, it will serve as a \textit{model organism} for detailed study in this thesis.

\begin{definition}[Multilayer perceptron]\label{def:mlp}
A \textit{multilayer perceptron} $f$ of depth $L$ maps an input $x\in\R^{d_0}$ to an output $f(x;w) \in \R^{d_L}$ via the map:
\begin{equation}\label{eq:mlp}
f(x; w) \coloneqq W_L \circ (\phi \circ W_{L - 1})\circ \dots \circ (\phi \circ W_1)\circ x.
\end{equation}
In this expression $\phi$ denotes an elementwise nonlinearity, $W_l$ denotes a matrix of dimension $d_l\times d_{l-1}$, and $w$ denotes the tuple of $L$ matrices $(W_1,...,W_L)$.
\end{definition}

Equation \ref{eq:mlp} provides a simple and direct means of constructing a space of functions. Without a nonlinearity, or with the nonlinearity set to the identity $\phi \gets \Id$, the overall function would be linear. The canonical choice of nonlinearity is known as the \textit{relu nonlinearity}:

\begin{definition}[Relu nonlinearity] The \textit{relu nonlinearity} is given by:
\begin{equation}
    \mathrm{relu}(\cdot) \coloneqq \max(0,\cdot).
\end{equation}
\end{definition}

So the relu nonlinearity retains the positive part of its input. It derives its name from the \textit{rectified linear unit}. The relu nonlinearity is both simple and works well in applications \citep{relu}.

\subsection{Measuring complexity via normalised margin}

Unlike kernel methods and Gaussian processes, the tools for studying the complexity of neural network functions are not yet mature. Developing such tools and their understanding is one of the aims of this thesis. That said, there have been some fairly natural proposals. In the case of a binary classification problem, the \textit{spectrally-normalised margin} is one such example:

\begin{definition}[Spectrally-normalised margin]\label{def:spec-norm-margin} Given a set of training data $S \in \{\R^{d_0}\times\pm 1\}^m$ and a multilayer perceptron $f:\R^{d_0}\times \mathcal{W}\to\R$ with matrices $w=(W_1,...,W_L)\in\mathcal{W}$, the \textit{spectrally-normalised margin} $\rho_\star$ is given by:
\begin{equation}\label{eq:spec-norm-magin}
    \rho_* \coloneqq \min_{(x,y)\in S} \frac{f(x;w)\cdot y}{\norm{x}_2\cdot\prod_{l=1}^L \norm{W_l}_*}.
\end{equation}
\end{definition}

The idea behind this definition is that, assuming that all training points are correctly classified, then the quantity $\min_{(x,y)\in S} f(x;w)\cdot y$ measures how close the closest training point is to being misclassified. But the problem with this measure is that a multilayer perceptron possesses various \textit{rescaling symmetries}---in particular, scaling up the input or a weight matrix at any layer scales up the margin too. Normalising by the product of norms that appear in the denominator of Equation \ref{eq:spec-norm-magin} yields a notion of margin that is invariant to these trivial rescaling symmetries. This definition of spectrally-normalised margin is related to one given by \citet{bartlett}.

Of course, there are other ways to measure margin modulo rescaling symmetries. For instance, the spectral norms appearing in Definition \ref{def:spec-norm-margin} measure the \textit{largest} singular value of each weight matrix in the multilayer perceptron. One may just as well measure the \textit{average} singular value. This motivates the following:

\begin{definition}[Frobenius-normalised margin]\label{def:frob-norm-margin} Given a set of training data $S \in \{\R^{d_0}\times\pm 1\}^m$ and a multilayer perceptron $f:\R^{d_0}\times \mathcal{W}\to\R$ with matrices $w=(W_1,...,W_L)\in\mathcal{W}$, let $\overline{d}_l\coloneqq \min(d_l,d_{l-1})$ be the minimum dimension of matrix $W_l\in\R^{d_l \times d_{l-1}}$. Then the \textit{Frobenius-normalised margin} $\rho_F$ is given by:
\begin{equation}
    \rho_F \coloneqq \min_{(x,y)\in S} \frac{f(x;w)\cdot y}{\norm{x}_2\cdot\prod_{l=1}^L \norm{W_l}_F/\sqrt{\overline{d}_l}}.
\end{equation}
\end{definition}
To understand this definition, note that the squared Frobenius norm $\norm{W_l}_F^2$ of a matrix is equal to the sum of its squared singular values. Also, a matrix $W_l$ has a number $\overline{d_l}$ of singular values in total. Therefore the Frobenius-normalised margin is just the spectrally-normalised margin with the largest singular value $\norm{W_l}_*$ replaced by the root-mean-square singular value $\norm{W_l}_F/\sqrt{\overline{d}_l}$. This definition is related to one given by \citet{my-margin}.

But what do these notions of normalised margin have to do with the complexity of a neural network function? Consider a neural network that perfectly classifies a particular training set $S$, but with very small normalised margin. This network is, in a sense, \textit{close} to a second network that misclassifies $S$. It would only take a small perturbation of the function realised by the former network to make it match the latter. Therefore, it seems reasonable that these two networks should have similar generalisation behaviour despite their differences on the training set. On the other hand, a network that classifies $S$ perfectly and with large normalised margin is not close to a network that misclassifies $S$. Based on this line of thinking, it may seem reasonable to declare networks with small normalised margin \textit{complex} on the grounds that they can mimic networks with different training error.

Of course, this is not a rigorous argument, and Definitions \ref{def:spec-norm-margin} and \ref{def:frob-norm-margin} are suggested only as candidate measures of complexity. Various modifications to these measures may work better. For example, one may consider replacing the minimum over the training set $\min_{(x,y)\in S}$ with the expectation $\Expect_{(x,y)\sim\uniform(S)}$. The role of these complexity measures in generalisation is studied further in Chapter \ref{chap:bpm}.

\subsection{Fitting data by gradient descent}

To use a neural network in a machine learning application, one needs a way of selecting a neural network that fits a particular training set. In the classification setting, a natural goal is to seek a neural network that perfectly classifies the training set. This could be measured by the \textit{zero-one loss}, say.

\begin{definition}[Zero-one loss]\label{def:zero-one-loss} For a neural network $f:\mathcal{X}\times\mathcal{W}\to\R$ and a training set $S\in(\mathcal{X}\times \pm 1)^m$, the \textit{zero-one loss} of weight vector $w\in\mathcal{W}$ is:
\begin{equation}
    \mathcal{L}_{0/1}(w) \coloneqq \frac{1}{m}\sum_{(x,y)\in S} \mathbb{I}[\sign f(x;w) \neq y].
\end{equation}
\end{definition}

Unfortunately, directly minimising the zero-one loss is not feasible, since its gradient with respect to the weights is either zero or undefined. Instead, a continuous proxy is used, such as the \textit{square loss}:

\begin{definition}[Square loss]\label{def:sq-loss} For a neural network $f:\mathcal{X}\times\mathcal{W}\to\R$ and a training set $S\in(\mathcal{X}\times \pm 1)^m$, the \textit{square loss} of weight vector $w\in\mathcal{W}$ is:
\begin{equation}
    \mathcal{L}_2(w) \coloneqq \frac{1}{2m}\sum_{(x,y)\in S} \left(f(x;w) - y\right)^2.
\end{equation}
\end{definition}

Observe that a neural network $f(\cdot,w)$ attaining square loss $\el_2(w) = 0$ also attains zero-one loss $\el_{0/1}(w)=0$. But the square loss can be conveniently minimised via gradient descent. In spirit, such a procedure is given by:
\begin{equation}
    w \gets w - \eta \cdot \nabla_w \el_2(w),
\end{equation}
where the constant $\eta>0$ denotes the user-prescribed learning rate. For computational efficiency, the gradient of the loss with respect to only a sub-sample of data is typically used, rather than with respect to the full dataset. This is referred to as either \textit{stochastic} or \textit{mini-batch} gradient descent. The theory of \textit{full-batch} gradient descent will be carefully studied in Part \ref{part:opt}. 

The square loss also admits a compact description via function projection:

\begin{proposition}[Square loss of projected function]\label{ex:sq-loss-projected} Consider a training set of $m$ examples $S=(X,Y)$. Let $f_X(w)$ denote the projection (Definition \ref{def:project}) of neural network $f(\cdot;w)$ on to $X$. The square loss may be written:
\begin{equation}\label{eq:sq-loss-projected}
    \el_2(w) = \frac{1}{2m}\cdot\norm{f_X(w)-Y}_2^2.
\end{equation}
\end{proposition}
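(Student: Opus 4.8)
The plan is to prove this by directly unpacking the two relevant definitions and observing that they coincide. This is essentially a definitional identity, so I would work from the right-hand side of Equation \ref{eq:sq-loss-projected} back to the definition of the square loss (Definition \ref{def:sq-loss}), rather than attempting anything more elaborate.

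First I would invoke Definition \ref{def:project} to write the projection explicitly as the vector of network outputs, $f_X(w) = [f(x_1;w), \dots, f(x_m;w)]$. Since the label vector is $Y = [y_1, \dots, y_m]$, the residual vector $f_X(w) - Y$ has $i$-th component equal to $f(x_i;w) - y_i$. The key (and only) substantive step is then to recognise that the squared Euclidean norm of this residual vector is, by definition of $\norm{\cdot}_2$, the sum of the squares of its components:
\begin{equation*}
    \norm{f_X(w) - Y}_2^2 = \sum_{i=1}^m \left(f(x_i;w) - y_i\right)^2.
\end{equation*}

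Finally I would observe that summing over $i \in \{1,\dots,m\}$ is exactly summing over the training pairs $(x,y)\in S$, so the expression above equals $\sum_{(x,y)\in S}(f(x;w)-y)^2$. Multiplying by the prefactor $\tfrac{1}{2m}$ then reproduces $\el_2(w)$ as given in Definition \ref{def:sq-loss}, completing the argument. There is no genuine obstacle here: the entire content is the trivial but notationally useful observation that averaging squared scalar residuals across the training set is the same as computing the squared Euclidean norm of the single residual vector $f_X(w)-Y$. I would expect the proof to occupy only two or three lines of display math.
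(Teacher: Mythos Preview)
Your proposal is correct. The paper states this proposition without proof, treating it as an immediate consequence of Definitions \ref{def:project} and \ref{def:sq-loss}; your unpacking of the definitions is exactly the natural way to fill in that gap.
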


Beyond square loss, other proxies for the zero-one loss are in use. For example:

\begin{definition}[Logistic loss]\label{def:log-loss} For a neural network $f:\mathcal{X}\times\mathcal{W}\to\R$ and a training set $S\in(\mathcal{X}\times \pm 1)^m$, the \textit{logistic loss} of weight vector $w\in\mathcal{W}$ is:
\begin{equation}
    \mathcal{L}_{\log}(w) \coloneqq \frac{1}{m}\sum_{(x,y)\in S} \log\left(1+\econst^{-y\cdot f(x;w)}\right).
\end{equation}
\end{definition}

The logistic loss is \textit{margin maximising} \citep{rosset2003margin}. This means that in contrast to the square loss, which is minimised by setting the training outputs to fixed values, the logistic loss is reduced by making the outputs on correctly classified training points larger in magnitude. 

A margin-maximising loss function would tend to increase the notions of normalised margin given in Definitions \ref{def:spec-norm-margin} and \ref{def:frob-norm-margin}---at least provided the norms of the weights in the network do not themselves grow. To prevent weight norms growing, an \textit{L2 penalty} is often added to the loss function:
\begin{definition}[L2 penalty]\label{def:l2-penalty} Given a neural network $f:\mathcal{X}\times\mathcal{W}\to\mathcal{Y}$ with $L$ layers and weight tuple $w=(W_1, ..., W_L)\in\mathcal{W}$, the \textit{L2 penalty} is given by:
\begin{equation}
    \norm{w}_2^2 \coloneqq \sum_{l=1}^L \norm{W_l}_F^2.
\end{equation}
\end{definition}
In words, the L2 penalty penalises the size of the Frobenius norm of the weight matrix at each layer. To train a network, one might then minimise the following \textit{L2-regularised} loss function:
\begin{equation}
    \el_{\mathrm{log}}(w) + \lambda \cdot \|w\|_2^2.
\end{equation}
This is often referred to as adding \textit{weight decay}, since in the gradient descent update the size of the weights are damped by a factor $(1-\eta \lambda)$ at each iteration:
\begin{equation}
    w \gets w \cdot (1 - \eta\lambda) - \eta \cdot \nabla_w \el_\mathrm{log}(w).
\end{equation}

The final comment of this chapter is that it has so far been difficult to theoretically characterise the computational cost of neural network training, although efforts have been made to obtain empirical scaling laws \citep{scaling-laws}. What can be said is that neural network training appears to overcome the unfavourable cubic cost of kernel methods and Gaussian processes.

\printbibliography[heading=subbibliography]
\end{refsection}
\begin{refsection}

\chapter{Correspondences Between Function Spaces}
\label{chap:correspondences}

\begin{tcolorbox}
This chapter introduces various useful correspondences between kernel methods, Gaussian processes and neural networks. The material is expositional and is included for the reader's aid.
\end{tcolorbox}

Kernel methods, Gaussian processes and neural networks were introduced in Chapter \ref{chap:functionspaces} as different philosophical approaches to constructing spaces of functions that may be used to fit data. Kernel methods construct functions by linearly combining kernel basis functions, and are thus amenable to a particular kind of functional analysis. Gaussian processes draw functions from a probability measure, allowing the laws of probability to be applied to both analyse complexity and fit data. Neural networks eschew these analytical considerations, preferring a very flexible means of building functions by composing parameterised operators that may be fit to data via gradient descent.

It may seem surprising, then, that multiple connections exist between these seemingly disparate spaces of functions. These \textit{correspondences} provide a promising route toward better understanding each function space individually.

\section{GP posterior mean is a kernel interpolator}
\label{sec:gp-k-mean}

One of the attractive features of a Gaussian process is that, after observing data, a full posterior distribution over possible explanations of the data may be obtained. But sometimes this full posterior distribution is overkill---for example, in some situations all that is needed is the single \textit{best prediction} for a fresh test input. In these situations, it may be enough to simply return the mean of the posterior distribution (solid blue line in Figure \ref{fig:gp}). 

Perhaps surprisingly, the mean of a Gaussian process posterior distribution is equivalent to the kernel interpolator of minimum RKHS norm, where the kernel is set to to the covariance function of the Gaussian process. Formally, suppose that $f\sim\gp(0,k)$. Conditioned on $f$ interpolating a training set $S=(X,Y)$, by Theorem \ref{thm:gp-cond} the mean prediction on a fresh input $x$ is given by:
\begin{equation}\label{eq:gp-cond-mean}
    \Expect\left[f(x)\mid f_X = Y\right] = K_{xX}K_{XX}^{-1}Y,
\end{equation}
But by Theorem \ref{thm:min-rkhs}, this is equivalent to the minimum RKHS norm kernel interpolator of $(X,Y)$ with kernel $k$.

\section{GP posterior variance bounds the error of kernel interpolation}

A slightly more subtle relationship between Gaussian processes and kernel methods connects the posterior variance of a Gaussian process to the worst case error of kernel interpolation. This correspondence is based on the fact that the Gaussian process posterior variance admits the following geometric interpretation in the corresponding RKHS:
\begin{lemma}[Distance between a point and a subspace]\label{lem:var-geom} Consider an RKHS induced by kernel $k:\mathcal{X}\times\mathcal{X}\to\R$. Given a set of inputs $X=\{x_1,...,x_m\}$ and fresh input $x$, the shortest distance between the kernel basis function centred on $x$ and the span of the kernel basis functions centred on points in $X$ is given by:
\begin{align}\label{eq:k-gp-var}
    \mathrm{dist}^2\big(k(\cdot,x), \mathrm{span}\{k(\cdot,x_i)\}_{i=1}^m\big)&\coloneqq \min_{\alpha\in\R^m}\left\|k(\cdot,x)-\sum_{i=1}^m\alpha_i\, k(\cdot,x_i)\right\|_{\mathrm{RKHS}}^2 \nonumber\\ &= K_{xx} - K_{xX}K_{XX}^{-1}K_{Xx}.
\end{align}
\end{lemma}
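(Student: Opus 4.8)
The plan is to reduce the infinite-dimensional minimisation over functions in the RKHS to a finite-dimensional quadratic minimisation over the coefficient vector $\alpha\in\R^m$, which can then be solved in closed form. The key tools are the reproducing property (Equation \ref{eq:reproducing}) together with the definition of the RKHS norm (Definition \ref{def:rkhs-norm}), both of which convert RKHS inner products between kernel basis functions into entries of Gram matrices.

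First I would expand the squared RKHS norm of the residual. Writing $g_\alpha \coloneqq \sum_{i=1}^m \alpha_i\,k(\cdot,x_i)$ and expanding $\norm{k(\cdot,x)-g_\alpha}_\mathrm{RKHS}^2$ bilinearly produces three terms. Using $\langle k(\cdot,x),k(\cdot,x)\rangle_\mathrm{RKHS}=k(x,x)=K_{xx}$, the reproducing property to obtain $\langle k(\cdot,x),g_\alpha\rangle_\mathrm{RKHS}=\sum_i\alpha_i k(x,x_i)=K_{xX}\alpha$, and Definition \ref{def:rkhs-norm} to obtain $\langle g_\alpha,g_\alpha\rangle_\mathrm{RKHS}=\alpha^\top K_{XX}\alpha$, the objective becomes the scalar quadratic
\[
    J(\alpha) = K_{xx} - 2\,K_{xX}\alpha + \alpha^\top K_{XX}\alpha.
\]
Since $K_{XX}$ is positive definite (Definition \ref{def:kernel}), $J$ is strictly convex and is minimised at the stationary point $\alpha_\star = K_{XX}^{-1}K_{Xx}$, obtained by setting $\nabla_\alpha J = -2\,K_{Xx} + 2\,K_{XX}\alpha$ to zero. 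Substituting $\alpha_\star$ back into $J$ and using the kernel symmetry $K_{xX}=K_{Xx}^\top$ to collapse the cross term against the quadratic term leaves exactly $K_{xx} - K_{xX}K_{XX}^{-1}K_{Xx}$, as claimed.

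The calculation is essentially routine, so I do not anticipate a serious obstacle; the one place demanding care is the matrix/vector bookkeeping---tracking which Gram objects are row vectors, column vectors, or scalars, and invoking the symmetries $K_{xX}=K_{Xx}^\top$ and $K_{XX}=K_{XX}^\top$ at the right moments so that the final expression genuinely collapses to a single quadratic form. Equivalently, and perhaps more cleanly, one could bypass calculus by completing the square, writing
\[
    J(\alpha)=(\alpha-\alpha_\star)^\top K_{XX}(\alpha-\alpha_\star) + \big[K_{xx}-K_{xX}K_{XX}^{-1}K_{Xx}\big],
\]
which exhibits the minimiser $\alpha_\star$ and the minimum value simultaneously, while making transparent why positive definiteness of $K_{XX}$ is exactly what guarantees a unique minimiser and a nonnegative distance. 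Geometrically, this is just the statement that $g_{\alpha_\star}$ is the orthogonal projection of $k(\cdot,x)$ onto $\mathrm{span}\{k(\cdot,x_i)\}_{i=1}^m$, so the residual is orthogonal to the subspace and the squared distance is the Pythagorean remainder.
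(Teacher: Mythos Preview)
Your proposal is correct and follows essentially the same approach as the paper: expand the squared RKHS norm into the quadratic $K_{xx} - 2\,\alpha^\top K_{Xx} + \alpha^\top K_{XX}\alpha$, set the derivative to zero to obtain $\alpha_\star = K_{XX}^{-1}K_{Xx}$, and substitute back. Your version is more detailed---you spell out how each term arises from the reproducing property, note the role of positive definiteness in guaranteeing a unique minimum, and offer the completing-the-square alternative---but the underlying argument is identical to the paper's.
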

In words: the Gaussian process posterior variance (right-hand side of Equation \ref{eq:k-gp-var}) measures the shortest RKHS-distance from the kernel basis function centred at $x$ and the span of the kernel basis functions centred on the training points.
\begin{proof}[Proof of Lemma \ref{lem:var-geom}] Evaluating the RKHS norm for arbitrary $\alpha \in \R^m$ yields:
    \begin{equation*}
        \left\|k(\cdot,x)-\sum_{i=1}^m\alpha_i\, k(\cdot,x_i)\right\|_{\mathrlap{\mathrm{RKHS}}}^{\mathrlap{2}} = k_{xx} - 2\cdot\alpha^\top K_{Xx} + \alpha^\top K_{XX}\alpha.
    \end{equation*}
    Setting the derivative with respect to $\alpha$ to zero yields that $K_{Xx}=K_{XX}\alpha$ and hence the RKHS norm is minimised by setting $\alpha = K_{XX}^{-1}K_{Xx}$. Substituting this result back into the expression for the RKHS norm yields the result.
\end{proof}

With Lemma \ref{lem:var-geom} in hand, the next theorem follows readily. The result is related to Corollary 3.11 of \citet{Kanagawa2018GaussianPA}.

\begin{theorem}[Error of kernel interpolation] Given a function $g$ in an RKHS induced by kernel $k$ and any set of distinct inputs $X$, the deviation between $g$ and the minimum RKHS norm interpolator $f_\star$ of $(X,g(X))$ satisfies:
\begin{equation}
    \abs{g(x) - f_{\star}(x)} \leq \sqrt{\norm{g}_\mathrm{RKHS}^2-\norm{f_\star}_\mathrm{RKHS}^2} \cdot \sqrt{K_{xx} - K_{xX}K_{XX}^{-1}K_{Xx}}.
\end{equation}
\end{theorem}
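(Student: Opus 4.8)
The plan is to express the pointwise error as an RKHS inner product, then exploit the orthogonality structure already exposed in the proof of Theorem~\ref{thm:min-rkhs} to sharpen a Cauchy--Schwarz bound. First I would invoke the reproducing property (Equation~\ref{eq:reproducing}) to write the deviation as a single inner product,
\begin{equation*}
    g(x) - f_\star(x) = \langle g - f_\star, \ k(\cdot,x)\rangle_\mathrm{RKHS}.
\end{equation*}
The key structural fact, borrowed directly from the proof of Theorem~\ref{thm:min-rkhs}, is that the residual $g - f_\star$ is orthogonal to every training kernel basis function: since both $g$ and $f_\star$ interpolate $(X,g(X))$, we have $\langle g - f_\star, k(\cdot,x_i)\rangle_\mathrm{RKHS} = g(x_i) - f_\star(x_i) = 0$ for each $i$, so $g - f_\star$ is orthogonal to the entire span $\mathrm{span}\{k(\cdot,x_i)\}_{i=1}^m$.

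This orthogonality lets me subtract off the projection of $k(\cdot,x)$ onto that span for free. Writing $P$ for the orthogonal projection onto the span, the component $P\,k(\cdot,x)$ contributes nothing to the inner product, so
\begin{equation*}
    g(x) - f_\star(x) = \langle g - f_\star, \ k(\cdot,x) - P\,k(\cdot,x)\rangle_\mathrm{RKHS}.
\end{equation*}
Applying Cauchy--Schwarz now gives $\abs{g(x)-f_\star(x)} \le \norm{g-f_\star}_\mathrm{RKHS}\cdot\norm{k(\cdot,x)-P\,k(\cdot,x)}_\mathrm{RKHS}$. The whole point of routing through $P$ is that the second factor is the \emph{distance} from $k(\cdot,x)$ to the span rather than the full norm $\norm{k(\cdot,x)}_\mathrm{RKHS}=\sqrt{K_{xx}}$; by Lemma~\ref{lem:var-geom} this distance is exactly $\sqrt{K_{xx} - K_{xX}K_{XX}^{-1}K_{Xx}}$.

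It remains to identify the first factor. Since $g = f_\star + (g - f_\star)$ with $f_\star$ lying in the span (by the explicit form in Theorem~\ref{thm:min-rkhs}) and $g - f_\star$ orthogonal to it, the Pythagorean identity gives $\norm{g}_\mathrm{RKHS}^2 = \norm{f_\star}_\mathrm{RKHS}^2 + \norm{g-f_\star}_\mathrm{RKHS}^2$---the same orthogonal decomposition recorded in Equation~\ref{eq:orthog-functions}---so that $\norm{g-f_\star}_\mathrm{RKHS} = \sqrt{\norm{g}_\mathrm{RKHS}^2 - \norm{f_\star}_\mathrm{RKHS}^2}$. Substituting both factors yields the claim. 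I expect no serious obstacle: the only non-obvious move is inserting the projection $P$ before applying Cauchy--Schwarz, without which one would obtain the looser bound with $\sqrt{K_{xx}}$ in place of the posterior-standard-deviation factor. Everything else is an immediate assembly of the reproducing property, Lemma~\ref{lem:var-geom}, and the Pythagorean identity already established for $f_\star$.
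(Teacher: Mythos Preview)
Your proposal is correct and follows essentially the same approach as the paper: both use the reproducing property, exploit the orthogonality of $g-f_\star$ to the training kernel basis functions in order to subtract off an element of the span before applying Cauchy--Schwarz, and then identify the two factors via Lemma~\ref{lem:var-geom} and Equation~\ref{eq:orthog-functions}. The only cosmetic difference is that the paper subtracts an arbitrary $\sum_i\alpha_i k(\cdot,x_i)$ and then minimises over $\alpha$, whereas you go straight to the orthogonal projection $P\,k(\cdot,x)$---but these are the same object.
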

\begin{proof} By the reproducing property, the fact that $g$ and $f_\star$ agree on the training points, and the Cauchy-Schwarz inequality:
    \begin{align*}
        \abs{g(x) - f_{\star}(x)} &= \left|\langle g - f_\star, k(\cdot,x)\rangle_\mathrm{RKHS}\right| \\
        &= \left|\langle g - f_\star, k(\cdot,x) - \sum_{i=1}^m \alpha_i\,k(\cdot,x_i)\rangle_\mathrm{RKHS}\right| \text{ for all } \alpha\in\R^m\\
        &\leq \left\|g-f_\star\right\|_\mathrm{RKHS} \cdot \min_{\alpha\in\R^m} \left\|k(\cdot,x) - \sum_{i=1}^m \alpha_i\,k(\cdot,x_i)\right\|_{\mathrlap{\mathrm{RKHS}}}.
    \end{align*}
    The result follows by observing that $\left\|g\right\|_\mathrm{RKHS}^2 = \left\|f_\star\right\|_\mathrm{RKHS}^2 +  \left\|g-f_\star\right\|_\mathrm{RKHS}^2$ by Equation \ref{eq:orthog-functions}, and an application of Lemma \ref{lem:var-geom}.
\end{proof}

In words: if one is approximating an unknown function $g$ by the the minimum RKHS norm interpolator $f_\star$ of a set of samples from $g$, then the error of this approximation is bounded by the posterior standard deviation of the corresponding Gaussian process fit to those samples, scaled by a constant depending on the difference in RKHS norm between $g$ and approximation $f_\star$.

\section{Making the GP posterior concentrate on a kernel interpolator}

A particular kind of aggregate Gaussian process prediction---the posterior mean---is directly available via Equation \ref{eq:gp-cond-mean}. As mentioned in Section \ref{sec:gp-k-mean}, the posterior mean may suffice for practical applications and often one may not bother computing the posterior variance.

But for other function spaces, it may be the case that the posterior distribution may only be accessed through samples---and these samples may be expensive to obtain. In fact, Chapter \ref{chap:bpm} will argue that a neural network trained to fit data by gradient descent is analogous to a \textit{single draw} from a Gaussian process posterior distribution. In this situation, computing the posterior mean would involve averaging over many samples, which might be prohibitively expensive.

The following theorem observes that, for Gaussian processes, this issue may be circumvented---by forcing the entire posterior distribution to concentrate on its mean. In this case, \textit{all} posterior samples may be trusted to faithfully report the mean. This mean is itself a kernel interpolator by the results of Section \ref{sec:gp-k-mean}.

\begin{theorem}[Posterior concentration]\label{thm:force-gp} Given a kernel $k$ and a training sample $(X,Y)$, define two constants: a ``margin'' $\gamma>0$ and a ``normalisation'' $\tau^2>0$. Sample a function $f\sim\gp(0,\tau^2\cdot k)$ conditioned on $f$ interpolating $(X,\gamma\cdot Y)$:
\begin{equation}
    f\sim\gp(0, \tau^2 \cdot k \mid f_X=\gamma\cdot Y ).
\end{equation}
In the limit that the ``normalised margin'' $\gamma / \tau \to \infty$, then the rescaled function $f/\gamma$ converges to the posterior mean:
\begin{equation}
    f(x)/\gamma =K_{xX}K_{XX}^{-1}Y, \; \text{with probability one}.
\end{equation}
\end{theorem}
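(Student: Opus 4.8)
The plan is to invoke Theorem~\ref{thm:gp-cond}, adapted to the rescaled prior $\gp(0,\tau^2\cdot k)$ and the rescaled labels $\gamma\cdot Y$, and then to track how the conditional mean and variance of the normalised draw $f(x)/\gamma$ behave as the normalised margin $\gamma/\tau$ is sent to infinity.

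First I would write down the conditional law of $f$ at a fresh input $x$. Because the prior covariance function is $\tau^2\cdot k$, the joint covariance of $(f(x),f_X)$ carries a uniform factor of $\tau^2$ in every block, so Theorem~\ref{thm:gp-cond} gives
\[
    f(x)\mid f_X = \gamma\cdot Y \;\sim\; \normal\big(\gamma\cdot K_{xX}K_{XX}^{-1}Y,\ \tau^2\cdot(K_{xx}-K_{xX}K_{XX}^{-1}K_{Xx})\big).
\]
The crucial observation is that the two factors of $\tau^2$ in the mean cancel: one factor sits in the cross-covariance $\tau^2 K_{xX}$ and one in the inverse $(\tau^2 K_{XX})^{-1}$. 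Hence the conditional mean is exactly $\gamma$ times the kernel interpolator of Theorem~\ref{thm:min-rkhs}, while the conditional variance retains a single surviving factor of $\tau^2$.

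Next I would rescale by $\gamma$. By the elementary scaling law of Gaussians, $f(x)/\gamma$ is Gaussian with mean $K_{xX}K_{XX}^{-1}Y$ --- which is independent of both $\gamma$ and $\tau$ --- and variance $(\tau/\gamma)^2\cdot(K_{xx}-K_{xX}K_{XX}^{-1}K_{Xx})$. The surviving posterior-variance factor is a fixed, finite, non-negative quantity: indeed Lemma~\ref{lem:var-geom} identifies it as a squared RKHS distance, so it does not depend on $\gamma$ or $\tau$. Therefore, as the normalised margin $\gamma/\tau\to\infty$, the variance of $f(x)/\gamma$ tends to zero while its mean stays pinned at the posterior mean $K_{xX}K_{XX}^{-1}Y$.

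Finally I would conclude by noting that a Gaussian law whose variance vanishes collapses to a Dirac point mass at its mean, so in the limit $\gamma/\tau\to\infty$ the random variable $f(x)/\gamma$ equals $K_{xX}K_{XX}^{-1}Y$ with probability one. I expect the main subtlety to be pinning down the precise meaning of this almost-sure statement: the content is that the \emph{limiting} conditional distribution is degenerate, so that every posterior draw reports the same value (the posterior mean), rather than any claim holding at fixed finite $\gamma$ and $\tau$. To obtain the conclusion simultaneously over all fresh inputs --- that is, as an identity between the function $f/\gamma$ and the kernel interpolator $x\mapsto K_{xX}K_{XX}^{-1}Y$ --- I would repeat the argument using the multivariate conditional of Theorem~\ref{thm:gp-cond} on an arbitrary finite set of fresh inputs $X'$, whose conditional covariance $\tau^2\cdot(K_{X'X'}-K_{X'X}K_{XX}^{-1}K_{XX'})$ again scales as $(\tau/\gamma)^2$ after dividing by $\gamma$ and hence vanishes in the limit.
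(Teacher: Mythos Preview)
Your proposal is correct and follows exactly the same approach as the paper: apply Theorem~\ref{thm:gp-cond} with the scaled covariance $\tau^2 k$ and labels $\gamma Y$, divide the resulting Gaussian by $\gamma$, and observe that the variance $(\tau/\gamma)^2\cdot(K_{xx}-K_{xX}K_{XX}^{-1}K_{Xx})$ vanishes as $\gamma/\tau\to\infty$. Your write-up is in fact more careful than the paper's, which does not spell out the $\tau^2$ cancellation in the mean or address the multivariate extension.
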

\begin{proof} By Theorem \ref{thm:gp-cond}, the sampled function $f$ evaluated at $x$ follows:
\begin{equation*}
    f(x) \sim \normal\left(K_{xX}K_{XX}^{-1} (\gamma \cdot Y),\tau^2\cdot (K_{xx}-K_{xX}K_{XX}^{-1}K_{Xx})\right).
\end{equation*}
Dividing through by $\gamma$ then yields:
\begin{equation*}
    f(x)/\gamma \sim \normal\left(K_{xX}K_{XX}^{-1} Y,\frac{\tau^2}{\gamma^2}\cdot (K_{xx}-K_{xX}K_{XX}^{-1}K_{Xx})\right).
\end{equation*}
Taking the normalised margin $\gamma/\tau \to \infty$ causes the variance to vanish.
\end{proof}

The language used in this theorem statement surrounding ``normalised margin'' is intended to evoke the concepts of normalised margin given in Definitions \ref{def:spec-norm-margin} and \ref{def:frob-norm-margin}. This foreshadows a contribution of Chapter \ref{chap:bpm}, where a formal connection is made between Theorem \ref{thm:force-gp} and a notion of normalised margin in neural networks. This formal connection leverages a correspondence between neural networks and Gaussian processes, which is presented in the next section.

\section{Neural network--Gaussian process correspondence}
\label{sec:nngp}

Informally, the \textit{neural network--Gaussian process} (NNGP) correspondence states that the space of functions realised by a sufficiently wide neural network is equivalent to a Gaussian process. Since a key feature of a Gaussian process is that it assigns probabilities to functions, this informal statement is not yet fully meaningful. What is needed is a means of assigning probabilities to the functions realised by a neural network. The key idea is to consider randomly sampling the network weights to obtain a distribution over neural network functions. This leads to the more accurate---but still informal---statement:
\begin{quote}
    The distribution over functions realised by a wide enough neural network with randomly drawn weights is a Gaussian process.
\end{quote}

While this correspondence may seem surprising, the mechanism by which it works ends up being fairly simple. Consider the $l$th layer of a neural network:
\begin{equation}\label{eq:nn-layer}
    f_l(x) = W_l\cdot \phi(f_{l-1}(x)),
\end{equation}
where $x\in\R^{d_0}$ is the network input, $f_l(x)\in\R^{d_l}$ is the layer output, $W_l\in\R^{d_l\times d_{l-1}}$ is the weight matrix, $\phi$ is the nonlinearity and $f_{l-1}(x)\in\R^{d_{l-1}}$ is the layer input. Suppose that the different components of $f_{l-1}(x)$ are iid random variables and that the entries of weight matrix $W_l$ are also drawn iid. Then the components of the output $f_l(x)$, being the sum over a large number of iid contributions, are themselves iid Gaussian by the central limit theorem.

This idea is formalised in the following lemma, essentially due to \citet{radford}.

\begin{lemma}[NNGP correspondence]\label{lem:nngp}

For the neural network layer given by Equation \ref{eq:nn-layer}, if the following conditions hold:
\begin{enumerate}[label=(\roman*)]
    \item Inputs: For every network input $x \in \R^{d_0}$, the components of the layer input $f_{l-1}(x)\in\R^{d_{l-1}}$ are iid with finite first and second moment.
    \item Weights: Entries of $W_l$ are iid with zero mean and variance $\sigma^2/d_{l-1}<\infty$.
    \item Nonlinearity: For any random variable $z$ with finite first and second moment, $\phi(z)$ also has finite first and second moment.
\end{enumerate}
Then, in the limit of width $d_{l-1}\rightarrow\infty$, the distribution of layer outputs satisfies:
\begin{enumerate}
    \item IID outputs. For every input $x \in \R^{d_0}$, the components of the layer output $f_l(x)$ for that input are iid with finite first and second moment.
    \item Gaussian outputs. For any collection of $m$ network inputs $x_1, ..., x_m$, the output components $f_l^i(x_1),...,f_l^i(x_m)$ are jointly Gaussian for $i=1,...,d_l$.
\end{enumerate}
\end{lemma}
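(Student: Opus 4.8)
\section*{Proof proposal}

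The plan is to fix an output coordinate $i$ and a network input $x$ and to expand the layer output coordinatewise as $f_l^i(x) = \sum_{j=1}^{d_{l-1}} W_l^{ij}\,\phi(f_{l-1}^j(x))$, so that each output component is a sum over the $d_{l-1}$ input channels. The key observation is that, for fixed $i$, the summands $W_l^{ij}\,\phi(f_{l-1}^j(x))$ are independent and identically distributed across $j$: the weights $W_l^{ij}$ are iid by condition (ii), the activations $\phi(f_{l-1}^j(x))$ are iid across $j$ by condition (i), and weights are independent of activations. Once this is in place the entire lemma becomes an application of the central limit theorem, and I would organise the argument so that both conclusions follow from a single multivariate CLT via the Cram\'er--Wold device.

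First I would establish the finite-moment claims and the marginal law. Each summand has mean zero (the weight mean is zero and is independent of the activation) and variance $\tfrac{\sigma^2}{d_{l-1}}\cdot\Expect[\phi(f_{l-1}^j(x))^2]$, which is finite by conditions (ii) and (iii). Summing the $d_{l-1}$ iid contributions gives $\Expect[f_l^i(x)] = 0$ and $\Var[f_l^i(x)] = \sigma^2\cdot\Expect[\phi(f_{l-1}^j(x))^2] \eqqcolon \sigma^2 q$, a finite constant independent of $d_{l-1}$; this proves the finite first and second moment assertions. The classical Lindeberg--L\'evy CLT (iid summands, finite variance) then gives $f_l^i(x) \to \normal(0,\sigma^2 q)$ as $d_{l-1}\to\infty$, which is the marginal Gaussianity.

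Next I would upgrade marginal to joint Gaussianity over the finite index set of output coordinates $i=1,\dots,d_l$ and inputs $x_1,\dots,x_m$ using Cram\'er--Wold. For arbitrary coefficients $c_{ik}$, the linear combination $\sum_{i,k} c_{ik}\,f_l^i(x_k)$ rearranges into a single sum over channels, $\sum_{j=1}^{d_{l-1}} \xi_j$ with $\xi_j \coloneqq \sum_{i,k} c_{ik}\,W_l^{ij}\,\phi(f_{l-1}^j(x_k))$; these $\xi_j$ are again iid across $j$ with mean zero and finite variance, so the iid CLT makes every such linear combination asymptotically Gaussian. This yields conclusion 2, and restricting to a single input gives joint Gaussianity of $f_l^1(x),\dots,f_l^{d_l}(x)$. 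To finish conclusion 1 I would compute the cross-covariance of two distinct components at the same input and find that it vanishes, since for $i\neq i'$ the entries $W_l^{ij}$ and $W_l^{i'j'}$ come from independent rows and have mean zero. Joint Gaussianity together with zero covariance gives independence, and identical marginals (the rows of $W_l$ are identically distributed) gives the ``identically distributed'' half, completing the iid claim.

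The main obstacle, and the place I would be most careful, is the hypothesis that the summands are genuinely iid \emph{across channels}. For the marginal claims the per-input statement of condition (i) suffices, but the Cram\'er--Wold step for conclusion 2 requires the channel feature vectors $(f_{l-1}^j(x_1),\dots,f_{l-1}^j(x_m))$ to be iid across $j$ as vectors---a statement slightly stronger than condition (i) as literally written. This joint independence across channels is exactly what conclusion 2 supplies for the previous layer, so it is delivered by applying the lemma inductively through the depth of the network; I would state this inductive reading explicitly. The remaining technical points are benign: because the summands are genuinely iid with finite variance, only the classical CLT is needed and no triangular-array Lindeberg condition must be verified, while finiteness of every moment that appears is guaranteed throughout by condition (iii).
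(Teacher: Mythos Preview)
Your proposal is correct and follows essentially the same approach as the paper: decompose each output as a sum over the $d_{l-1}$ input channels, observe the summands are iid with zero mean and finite variance, and apply the central limit theorem. The only cosmetic difference is that the paper invokes the multivariate CLT directly on the channel-indexed vectors (once with components indexed by output coordinate $i$ for conclusion~1, once with components indexed by input $x_k$ for conclusion~2), whereas you reduce to the univariate CLT via Cram\'er--Wold and then read off independence from the diagonal covariance; your explicit remark that conclusion~2 tacitly needs the channel feature vectors $(f_{l-1}^j(x_1),\dots,f_{l-1}^j(x_m))$ to be iid across $j$ as \emph{vectors}---supplied inductively by the lemma itself---is a point the paper leaves implicit.
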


Condition (ii) is satisfied simply by sampling the network weights iid Gaussian, say. Similarly, condition (iii) is easy to check for a given nonlinearity. Condition (i) on the layer inputs seems the most non-trivial. But notice that the lemma propagates this condition to the next layer via entailment 1). This means that provided condition (i) holds at the first layer, then recursive application of the lemma implies that condition (i) will hold at all layers.

Entailment 2) is equivalent to saying that the $i$th component of the layer output forms a Gaussian process (Definition \ref{def:gp}). The kernel of this Gaussian process depends on the specific architecture of the network preceding the $l$th layer. For example, the kernel for multilayer perceptrons with relu nonlinearity is given by Theorem \ref{thm:relu}. But first, a proof of the lemma is given.

\begin{proof}[Proof of Lemma \ref{lem:nngp}] 
To prove entailment 1), write output $f_l(x)\in\R^{d_l}$ as:
\begin{gather}\label{eq:z1}
    f_l(x) = \sum_{j=1}^{d_{l-1}} \left[W_l^{1j}, ..., W_l^{d_{l}j} \right]\cdot\phi \left(f_{l-1}^j(x)\right)\eqqcolon\sum_{j=1}^{d_{l-1}}v_j.
\end{gather}
By conditions (i) and (ii), the summands $v_j$ in Equation \ref{eq:z1} are iid random vectors. To apply a central limit theorem, these random vectors must have finite mean and variance. Since the weights and inputs are independent, the mean is finite and is given by:
\begin{equation*}
    \Expect[v_j] = \Expect [W_l^{1j},...,\Expect W_l^{d_l j}] \cdot \Expect [\phi(f_{l-1}^j(x))] = 0 \cdot \Expect [\phi(f_{l-1}^j(x))] = 0.
\end{equation*}
This result relies on the quantity $\Expect [\phi(f_{l-1}^j(x))]$ being finite by conditions (i) and (iii). Similarly, the variance is given by:
\begin{equation*}
    \Expect[v_j^iv_j^k] = \Expect [W_l^{ij}W_l^{kj}] \cdot \Expect [\phi(f_{l-1}^j(x))^2] = \delta_{ik}\cdot\sigma^2/d_{l-1} \cdot \Expect [\phi(f_{l-1}^j(x))^2],
\end{equation*}
where $\delta_{ik}$ is the Kronecker delta. Since $\sigma^2/d_{l-1}<\infty$ and, by conditions (i) and (iii), the quantity $\Expect [\phi(f_{l-1}^j(x))^2]$ appearing on the right-hand side is finite too, then the variance of random vector $v_j$ is finite. In turn, by the multivariate central limit theorem \citep{vaart_1998}, in the limit that $d_{l-1}\to\infty$, the layer output $f_l(x)\sim\normal(0, \sigma^2 \cdot \Expect [\phi(f_{l-1}^1(x))^2]\cdot\Id)$. In particular, this implies that the components of $f_l(x)$ are iid with finite first and second moment.

Entailment 2) is established by a similar argument that considers the $i$th component of the layer output $f_l$ projected on to $m$ samples $X=\{x_1,...,x_m\}$:
\begin{gather}\label{eq:z2}
    (f_l^{i})_X = \sum_{j=1}^{d_{l-1}} W_l^{ij}\cdot\left[\phi \left(f_{l-1}^j(x_1)\right), ..., \phi \left(f_{l-1}^j(x_m)\right) \right].
\end{gather}
Again, by combining conditions (i), (ii) and (iii), the summands in Equation \ref{eq:z2} are iid random vectors with finite mean and finite covariance. Then as $d_{l-1}\rightarrow\infty$, the distribution of $(f_l^{i})_X$ is multivariate Gaussian---again by the multivariate central limit theorem. This completes the proof.
\end{proof}

Working out the details of the neural network--Gaussian process correspondence for a specific network architecture involves computing the Gaussian process kernel that is induced by the given network topology and choice of nonlinearity. The following theorem demonstrates this process for the \textit{model organism} of this thesis: the multilayer perceptron with relu nonlinearity. The essence of the following theorem appears in a paper by \citet{lee2018deep}, building on the work of \citet{choandsaul}.

\begin{theorem}[NNGP for relu networks]\label{thm:relu} Consider a multilayer perceptron $f$ (Definition \ref{def:mlp}) with $L$ layers, output dimension $d_L=1$ and nonlinearity:
\begin{equation}\label{eq:scaled-relu}
    \phi(\cdot) = \sqrt{2}\cdot\max(0,\cdot).
\end{equation}
For each layer $l=1,...,L$, sample weight entries $W_l^{ij} \overset{\mathrm{iid}}{\sim} \normal(0,1/d_{l-1})$. Consider any collection $X$ of $m$ inputs constrained to the hypersphere of radius $\sqrt{d_0}$: $x_1, ..., x_m\in\sqrt{d_0}\cdot \Sph^{d_0-1}$. Then, as hidden layer widths $d_1,...,d_{L-1}\to \infty$, the network outputs $f(x_1),...,f(x_m)$ are jointly Gaussian with:
\begin{align}
\Expect \left[ f(x_i) \right] &= 0;\\
\Expect \smash{\left[ f(x_i)^2 \right]} &= 1;\\ 
\Expect \left[ f(x_i) f(x_j)\right] &= \underbrace{h \circ ... \circ h}_{L-1\text{  times}}\smash{\left(\frac{x_i^\top x_j}{d_0}\right)}; \label{eq:comp-arccos}
\end{align}
for all $x_i, x_j \in X$, and where $h(t)\coloneqq  \tfrac{1}{\pi}\left[\sqrt{1-t^2} + t\cdot(\pi- \arccos t)\right]$.
\end{theorem}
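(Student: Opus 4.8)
The plan is to apply the NNGP correspondence (Lemma \ref{lem:nngp}) recursively to establish joint Gaussianity of the outputs, and then to track the induced covariance kernel layer by layer, at which point everything reduces to a single bivariate Gaussian integral --- the degree-one arc-cosine kernel of \citet{choandsaul}. First I would verify the hypotheses of Lemma \ref{lem:nngp} at every layer. The base layer is special and requires no central limit theorem: the components of the first-layer pre-activation $W_1 x$ are \emph{exactly} iid $\normal(0,\norm{x}_2^2/d_0)$, since $W_1$ is Gaussian and distinct output components draw on disjoint weights. Because the inputs lie on $\sqrt{d_0}\cdot\Sph^{d_0-1}$, this variance is exactly $1$, which supplies condition (i) at the second layer. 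Condition (ii) holds with $\sigma^2=1$ by the stated weight sampling, and condition (iii) is immediate for $\phi=\sqrt{2}\cdot\max(0,\cdot)$, which preserves finite first and second moments. Sending $d_1\to\infty$, then $d_2\to\infty$, and so on, entailment 1) propagates the iid hypothesis forward while entailment 2) certifies that $f(x_1),\dots,f(x_m)$ is jointly Gaussian. (The widths are taken to infinity one layer at a time, as the lemma demands, rather than simultaneously.) Mean-zero is immediate, since every weight has zero mean.

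Next I would set up the covariance recursion. Writing $\Sigma_l(x,x')\coloneqq\Expect[f_l^i(x)\,f_l^i(x')]$ for the component-independent pre-activation covariance, I would expand $f_l^i(x)=\sum_j W_l^{ij}\,\phi(f_{l-1}^j(x))$ and use $\Expect[W_l^{ij}W_l^{ij'}]=\delta_{jj'}/d_{l-1}$ together with the iid structure across $j$ to collapse the double sum, obtaining $\Sigma_l(x,x')=\Expect[\phi(f_{l-1}^1(x))\,\phi(f_{l-1}^1(x'))]$, where $(f_{l-1}^1(x),f_{l-1}^1(x'))$ is bivariate Gaussian with variances $\Sigma_{l-1}(x,x),\Sigma_{l-1}(x',x')$ and covariance $\Sigma_{l-1}(x,x')$. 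The base case is $\Sigma_1(x,x')=x^\top x'/d_0$. A one-line check shows the diagonal is a fixed point: if $\Sigma_{l-1}(x,x)=1$ then $\Sigma_l(x,x)=2\,\Expect_{u\sim\normal(0,1)}[\max(0,u)^2]=1$, so every pre-activation and the output has unit variance, giving $\Expect[f(x_i)^2]=1$ and making each off-diagonal $\Sigma_l(x,x')$ a genuine correlation $\rho_l$.

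With unit variances the recursion becomes $\rho_l=2\,\Expect[\max(0,u)\max(0,v)]$ for $(u,v)$ unit-variance bivariate Gaussian of correlation $\rho_{l-1}$, and the crux of the whole proof is evaluating this integral. I would write $v=\rho_{l-1}\,u+\sqrt{1-\rho_{l-1}^2}\,z$ with $u,z$ iid standard normal (or pass to polar coordinates with $\rho_{l-1}=\cos\theta$), reducing it to a one-dimensional trigonometric integral with value $\tfrac{1}{2\pi}[\sqrt{1-\rho_{l-1}^2}+\rho_{l-1}(\pi-\arccos\rho_{l-1})]$; the factor $2$ from the $\sqrt{2}$ rescaling then yields precisely $\rho_l=h(\rho_{l-1})$. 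This bivariate integral is the main obstacle --- the single genuinely non-trivial computation --- and the specific choices of the $\sqrt{2}$ scaling and the sphere radius $\sqrt{d_0}$ are exactly what make the normalisation land cleanly. Unrolling $\rho_l=h(\rho_{l-1})$ from the base case $\rho_1=x^\top x'/d_0$ then gives the output correlation after $L-1$ applications of $h$, that is $\Expect[f(x_i)f(x_j)]=\underbrace{h\circ\dots\circ h}_{L-1}(x_i^\top x_j/d_0)$, as claimed in Equation \ref{eq:comp-arccos}.
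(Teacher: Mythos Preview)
Your proposal is correct and follows essentially the same route as the paper: recursively invoke Lemma~\ref{lem:nngp} to get joint Gaussianity, derive the one-step covariance recursion $\rho_l = \Expect[\phi(u)\phi(v)]$ with $(u,v)$ bivariate Gaussian of correlation $\rho_{l-1}$, verify the diagonal is fixed at $1$ so correlations stay well-defined, evaluate the bivariate integral (the paper defers to \citet{choandsaul}'s $J_1(\theta)$ via the same $\rho=\cos\theta$ substitution you mention), and unroll from $\rho_1 = x^\top x'/d_0$. Your remarks that the first layer is \emph{exactly} Gaussian and that the widths go to infinity sequentially are clarifications the paper leaves implicit, but the structure of the argument is the same.
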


The kernel appearing in Equation \ref{eq:comp-arccos} is the \textit{compositional arccosine kernel} \citep{choandsaul}. This kernel will be used in Chapter \ref{chap:bpm} to study generalisation in the multilayer perceptron.

\begin{proof} First, Lemma \ref{lem:nngp} will be applied recursively over the layers of the network. Condition (ii) of Lemma \ref{lem:nngp} holds at all layers with $\sigma^2 = 1$, and condition (iii) holds trivially for the scaled relu nonlinearity of Equation \ref{eq:scaled-relu}. Condition (i) holds at the first layer since, in the notation of Equation \ref{eq:nn-layer},
\begin{align}
\Expect[f_1(x)] &= \Expect[W_1 \cdot x] = \Expect[W_1] \cdot x = 0, \nonumber\\ \Expect[(f_1^i(x))^2] &= \sum_{j,k=1}^{d_0} \Expect[W_1^{ij}W_1^{ik}] \cdot x^j x^k = \sum_{j=1}^{d_0} \Expect[(W_1^{ij})^2] \cdot (x^j)^2 = 1, \label{eq:var-layer-1}
\end{align}
and $\phi$ preserves both iid-ness and finite-ness of the first and second moment. Then, by recursive application of Lemma \ref{lem:nngp}, the outputs at any layer are jointly Gaussian. All that remains is to compute their moments.

For the $i$th component of layer $l$, the first moment $\Expect \left[ f_l^{i}(x)\right] = 0$. This can be seen by taking the expectation of Equation \ref{eq:nn-layer} and using the fact that the $W_l^{ij}$ are independent of the layer inputs and have mean zero.
    
    Since any two layer outputs $f_l^i(x)$ and $f_l^i(x^\prime)$ are jointly Gaussian with mean zero, their distribution is completely described by the $2\times 2$ covariance matrix:
    \begin{align*}
        \Sigma_{l}(x,x^\prime)&\coloneqq 
          \begin{bmatrix}
            \rho_{l}(x,x) & \rho_{l}(x,x^\prime)  \\
            \rho_{l}(x,x^\prime) & \rho_{l}(x^\prime,x^\prime)
          \end{bmatrix},
    \end{align*}
    where $\rho_{l}(x,x^\prime)\coloneqq\Expect \left[ f_l^i(x) f_l^i(x^\prime)\right]$ and the index $i$ is unimportant since different components in the same layer have identical distributions.
    
    The theorem statement will follow from an effort to express $\Sigma_l(x,x^\prime)$ in terms of $\Sigma_{l-1}(x,x^\prime)$ and then recursing back through the network. By Equation \ref{eq:nn-layer} and independence of the $W_l^{ij}$, the covariance $\rho_l(x,x^\prime)$ may be expressed as: 
    \begin{equation}\label{eq:covar}
        \rho_{l}(x,x^\prime) = \Expect \left[ \phi \left(f_{l-1}^j(x)\right) \phi \left(f_{l-1}^j(x^\prime)\right) \right],
    \end{equation}
    where $j$ indexes an arbitrary component of layer $l-1$. To make progress, it helps to first evaluate:
    \begin{gather*}
        \rho_l(x,x) = \Expect \left[ \phi \left(f_{l-1}^j(x)\right)^2 \right] = \half \cdot 2 \cdot \rho_{l-1}(x,x),
    \end{gather*}
    which follows by the definition of $\phi$ and symmetry of the Gaussian expectation around zero. Then, by recursion:
    \begin{gather*}
        \rho_{l}(x,x) = \rho_{l-1}(x,x) =... = \rho_{1}(x,x) = 1,
    \end{gather*}
    where the final equality holds because $\Expect[(f_1^i(x))^2] = 1$ by Equation \ref{eq:var-layer-1}. Then the covariance $\Sigma_{l-1}$ at layer $l-1$ simplifies to:
    \begin{equation*}
        \Sigma_{l-1}(x,x^\prime)=
          \begin{bmatrix}
            1 & \rho_{l-1}(x,x^\prime)  \\
            \rho_{l-1}(x,x^\prime) & 1
          \end{bmatrix}.
    \end{equation*}
    Equation \ref{eq:covar} may now be used to express $\rho_l(x,x^\prime)$ in terms of $\rho_{l-1}(x,x^\prime)$. Dropping the $(x,x^\prime)$ indexing for brevity:
    \begin{align*}
        \rho_l &= \Expect_{u,v\sim \mathcal{N}\left(0,\Sigma_{l-1}\right)} \left[ \phi \left(u\right) \phi \left(v\right) \right] \\
        &= \frac{1}{\pi \sqrt{1-\rho_{l-1}^2}} \iint_{u,v\geq0}\diff{u}\idiff{v}\, \exp\left[-\frac{u^2 - 2 \rho_{l-1} uv + v^2}{2(1-\rho_{l-1}^2)}\right]uv.
    \end{align*}
    By making the substitution $\rho_{l-1}=\cos\theta$, this integral becomes equivalent to $\frac{1}{\pi}\cdot J_1(\theta)$ as expressed in \citet{choandsaul}'s Equation 15. Substituting in the evaluation of this integral \citep[Equation 6]{choandsaul}, one obtains:
    \begin{equation}\label{eq:recur}
      \rho_{l}(x,x^\prime) = h(\rho_{l-1}(x,x^\prime)).
    \end{equation}
    
    What remains is to evaluate $\rho_1(x,x^\prime)$. Since $\Expect\left[W_1^{ij}W_1^{ik}\right] = \delta_{jk}/d_0$, this equals:
    \begin{align*}
        \rho_1(x,x^\prime) &\coloneqq  \Expect \left[ f_1^i(x) f_1^i(x^\prime)\right] = \sum_{j,k=1}^{d_0} \Expect\left[W_1^{ij}W_1^{ik}\right] x^j (x^\prime)^k = \frac{x^\top x^\prime}{d_0}.
    \end{align*}
    The proof is completed by combining this expression for $\rho_1(x,x^\prime)$ with the recurrence relation in Equation \ref{eq:recur}.
\end{proof}

This completes the chapter on correspondences between function spaces.

\printbibliography[heading=subbibliography]
\end{refsection}

\partimage{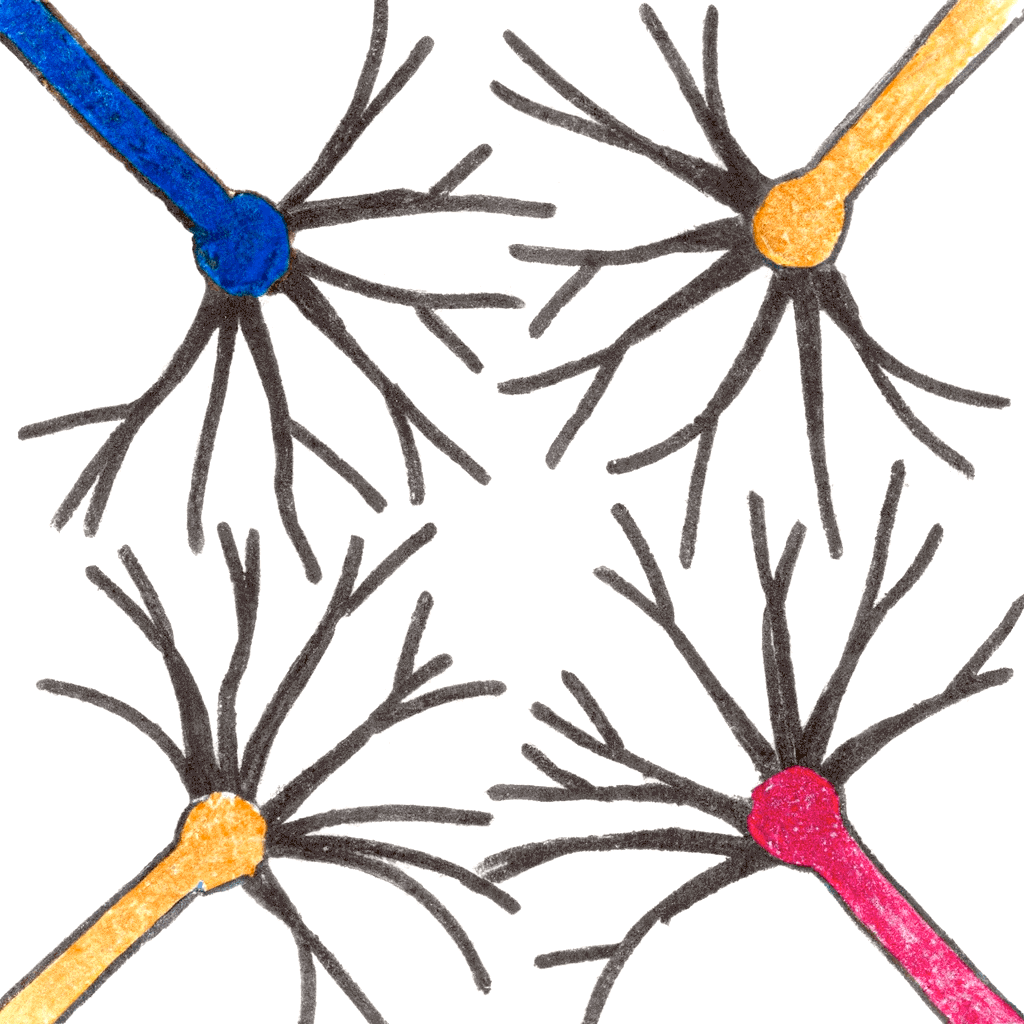}
\partquote{The number of open questions concerning steepest ascent, its ubiquity in the literature of optimization, and its continued use in computation might lead the unwary to think that it was a good thing to do in practice; but we think that in the art of computation it should be considered as a last resort, faute de mieux, as Cauchy might have said.}{Philip Wolfe, 1969}
\part{Optimisation}
\label{part:opt}

\begin{refsection}

\chapter{The Majorise-Minimise Meta-Algorithm}
\label{chap:perturb}

\begin{tcolorbox}
This chapter introduces several classic iterative optimisation methods. The derivations, while themselves not novel, are put on common footing by showing how each corresponds to a form of perturbation analysis.
\end{tcolorbox}

This chapter surveys classic techniques for formally deriving gradient-based optimisation methods. The survey covers first-order methods---such as \textit{gradient descent} and \textit{mirror descent}---as well as second-order methods---such as the \textit{cubic regularised} version of  \textit{Newton's method}. These varied techniques are put on a consistent theoretical footing by showing that one step of each method may be viewed as the minimisation of a particular local description of the loss function. In each case, this local description takes the form of a \textit{perturbation bound} on the error of a \textit{truncated perturbation expansion} of the loss. Such bounds, known as \textit{majorisations}, assess the region within which the perturbation expansion can be trusted.

The optimisation methods described in this chapter make use of perturbation expansions and bounds related to the Taylor series expansion of the loss function $\el$ in the weight space $\mathcal{W}$ of the optimisation problem. As such, they may construct a perturbation $\Delta w\in\mathcal{W}$ to the weights $w\in\mathcal{W}$ by considering any of the following information:
\begin{enumerate}
    \item \textit{First-order information}: The gradient $\nabla_w \el(w)$ of the loss function.
    \item \textit{Second-order information}: The Hessian $\nabla_w^2 \el(w)$ of the loss function.
    \item \textit{Trust regions}: Bounds on the accuracy of the above information.
\end{enumerate}

In contrast to the techniques considered in this chapter, Chapter \ref{chap:maj-min} deals more closely with neural networks. There the loss $\el$ depends on the weights $w\in\mathcal{W}$ indirectly via the network architecture $f(\cdot;w):\mathcal{X}\to\mathcal{Y}$, and it will help to model this dependence. But in this chapter, the loss function will be viewed as a straightforward function of the weights---formally, the loss $\el : \mathcal{W} \to \R$.

\section{Perturbation analysis: Expansions and bounds}
This section introduces the general ideas of perturbation analysis. Section \ref{sec:mm} will show how these ideas may be used to solve optimisation problems.

\subsection{Perturbation expansions}
The simplest example of a perturbation expansion is a straightforward Taylor series. Provided a function $g:\R\to\R$ is analytic, it may be expanded about a point $x\in\R$ as a Taylor series in powers of some small perturbation $\delta x\in\R$:
\begin{equation}\label{eq:taylor}
    g(x+\delta x) = g(x) + \frac{\partial g}{\partial x}\cdot\delta x + \frac{1}{2} \frac{\partial^2 g}{\partial x^2}\cdot\delta x^2 + ...
\end{equation}

A \textit{truncated} perturbation expansion refers to cutting off this series at some order. For instance, truncating Equation \ref{eq:taylor} yields:
\begin{equation}\label{eq:p-expand}
    g^{(2)}(x+\delta x) \coloneqq g(x) + \frac{\partial g}{\partial x}\cdot\delta x + \frac{1}{2} \frac{\partial^2 g}{\partial x^2}\cdot\delta x^2,
\end{equation}
where the superscript $(2)$ indicates a second-order truncation.

\subsection{Perturbation bounds}
While the truncated perturbation expansion in Equation \ref{eq:p-expand} is accurate for sufficiently small $\delta x$, the error in this approximation is unknown without computing the truncated part of the series. A \textit{perturbation bound} deals with this issue by bounding the error in a truncated perturbation expansion. For example, if one can derive a function $h(x,\delta x)$ such that:
\begin{equation}\label{eq:p-bound}
    \left|g(x+\delta x) - g^{(2)}(x+\delta x)\right| \leq h(x,\delta x),
\end{equation}
then Inequality \ref{eq:p-bound} would constitute a perturbation bound. For the perturbation bound to be useful, one aspires to finding a bounding function $h(\cdot,\cdot)$ that:
\begin{enumerate}
    \item provides a reasonably tight bound on the error in the truncated series.
    \item is much cheaper to compute than the omitted terms in the truncation.
\end{enumerate}

\section{Majorise-minimise}\label{sec:mm}

In an iterative minimisation method, one is interested in selecting a perturbation $\Delta w_*$ such that the loss after the perturbation, $\el(w+\Delta w_*)$, is smaller than the loss beforehand, $\el(w)$. A good starting point for the design of such a method is the Taylor expansion of the loss in weight perturbations $\Delta w$:
\begin{equation}\label{eq:weight-taylor}
    \el(w+\Delta w) = \el(w) + \nabla_w \el(w)^\top \Delta w + \frac{1}{2}\cdot \Delta w ^\top \nabla_w^2\el(w)\Delta w + ...
\end{equation}

It is tempting to minimise, say, the first few terms in this Taylor expansion as a proxy for reducing the loss function. Formally, letting $\el^{(k)}(w+\Delta w)$ denote the Taylor expansion in Equation \ref{eq:weight-taylor} truncated to $k$th order, one might be inclined to select a perturbation $\Delta w_*$ via:
\begin{equation}
    \Delta w_* = \argmin_{\Delta w}\left[\el^{(k)}(w+\Delta w)\right].
\end{equation}
For instance, truncating to first order would correspond to the minimisation:
\begin{equation}\label{eq:fo-trunc}
    \Delta w_* =\argmin_{\Delta w}\left[\el^{(1)}(w+\Delta w)\right]= \argmin_{\Delta w}\left[\el(w) + \nabla_w \el(w)^\top \Delta w \right].
\end{equation}
Unfortunately, this procedure is not well-founded. To see this, observe that the minimand appearing in Equation \ref{eq:fo-trunc} can be made arbitrarily negative by setting the perturbation $\Delta w$ to point in the direction of the negative gradient $-\nabla_w \el(w)$ with an arbitrarily large magnitude. This holds even for bounded loss functions which cannot themselves be made arbitrarily negative.

The core issue being highlighted in the previous paragraph is that minimising a truncated series expansion can result in perturbations so large that the truncation is no longer a good approximation to the original loss function. This issue may be addressed by employing a special form of perturbation bound known as a \textit{majorisation}: 

\begin{definition}[Majorisation]\label{def:majorisation} Given an analytic loss function $\el:\mathcal{W}\to\R$, let $\el^{(k)}(w+\Delta w)$ denote the Taylor expansion of $\el$ about $w$ truncated to $k$th order. An analytic function $h:\mathcal{W}\times\mathcal{W}\to\R_{\geq0}$ is a \textit{majorisation} of $\el$ provided that:
\begin{enumerate}[label=(\roman*)]
    \item $h$ gives a one-sided bound on the error of the truncated Taylor series:
    \begin{equation}\label{eq:major}
    \el(w+\Delta w) \leq \el^{(k)}(w+\Delta w) + h(w,\Delta w).
    \end{equation}
    \item $h$ is zero whenever the perturbation $\Delta w$ is zero:
    \begin{equation}
        h(w, 0) = 0 \text{ for all } w\in\mathcal{W}.
    \end{equation}
\end{enumerate}
\end{definition}

Taken together, these two conditions imply that a majorisation provides an upper bound to the perturbed loss $\el(w+\Delta w)$ (Inequality \ref{eq:major}) that lies \textit{tangent} to the loss $\el$ at weight setting $w$. A graphical example is provided in Figure \ref{fig:major-minor}. The reason that such a construction is helpful is that minimising a majorisation will also reduce the original loss. This idea is also visualised in Figure \ref{fig:major-minor}.

Formally, if $h$ is a majorisation and a perturbation $\Delta w_*$ is selected via:
\begin{equation}\label{eq:minor}
    \Delta w_* =\argmin_{\Delta w} \left[\el^{(k)}(w+\Delta w) +  h(w,\Delta w) \right],
\end{equation}
then $\el(w+\Delta w_*) \leq \el(w)$, with equality only when $w$ was already a minimum.

This process, of minimising a tangent upper bound to a loss function, is known as the \textit{majorise-minimise} meta-algorithm \citep{mm}. It is a \textit{meta-algorithm} in the sense that many different methods may be derived by following this procedure in different situations. Examples are given in the next section.

\begin{figure}
    \centering
    \includegraphics{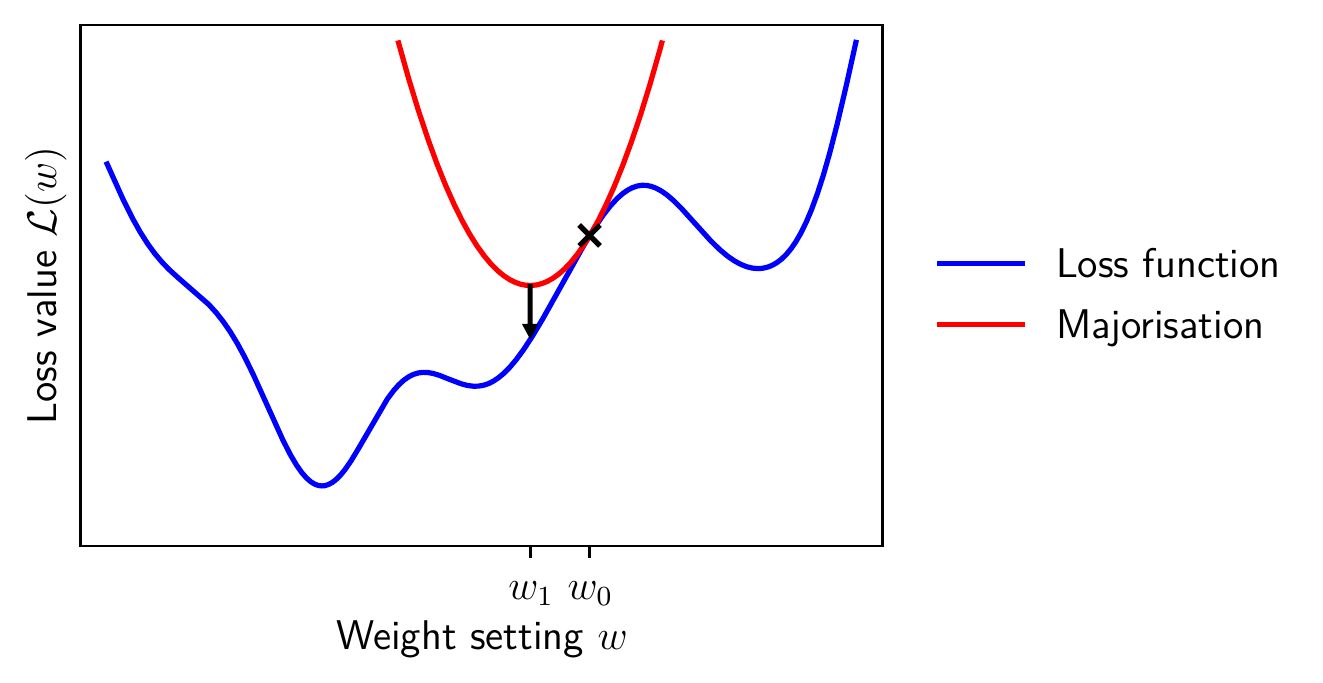}
    \caption[The majorise-minimise meta-algorithm]{The majorise-minimise meta-algorithm. The blue curve denotes a loss function that one would like to reduce, starting from a point $w_0$. The red curve denotes a \textit{majorisation} of the loss about $w_0$. Since the majorisation is both an upper bound to the loss that is also tangent at $w_0$, minimising the majorisation to obtain a new weight setting $w_1$ also reduces the original loss.}
    \label{fig:major-minor}
\end{figure}

\section{Instantiations of the meta-algorithm}
\label{sec:examples}

First-order optimisation methods minimise a majorisation of the first-order Taylor expansion of the loss function. They therefore rely on first-order \textit{gradient} information and do not have access to second-order \textit{Hessian} information. Examples include basic gradient descent and also \textit{mirror descent}.

Meanwhile, second-order optimisation methods minimise a majorisation of the second-order Taylor expansion of the loss function. Both first-order \textit{gradient} information as well as second-order \textit{Hessian} information are used. The hope is that including this extra information will lead to a more effective optimisation step. One example is the \textit{cubic-regularised} version of \textit{Newton's method}.

\subsection{First example: Gradient descent}

This subsection shows that gradient descent in its most basic form arises from the following majorisation of the first-order Taylor expansion of the loss:
\begin{equation}\label{eq:gd-major}
    \el(w+\Delta w) \leq \el(w) + \nabla_w \el(w)^\top \Delta w + \frac{\lambda}{2}\cdot \norm{\Delta w}_2^2,
\end{equation}
for some constant $\lambda>0$. This is a \textit{Euclidean} majorisation, since the Euclidean norm $\|\cdot\|_2$ characterises the realm of validity of the first-order Taylor expansion.

One must ask, for which loss functions is Inequality \ref{eq:gd-major} a valid majorisation? Here it helps to define the class of \textit{gradient-Lipschitz} loss functions.
\begin{definition}[Gradient-Lipschitz loss function] A loss function $\el:\R^d\to\R$ is \textit{gradient-Lipschitz} with constant $\lambda>0$ if:
\begin{equation}
    \norm{\nabla_w\el(w+\Delta w) - \nabla_w\el(w)}_2 \leq \lambda\cdot\norm{\Delta w}_2.
\end{equation}
\end{definition}

The following lemma shows that a gradient Lipschitz loss function is majorised according to Inequality \ref{eq:gd-major}.
\begin{lemma}[Gradient-Lipschitz Euclidean majorisation]\label{lem:g-lipsch-major} Given that a loss function $\el:\R^d \to \R$ is gradient-Lipschitz with constant $\lambda$, then the Euclidean majorisation (Inequality \ref{eq:gd-major}) holds.
\end{lemma}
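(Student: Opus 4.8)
The plan is to prove this \emph{descent lemma} by integrating the gradient along the straight-line segment joining $w$ to $w+\Delta w$, and then controlling the integrand using the gradient-Lipschitz hypothesis. The key observation is that the gap $\el(w+\Delta w) - \el(w) - \nabla_w\el(w)^\top \Delta w$ is exactly an integrated \emph{difference} of gradients, and it is precisely such differences that the Lipschitz assumption bounds.

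Concretely, I would first define the scalar path $g(t) \coloneqq \el(w + t\Delta w)$ for $t\in[0,1]$, so that $g(0) = \el(w)$ and $g(1) = \el(w+\Delta w)$. By the chain rule, $g^\prime(t) = \nabla_w\el(w+t\Delta w)^\top \Delta w$, and the fundamental theorem of calculus gives
\begin{equation*}
    \el(w+\Delta w) - \el(w) = \int_0^1 \nabla_w\el(w+t\Delta w)^\top \Delta w \idiff t.
\end{equation*}
Next I would add and subtract the anchored gradient term $\nabla_w\el(w)^\top\Delta w$ inside the integral, isolating the remainder:
\begin{equation*}
    \el(w+\Delta w) - \el(w) - \nabla_w\el(w)^\top\Delta w = \int_0^1 \big[\nabla_w\el(w+t\Delta w) - \nabla_w\el(w)\big]^\top \Delta w \idiff t.
\end{equation*}
The final step is to bound the integrand: by Cauchy-Schwarz the inner product is at most $\norm{\nabla_w\el(w+t\Delta w) - \nabla_w\el(w)}_2\cdot\norm{\Delta w}_2$, and by the gradient-Lipschitz property (applied to the perturbation $t\Delta w$) the first factor is at most $\lambda\cdot t\cdot\norm{\Delta w}_2$. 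Substituting and evaluating $\int_0^1 \lambda t\, \idiff t = \lambda/2$ yields the remainder bound $\tfrac{\lambda}{2}\norm{\Delta w}_2^2$, which rearranges into Inequality \ref{eq:gd-major}.

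I do not expect a serious obstacle here, since this is a routine computation; the only points requiring mild care are technical rather than conceptual. First, the use of the fundamental theorem of calculus presupposes enough smoothness of $\el$ along the segment—continuous differentiability suffices, and this is implicit in the gradient-Lipschitz assumption. Second, one must apply the Lipschitz bound with the correct argument: the perturbation separating $w+t\Delta w$ from $w$ is $t\Delta w$, so its norm is $t\norm{\Delta w}_2$, which is what produces the factor of $t$ under the integral and hence the constant $\tfrac{1}{2}$ after integration. Getting that factor right is the whole content of the sharp constant in the majorisation.
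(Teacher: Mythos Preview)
Your proposal is correct and follows essentially the same approach as the paper: integrate along the segment via the fundamental theorem of calculus, apply Cauchy--Schwarz to the gradient-difference inner product, then invoke the gradient-Lipschitz bound and evaluate $\int_0^1 \lambda t\idiff t = \lambda/2$. The paper's proof is identical in structure and detail, just slightly more compressed.
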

\begin{proof} By the fundamental theorem of calculus, the Cauchy-Schwarz inequality and finally the gradient-Lipschitz property:
\begin{align*}
    &\el(w+\Delta w) - \left[\el(w) + \nabla_w \el(w)^\top \Delta w\right]\\ &\qquad\qquad= \int_0^1\diff{t}\; \left(\nabla_w \el(w+t\cdot \Delta w)-\nabla_w\el(w)\right)^\top\Delta w\\
    &\qquad\qquad\leq \int_0^1\diff{t}\; \norm{\nabla_w \el(w+t\cdot \Delta w)-\nabla_w\el(w)}_2\cdot\norm{\Delta w}_2\\
    &\qquad\qquad \leq \int_0^1\diff{t}\; t \cdot \lambda \cdot \norm{\Delta w}_2^2 = \frac{\lambda}{2}\cdot\norm{\Delta w}_2^2.
\end{align*}
The proof is completed by adding the first-order Taylor series to both sides.
\end{proof}

Finally, the following theorem shows that the gradient descent optimisation algorithm emerges via minimisation of the majorisation in Inequality \ref{eq:gd-major}.

\begin{theorem}[Gradient descent]\label{thm:gd} The following holds:
\begin{equation}
    \argmin_{\Delta w} \left[\el(w) + \nabla_w \el(w)^\top \Delta w + \frac{\lambda}{2} \cdot \norm{\Delta w}_2^2\right] = - \frac{1}{\lambda} \cdot \nabla_w \el(w).
\end{equation}
\end{theorem}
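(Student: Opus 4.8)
The plan is to recognise the minimand as a strictly convex quadratic function of the perturbation $\Delta w$ and then locate its unique stationary point. Writing the objective as $q(\Delta w) \coloneqq \el(w) + \nabla_w \el(w)^\top \Delta w + \frac{\lambda}{2}\cdot\norm{\Delta w}_2^2$, the first term is constant in $\Delta w$, the second is linear, and the third is a positive-definite quadratic form since $\lambda > 0$. The Hessian of $q$ with respect to $\Delta w$ is simply $\lambda \cdot \Id$, which is positive definite, so $q$ is strictly convex and possesses a unique global minimiser.

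First I would compute the gradient of $q$ with respect to $\Delta w$, obtaining $\nabla_{\Delta w}\, q(\Delta w) = \nabla_w \el(w) + \lambda \cdot \Delta w$. Setting this to zero yields the stationarity condition $\lambda \cdot \Delta w = -\nabla_w \el(w)$, and dividing through by $\lambda$ delivers the claimed minimiser $\Delta w_* = -\frac{1}{\lambda}\cdot\nabla_w \el(w)$. Strict convexity then guarantees that this stationary point is in fact the global minimum rather than a saddle or maximum, so the $\argmin$ is well-defined and equals this single point.

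There is no real obstacle here: the result is a routine exercise in minimising a quadratic. The only point worth flagging is that one must invoke $\lambda > 0$ both to ensure the quadratic term is coercive---so that the infimum is attained at a finite $\Delta w$ rather than escaping to infinity---and to justify dividing by $\lambda$ when solving the stationarity equation. This mirrors the pathology discussed around Equation~\ref{eq:fo-trunc}: without the majorisation's quadratic penalty the linear minimand would be unbounded below, whereas the added $\frac{\lambda}{2}\norm{\Delta w}_2^2$ term restores coercivity and pins the minimiser down to exactly the negative-gradient step that defines gradient descent.
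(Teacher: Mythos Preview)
Your proof is correct and follows exactly the same approach as the paper: take the derivative of the minimand with respect to $\Delta w$, set it to zero, and solve. You add the extra justification via strict convexity that the stationary point is indeed the global minimiser, which the paper leaves implicit.
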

\begin{proof}
    Take the derivative of the minimand on the left-hand side with respect to $\Delta w$, set this derivative to zero, and solve for $\Delta w$.
\end{proof}

\subsection{Second example: Mirror descent}

The previous subsection showed that gradient descent in its most basic form arises from an intrinsically Euclidean majorisation of the loss. This subsection shows that a certain kind of \textit{non-Euclidean} majorisation leads to a variant of gradient descent known as \textit{mirror descent} \citep{nemirovsky_yudin_1983}.

The kind of non-Euclidean majorisation in question has a particular form: it assesses the error in the first-order Taylor expansion of a convex function:

\begin{definition}[Bregman divergence] Given a convex function $\psi:\mathcal{W} \to \R$, the \textit{Bregman divergence} $h_\psi:\mathcal{W}\times\mathcal{W} \to \R$ corresponding to $\psi$ is given by:
\begin{equation}
    h_\psi(w,w+\Delta w) \coloneqq \psi(w+\Delta w) - \left[\psi(w) + \nabla \psi(w)^\top \Delta w\right].
\end{equation}
\end{definition}
Note that by a basic property of convexity, $\psi(w+\Delta w)$ always lies above the tangent to $\psi$ at $w$, and therefore $h_\psi \geq 0$. Also, it is quick to check that $h_\psi$ satisfies the conditions of Definition \ref{def:majorisation} to be a valid majorisation of $\psi$.

One hopes to apply mirror descent in situations where the trust region of the loss function of interest $\el$ is well-modeled by $h_\psi$:

\begin{assumption}[Majorisation via Bregman divergence]\label{ass:bregman-major} For a loss function $\el: \mathcal{W} \to \R$ and a convex function $\psi:\mathcal{W} \to \R$, assume that:
\begin{equation}
    \el(w+\Delta w) \leq \el(w) + \nabla_w \el(w)^\top \Delta w + h_\psi(w,w+\Delta w).
\end{equation}
\end{assumption}
In words: the majorisation $h_\psi$ of convex function $\psi$ is assumed to also majorise the loss function $\el$. The reason that this assumption is interesting is that the corresponding majorise-minimise algorithm has a particularly elegant form:

\begin{proposition}[Mirror descent] Let perturbation $\Delta w_*$ denote the minimiser of the majorisation given in Assumption \ref{ass:bregman-major}:
\begin{equation}
    \Delta w_* \coloneqq \argmin_{\Delta w} \left[\el(w) + \nabla_w \el(w)^\top \Delta w + h_\psi(w,w+\Delta w) \right].
\end{equation}
Then $\Delta w_*$ satisfies the following first-order optimality condition:
\begin{equation}\label{eq:mirror-domain}
    \nabla \psi(w+\Delta w_*) = \nabla \psi(w) - \nabla_w\el(w).
\end{equation}
Furthermore, when $\nabla \psi$ is invertible, the optimal perturbation satisfies:
\begin{equation}\label{eq:mirror-back}
    w+\Delta w_* = \nabla \psi^{-1}\left[\nabla \psi(w) - \nabla_w\el(w)\right].
\end{equation}
\end{proposition}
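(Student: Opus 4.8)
The plan is to substitute the definition of the Bregman divergence directly into the minimand, discard the terms that do not depend on $\Delta w$, and then set the gradient of the remainder to zero. Writing out the full objective,
\begin{equation*}
    M(\Delta w) \coloneqq \el(w) + \nabla_w\el(w)^\top\Delta w + \psi(w+\Delta w) - \psi(w) - \nabla\psi(w)^\top\Delta w,
\end{equation*}
I note that $\el(w)$ and $\psi(w)$ are constants as far as the optimisation over $\Delta w$ is concerned, so only the three remaining terms contribute to the gradient.

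First I would differentiate $M$ with respect to $\Delta w$. The two linear terms $\nabla_w\el(w)^\top\Delta w$ and $-\nabla\psi(w)^\top\Delta w$ contribute gradients $\nabla_w\el(w)$ and $-\nabla\psi(w)$ respectively, while the chain rule applied to $\psi(w+\Delta w)$ contributes $\nabla\psi(w+\Delta w)$. Collecting these gives
\begin{equation*}
    \nabla_{\Delta w}\, M(\Delta w) = \nabla_w\el(w) + \nabla\psi(w+\Delta w) - \nabla\psi(w),
\end{equation*}
and setting this to zero at $\Delta w = \Delta w_*$ rearranges immediately to the claimed optimality condition, Equation \ref{eq:mirror-domain}.

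It remains to argue that this stationary point is genuinely the minimiser, and then to invert the optimality condition. Convexity of $\psi$ makes $M$ convex in $\Delta w$, since it is the sum of affine terms and the convex map $\Delta w \mapsto \psi(w+\Delta w)$; hence the first-order condition is both necessary and sufficient for a global minimum, and this is precisely where the convexity hypothesis on $\psi$ earns its keep. Finally, when $\nabla\psi$ is invertible I would apply $\nabla\psi^{-1}$ to both sides of the optimality condition to recover Equation \ref{eq:mirror-back}. The calculation is otherwise routine; the only points requiring care are the chain-rule differentiation of $\psi(w+\Delta w)$ with respect to $\Delta w$, and the observation that it is invertibility of the \emph{gradient} $\nabla\psi$ — not of $\psi$ itself — that is exactly what allows the passage from the implicit optimality condition to the explicit mirror-descent update.
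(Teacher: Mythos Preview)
Your proof is correct and complete. The paper itself states this proposition without proof, so there is nothing to compare against; your argument---substituting the Bregman divergence, differentiating, using convexity of $\psi$ to upgrade the stationary point to a global minimiser, and then inverting---is exactly the standard derivation and would serve perfectly well as the missing proof.
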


Equation \ref{eq:mirror-domain} may be interpreted as a vanilla gradient descent update to optimisation variables that have been transformed by the map $\nabla\psi:\R^d\to\R^d$. So for the kind of non-Euclidean majorisation given in Assumption \ref{ass:bregman-major}, optimisation still admits a simple additive structure when viewed in the \textit{mirror domain} $\nabla \psi (\mathcal{W}) \coloneqq \{ \nabla \psi (w) : w\in\mathcal{W}\}$.

Mirror descent makes an important departure from the Euclidean structure of gradient descent by leveraging a particular kind of non-Euclidean majorisation given by a Bregman divergence (Assumption \ref{ass:bregman-major}). But there may be other means of constructing non-Euclidean majorisations. Non-Euclidean majorisations directly tailored to learning problems will be explored in Chapters \ref{chap:maj-min} and \ref{chap:nn-maj-min}.

\subsection{Third example: Cubic regularised Newton}

A first attempt toward building a second-order optimisation method is known as \textit{Newton's method}. Newton's method attempts to minimise the Taylor series expansion of the loss function truncated to second order:
\begin{equation}\label{eq:so-trunc}
    \Delta w^* = \argmin_{\Delta w} \left[\el(w) + \nabla_w \el(w)^\top \Delta w + \frac{1}{2}\cdot\Delta w^\top \nabla^2_w\el(w) \Delta w\right].
\end{equation}

Unfortunately, as was the case for minimising the first-order truncation (Equation \ref{eq:fo-trunc}), this procedure is generally not well-founded. For instance, if the Hessian $\nabla^2_w\el(w)$ has any negative eigenvalues, then the minimand appearing in Equation \ref{eq:so-trunc} can be made arbitrarily negative by selecting a $\Delta w$ in the corresponding negative eigenspace, with arbitrarily large magnitude.

Again, this issue may be solved by a majorisation. Just as Lemma \ref{lem:g-lipsch-major} showed that a gradient-Lipschitz loss function admits a quadratic majorisation of the first-order Taylor expansion, a Hessian-Lipschitz loss function admits a cubic majorisation of the second-order expansion. This insight leads to the \textit{cubic regularised} version of Newton's method due to \citet{Nesterov2006CubicRO}:
\begin{equation}
    \Delta w_* = \argmin_{\Delta w} \left[\el^{(2)}(w+\Delta w) + \frac{\lambda}{6}\cdot\norm{\Delta w}_2^3\right].
\end{equation}
\begin{table}[p]
    \centering
    \def\arraystretch{2}
    \begin{tabular*}{\textwidth}{l @{\extracolsep{\fill}} cc}
    \textbf{Optimiser} & \textbf{Truncation order, $\mathbf{k}$} &  \textbf{Majorisation, $\mathbf{h}$} \\
    gradient descent & $k=1$ & $\displaystyle \frac{\lambda}{2} \cdot \norm{\Delta w}_2^2$ \\
    mirror descent & $k=1$ & $\displaystyle h_\psi(w,w+\Delta w)$\\
    cubic regularised Newton & $k=2$ & $\displaystyle \frac{\lambda}{6}\cdot\norm{\Delta w}_2^3$ \\
    \end{tabular*}
    \vspace{4ex}
    \caption[Optimisation methods and their corresponding majorisations]{Optimisation methods and their corresponding majorisations. Each optimiser perturbs a weight vector $w\mapsto w+\Delta w_*$ where the perturbation $\Delta w_*$ is selected by solving $\Delta w_* = \argmin_{\Delta w}[\el^{(k)}(w+\Delta w) + h(w,\Delta w)]$. In this expression, $\el^{(k)}(w+\Delta w)$ refers to the Taylor series expansion of $\el(w+\Delta w)$ in perturbation $\Delta w$ truncated to $k$th order.}
    \label{tab:trust}
\end{table}

The authors show how to solve this optimisation sub-problem, and provide some global convergence results for this method in certain settings.

\clearpage

\section{Trade-off between computation and fidelity}

This chapter has presented derivations of various optimisation methods, summarised in Table \ref{tab:trust}. The derivations work by majorising a truncated Taylor expansion in weight perturbations, and then minimising this majorisation. This process is formally described by Equation \ref{eq:minor} and illustrated in Figure \ref{fig:major-minor}.

What has not been discussed is the computational complexity of this procedure. In general, there is a trade-off between the fidelity of any particular majorisation, and the computational cost of its evaluation. For instance, majorising a higher-order Taylor expansion may lead to a tighter perturbation bound with a larger region of validity, allowing each optimisation step to make more progress. But one needs to ask if this larger per-step improvement is worth the extra computational overhead of evaluating a higher-order perturbation bound.

A good case in point is first-order versus second-order methods. While the per-step improvement of a second-order method may well exceed that of a first-order method, the per-step cost of the second-order method may be prohibitively expensive. On a $d$-dimensional weight space, $\mathcal{W}=\R^d$:
\begin{enumerate}
    \item The Hessian matrix $\nabla_w^2\el(w)\in\R^{d\times d}$ requires $\mathcal{O}(d^2)$ memory to store.
    \item The gradient vector $\nabla_w\el(w)\in\R^{d}$ requires $\mathcal{O}(d)$ memory to store.
\end{enumerate}

So first-order methods may be preferable on high-dimensional weight spaces.

\printbibliography[heading=subbibliography]
\end{refsection}

\begin{refsection}

\chapter{Majorise-Minimise for Learning Problems}
\label{chap:maj-min}

\begin{tcolorbox}
This chapter introduces two novel techniques for machine learning optimisation problems: \textit{functional majorisation} of the loss function and \textit{architectural perturbation bounds} for machine learning models. Together these techniques allow the majorise-minimise meta-algorithm to be applied to generic learning problems. This chapter uses the techniques to re-derive gradient descent and the Gauss-Newton method. Chapter \ref{chap:nn-maj-min} will use the techniques to derive architecture aware optimisation algorithms.
\end{tcolorbox}

While Chapter \ref{chap:perturb} dealt with somewhat generic optimisation algorithms, the focus of this chapter is a framework for deriving optimisation algorithms for machine learning problems. A central feature of this type of optimisation problem is that the loss function $\el$ is affected by the weight vector $w$ only indirectly through its appearance in the machine learning model $f(\cdot;w)$. This presents an opportunity: one can design optimisation algorithms that leverage the architecture of the machine learning model $f(\cdot;w)$ in interesting and useful ways. This idea will be termed \textit{architecture aware optimisation}.

This chapter makes an important conceptual shift from considering a weight perturbation $\Delta w$ directly, as in Chapter \ref{chap:perturb}, to first studying the functional perturbation $\Delta f$  that is induced by the weight perturbation $\Delta w$:
\begin{equation}
        \Delta f(\cdot) \coloneqq f(\cdot;w+\Delta w)-f(\cdot;w).
\end{equation}

After making this shift, the following three-step framework is proposed for deriving architecture aware optimisation algorithms:
\begin{enumerate}[label=Step \arabic*:, leftmargin=*, font=\sffamily]
    \item \textit{Functional majorisation of the loss}. Expand the loss function as a series in functional perturbations, and majorise this expansion in terms of the size of functional perturbations. Lemma \ref{lem:sq-major} gives an example.
    \item \textit{Architectural perturbation bounds.} Derive bounds that relate the size of functional perturbations to the size of weight perturbations by analysing the model architecture. Lemma \ref{lem:deep_perturbation_bounds} gives an example.
    \item \textit{Majorise--minimise.} Substitute the architectural perturbation bounds into the functional majorisation of the loss and minimise with respect to the weight perturbation to obtain an optimisation algorithm.
\end{enumerate}

For the case of linear regression, this framework turns out to reproduce the classic gradient descent algorithm. Under the assumption that functional perturbations are linear in weight perturbations, the framework reproduces the classic \textit{Gauss-Newton method}, which is closely related to \textit{natural gradient descent}. The real payoff comes in Chapter \ref{chap:nn-maj-min}, where the framework is applied to deep neural networks---yielding architecture aware optimisation methods.

\section{Expanding the loss as a series in functional perturbations}

This section derives a novel series expansion of machine learning loss functions in terms of functional perturbations. The key idea is to Taylor expand the loss function in functional perturbations, and then to transform the linear terms in this expansion back to weight space. This last step renders the expansion more suitable for deriving optimisation algorithms that operate in weight space.

This expansion relies on the fact that a machine learning loss function $\el$ can be regarded either as a function of a weight vector $w$ or a projected function $f_X(w)$ (Definition \ref{def:project}). With this in mind, this chapter will sometimes abuse notation by insisting that $\el(w) \equiv \el(f_X(w))$. Concretely, for square loss:
\begin{equation*}
    \el_2(w) = \frac{1}{2m}\cdot\norm{f_X(w)-Y}_2^2\;\;\implies\;\;
    \el_2(f_X) = \frac{1}{2m}\cdot\norm{f_X-Y}_2^2.
\end{equation*}

To derive this expansion, it will help to define a notion of functional perturbation projected over a set of inputs:
\begin{definition}[Projected functional perturbation] Given a function $f:\mathcal{X}\times \mathcal{W}\to\R$ and a collection of $m$ inputs $X=\{x_1,...,x_m\}$, the \textit{projected functional perturbation} $\Delta f_X\in\R^m$ corresponding to unperturbed weight vector $w$ and weight perturbation $\Delta w$ is given by the difference of projections:
\begin{equation}
    \Delta f_X \coloneqq f_X(w+\Delta w) - f_X(w).
\end{equation}
\end{definition}
The projected functional perturbation $\Delta f_X$ implicitly depends on both a weight vector $w$ and a weight perturbation $\Delta w$, but this dependence is suppressed for brevity. Given this definition, the loss function may be expanded as follows:

\begin{lemma}[Series expansion in functional perturbations]\label{lem:function-taylor} Given a function $f: \mathcal{X}\times \mathcal{W} \to \R$, a set of training inputs $X = \{x_1,...,x_m\}$, and a loss $\el$ that is an analytic function of the function space projection $f_X$, the following holds:
    \begin{align}
        &\el(w+\Delta w) -\left[ \el(w) + \nabla_w\el(w)^\top \Delta w\right] \nonumber \\
        &\qquad= \nabla_{f_X} \el(f_X)^\top \left[\Delta f_X - \nabla_w f_X(w) \Delta w \right] + \half \Delta f_X^\top \nabla^2_{f_X} \el(f_X) \Delta f_X + ...
    \end{align}
\end{lemma}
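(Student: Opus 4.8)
The plan is to exploit the dual nature of the loss---viewable either as $\el(w)$ or as $\el(f_X)$ through the identity $\el(w)\equiv\el(f_X(w))$---and to perform the Taylor expansion in \emph{function space} rather than weight space. The only genuinely weight-space object appearing on the left-hand side is the linear term $\nabla_w\el(w)^\top\Delta w$, which I would convert into a function-space quantity using the chain rule, leaving everything else expressed purely in terms of $\Delta f_X$.

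First I would observe that, since $\el(w+\Delta w)=\el(f_X(w+\Delta w))=\el(f_X+\Delta f_X)$ and since $\el$ is by hypothesis analytic in its projected-function argument, it admits an exact Taylor series in the functional perturbation $\Delta f_X$:
\begin{equation*}
    \el(w+\Delta w) = \el(f_X) + \nabla_{f_X}\el(f_X)^\top\Delta f_X + \half\,\Delta f_X^\top\nabla^2_{f_X}\el(f_X)\,\Delta f_X + \ldots,
\end{equation*}
where $\el(f_X)=\el(w)$ and the ellipsis collects the cubic and higher-order terms in $\Delta f_X$. Subtracting $\el(w)$ from both sides disposes of the zeroth-order term and matches the $-\el(w)$ appearing on the left.

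Second, I would apply the multivariate chain rule to rewrite the surviving weight-space term. Writing $\nabla_w f_X(w)$ for the $m\times d$ Jacobian of the projection, the chain rule for $\el\circ f_X$ gives $\nabla_w\el(w)=[\nabla_w f_X(w)]^\top\nabla_{f_X}\el(f_X)$, and hence
\begin{equation*}
    \nabla_w\el(w)^\top\Delta w = \nabla_{f_X}\el(f_X)^\top\,\nabla_w f_X(w)\,\Delta w.
\end{equation*}
Subtracting this from the expansion and grouping it together with the first-order functional term $\nabla_{f_X}\el(f_X)^\top\Delta f_X$ yields exactly $\nabla_{f_X}\el(f_X)^\top[\Delta f_X-\nabla_w f_X(w)\Delta w]$, the claimed linear term, while the quadratic and higher terms in $\Delta f_X$ are carried over untouched.

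The computation is essentially routine and I do not anticipate a real obstacle beyond careful bookkeeping. The two points that warrant attention are the role of the analyticity hypothesis---it is precisely what licenses the \emph{exact} series expansion of $\el$ in $\Delta f_X$ rather than a merely approximate one---and the transpose conventions in the chain rule, so that the $m$-vector $\nabla_w f_X(w)\Delta w$ subtracts correctly against $\Delta f_X\in\R^m$. Conceptually, it is worth flagging why the grouping is the natural one: the bracket $\Delta f_X-\nabla_w f_X(w)\Delta w$ isolates the purely nonlinear part of the functional perturbation, namely the discrepancy between the true perturbation $\Delta f_X$ and its linearisation $\nabla_w f_X(w)\Delta w$ in $\Delta w$. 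For models in which $f_X$ is linear in $w$ this bracket vanishes identically, and the expansion collapses to a clean function-space Taylor series---which is exactly the structure the subsequent derivations of gradient descent and the Gauss--Newton method will exploit.
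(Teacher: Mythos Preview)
Your proposal is correct and follows essentially the same approach as the paper: Taylor expand $\el$ in $\Delta f_X$, identify $\el(f_X+\Delta f_X)\equiv\el(w+\Delta w)$ and $\el(f_X)\equiv\el(w)$, then subtract $\el(w)+\nabla_w\el(w)^\top\Delta w$ and use the chain rule $\nabla_w\el(w)^\top\Delta w=\nabla_{f_X}\el(f_X)^\top\nabla_w f_X\,\Delta w$ to produce the bracketed term. Your additional commentary on the interpretation of the bracket as the nonlinear residual of the functional perturbation is a nice gloss but goes beyond what the paper's proof records.
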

\begin{proof} First, Taylor expand the loss in functional perturbations $\Delta f_X$:
    \begin{equation*}
        \el(f_X+\Delta f_X) = \el(f_X) + \nabla_{f_X} \el(f_X)^\top \Delta f_X + \half \Delta f_X^\top \nabla^2_{f_X} \el(f_X) \Delta f_X + ...
    \end{equation*}
Next, make the substitutions $\el(f_X+\Delta f_X) \equiv \el(w+\Delta w)$ and $\el(f_X) \equiv \el(w)$:
\begin{equation*}
        \el(w+\Delta w) = \el(w) + \nabla_{f_X} \el(f_X)^\top \Delta f_X + \half \Delta f_X^\top \nabla^2_{f_X} \el(f_X) \Delta f_X + ...
    \end{equation*}
    Finally, subtract $\left[ \el(w) + \nabla_w\el(w)^\top \Delta w\right]$ from both sides and apply the chain rule $\nabla_w\el(w)^\top \Delta w = \nabla_{f_X}\el(f_X)^\top\nabla_w f_X \Delta w$ on the right-hand side.
\end{proof}

The requirement of Lemma \ref{lem:function-taylor} that the loss function be analytic in the function space projection $f_X$ is mild. Most common loss functions---including square loss (Definition \ref{def:sq-loss}) and logistic loss (Definition \ref{def:log-loss}) satisfy this requirement---even when the loss is not an analytic function of the weight vector $w$.

\section{Functional majorisation of square loss}

This section specialises the series expansion in functional perturbations (Lemma \ref{lem:function-taylor}) to the case of square loss. This is a particularly convenient example to consider, since the series expansion terminates at second order.

\begin{lemma}[Expanding square loss in functional perturbations]\label{lem:sq-expand} The square loss (Proposition \ref{ex:sq-loss-projected}) admits the following expansion:
    \begin{align}
        &\el_2(w+\Delta w) -\left[ \el_2(w) + \nabla_w\el_2(w)^\top \Delta w\right] \nonumber \\
        &\qquad\qquad= \frac{1}{m} (f_X-Y)^\top \left[\Delta f_X - \nabla_w f_X \Delta w \right] + \frac{1}{2m} \|\Delta f_X\|_2^2.
    \end{align}
\end{lemma}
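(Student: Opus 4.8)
The plan is to specialise the general series expansion of Lemma \ref{lem:function-taylor} to the square loss. The observation that makes everything collapse is that $\el_2$, viewed as a function of the projection $f_X$ via $\el_2(f_X) = \tfrac{1}{2m}\norm{f_X-Y}_2^2$, is a quadratic polynomial in $f_X$. Consequently its Taylor expansion in functional perturbations terminates exactly at second order, so the trailing ``$+\,...$'' in Lemma \ref{lem:function-taylor} vanishes identically and no remainder analysis is required.

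First I would compute the two derivatives of the square loss with respect to the projection $f_X$. Differentiating $\tfrac{1}{2m}\norm{f_X-Y}_2^2$ gives the gradient $\nabla_{f_X}\el_2(f_X) = \tfrac{1}{m}(f_X-Y)$, and differentiating once more gives the constant Hessian $\nabla^2_{f_X}\el_2(f_X) = \tfrac{1}{m}\Id$, a scaled identity. Both are immediate.

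I would then substitute these expressions directly into the right-hand side of Lemma \ref{lem:function-taylor}. The linear term $\nabla_{f_X}\el(f_X)^\top[\Delta f_X - \nabla_w f_X \Delta w]$ becomes $\tfrac{1}{m}(f_X-Y)^\top[\Delta f_X - \nabla_w f_X \Delta w]$, while the quadratic term $\half\,\Delta f_X^\top\nabla^2_{f_X}\el(f_X)\Delta f_X$ becomes $\half\cdot\tfrac{1}{m}\norm{\Delta f_X}_2^2 = \tfrac{1}{2m}\norm{\Delta f_X}_2^2$. Summing these two contributions reproduces the claimed identity exactly.

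There is no genuine obstacle here: the result is a direct substitution, and the only point worth flagging is the \emph{exactness} of the truncation, which rests on the square loss being genuinely quadratic in $f_X$ rather than merely twice-differentiable. As a self-contained cross-check that sidesteps Lemma \ref{lem:function-taylor} entirely, one may expand $\norm{(f_X+\Delta f_X)-Y}_2^2$ by bilinearity of the inner product to get $\norm{f_X-Y}_2^2 + 2(f_X-Y)^\top\Delta f_X + \norm{\Delta f_X}_2^2$, divide by $2m$, and subtract $\el_2(w) + \nabla_w\el_2(w)^\top\Delta w$ using the chain rule $\nabla_w\el_2(w)^\top\Delta w = \tfrac{1}{m}(f_X-Y)^\top\nabla_w f_X\,\Delta w$; the $(f_X-Y)^\top\Delta f_X$ and $(f_X-Y)^\top\nabla_w f_X\,\Delta w$ terms then combine into the bracketed expression, confirming the same answer.
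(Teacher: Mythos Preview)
Your proposal is correct and mirrors the paper's proof exactly: compute $\nabla_{f_X}\el_2(f_X)=\tfrac{1}{m}(f_X-Y)$ and $\nabla^2_{f_X}\el_2(f_X)=\tfrac{1}{m}\Id$, note that all higher derivatives vanish, and substitute into Lemma~\ref{lem:function-taylor}. The added bilinearity cross-check is a nice sanity step but not present in the original.
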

\begin{proof}
For the square loss, $\nabla_{f_X} \el(f_X) = \frac{1}{m}(f_X-Y)$ and $\nabla^2_{f_X} \el(f_X) = \frac{1}{m} \cdot \Id$. All higher-order derivatives with respect to $f_X$ are zero. Substituting these relations in to Lemma \ref{lem:function-taylor} yields the result.
\end{proof}

By an application of the Cauchy-Schwarz inequality, Lemma \ref{lem:sq-expand} leads directly to the following majorisation of the square loss in terms of functional perturbations:

\begin{lemma}[Functional majorisation of square loss]\label{lem:sq-major} The square loss (Proposition \ref{ex:sq-loss-projected}) admits the following majorisation:
    \begin{align}\label{eq:sq-major}
        &\el_2(w+\Delta w) -\left[ \el_2(w) + \nabla_w\el_2(w)^\top \Delta w\right] \nonumber \\
        &\qquad\qquad\leq \frac{1}{m}\cdot \norm{f_X-Y}_2\cdot\norm{\Delta f_X - \nabla_w f_X \Delta w}_2 + \frac{1}{2m} \cdot \|\Delta f_X\|_2^2.
    \end{align}
\end{lemma}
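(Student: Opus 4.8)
The plan is to start from the exact identity for the square loss already established in Lemma \ref{lem:sq-expand} and then to weaken only its first term by a single application of the Cauchy--Schwarz inequality, while leaving the quadratic term completely untouched. Because the square-loss expansion terminates at second order, no truncation error needs to be separately controlled; the only non-sign-definite quantity is the linear term, and that is exactly what Cauchy--Schwarz handles.

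Concretely, Lemma \ref{lem:sq-expand} supplies the exact equality
\[
\el_2(w+\Delta w) - \left[\el_2(w) + \nabla_w\el_2(w)^\top \Delta w\right] = \frac{1}{m}(f_X - Y)^\top \left[\Delta f_X - \nabla_w f_X \Delta w\right] + \frac{1}{2m}\|\Delta f_X\|_2^2.
\]
The first summand on the right is a Euclidean inner product in $\R^m$ between the residual vector $f_X - Y$ and the discrepancy vector $\Delta f_X - \nabla_w f_X \Delta w$, the latter measuring how far the true functional perturbation departs from its weight-space linearisation. I would bound this inner product from above by Cauchy--Schwarz,
\[
(f_X - Y)^\top \left[\Delta f_X - \nabla_w f_X \Delta w\right] \leq \norm{f_X - Y}_2 \cdot \norm{\Delta f_X - \nabla_w f_X \Delta w}_2,
\]
then divide by $m$, and finally add back the unaltered quadratic term $\tfrac{1}{2m}\|\Delta f_X\|_2^2$ to recover exactly the right-hand side of \eqref{eq:sq-major}.

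There is essentially no hard step here. The only point worth flagging explicitly is that this is deliberately a \emph{one-sided} bound: Cauchy--Schwarz upper-bounds a signed inner product (which could itself be negative) by the product of norms, which is precisely what is required to obtain a majorisation in the sense of Definition \ref{def:majorisation}, as opposed to a two-sided error estimate. The quadratic term requires no inequality at all, since the square-loss Hessian with respect to $f_X$ equals $\tfrac{1}{m}\Id$ and is therefore already exactly a scaled squared norm. Hence the result follows directly from Lemma \ref{lem:sq-expand}.
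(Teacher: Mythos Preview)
Your proposal is correct and matches the paper's approach exactly: the paper simply states that the majorisation follows from Lemma~\ref{lem:sq-expand} by an application of the Cauchy--Schwarz inequality, which is precisely what you do (bounding the inner product $(f_X-Y)^\top[\Delta f_X - \nabla_w f_X \Delta w]$ and leaving the quadratic term untouched).
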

It is worth drawing attention to the three important quantities appearing on the right-hand side of Inequality \ref{eq:sq-major}:
\begin{enumerate}
    \item $\norm{f_X-Y}_2$ measures the size of the current misfit of the training sample.
    \item $\norm{\Delta f_X - \nabla_w f_X \Delta w}_2$ measures the degree to which the projected functional perturbation $\Delta f_X$ deviates from its linearisation in $\Delta w$.
    \item $\|\Delta f_X\|_2^2$ measures the size of the projected functional perturbation.
\end{enumerate}

While the current data misfit is usually easy to compute or estimate in a machine learning problem, the latter two quantities are considerably more subtle. This thesis suggests relating these quantities back to weight perturbations via \textit{architectural perturbation bounds}. In turn, this will open the door to architecture aware optimisation algorithms.

\section{First application: Deriving gradient descent}

As a first complete example of the framework proposed in this chapter, this section derives architectural perturbation bounds for linear regression. Combining these bounds with the majorisation of square loss (Lemma \ref{lem:sq-major}) leads back to the simple gradient descent optimisation algorithm. While in practice one might prefer to use less rudimentary means of fitting a linear regressor, the power of the argument outlined here is that it will generalise to deep networks.

A linear regressor is a function $f:\R^d \times \R^d \to \R$ of the form:
\begin{equation}\label{eq:lin-regress}
    f(x;w) \coloneqq w^\top x.
\end{equation}
Suppose one wishes to fit a linear regressor to data $X=\{x_1,...,x_m\}$ by running iterative minimisation of the square loss (Example \ref{ex:sq-loss-projected}). Further, suppose that the data is constrained to the unit hypersphere: $x_1,...,x_m \in \Sph^{d-1}$. Then one may leverage the following architectural perturbation bounds:

\begin{lemma}[Architectural perturbation bounds for linear regression]\label{lem:arch-perturb-linear} Given a set $X$ of $m$ training inputs supported on the hypersphere $\sqrt{d}\cdot\Sph^{d-1}$, the linear regressor of Equation \ref{eq:lin-regress} satisfies:
\begin{align}
    \|\Delta f_X\|_2 &\leq \sqrt{md} \cdot \norm{\Delta w}_2. \label{eq:lin-func-change} \\
    \norm{\Delta f_X - \nabla_w f_X \Delta w}_2 &= 0; \label{eq:lin-lin}
\end{align}
\end{lemma}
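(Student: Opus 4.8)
The plan is to exploit the fact that a linear regressor is, by construction, linear in its weights, so that the functional perturbation coincides exactly with its own linearisation and the remainder term vanishes identically. First I would compute the projected functional perturbation coordinate-by-coordinate. Since $f(x;w) = w^\top x$, the $i$th entry of $f_X(w)$ is $w^\top x_i$, and hence the $i$th entry of $\Delta f_X$ is $(w+\Delta w)^\top x_i - w^\top x_i = \Delta w^\top x_i$. This already exhibits $\Delta f_X$ as an exactly linear function of $\Delta w$ carrying no higher-order remainder.

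To establish the equality \ref{eq:lin-lin}, I would next compute the Jacobian $\nabla_w f_X$. Because $f_X^i(w) = w^\top x_i$, its gradient in $w$ is $x_i$, so $\nabla_w f_X$ is the $m \times d$ matrix whose $i$th row is $x_i^\top$, and therefore $(\nabla_w f_X \,\Delta w)_i = x_i^\top \Delta w = \Delta w^\top x_i$. This matches the $i$th entry of $\Delta f_X$ computed above for every $i$, so $\Delta f_X - \nabla_w f_X \,\Delta w$ is the zero vector and \ref{eq:lin-lin} holds with equality. The conceptual content is simply that a linear model is identical to its own linearisation, so the term measuring deviation from linearity is exactly zero.

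For the bound \ref{eq:lin-func-change}, I would write $\norm{\Delta f_X}_2^2 = \sum_{i=1}^m (\Delta w^\top x_i)^2$ using the coordinate formula above, then apply the Cauchy-Schwarz inequality term-by-term to get $(\Delta w^\top x_i)^2 \leq \norm{\Delta w}_2^2 \cdot \norm{x_i}_2^2$. Since each input lies on the sphere $\sqrt{d}\cdot\Sph^{d-1}$, we have $\norm{x_i}_2^2 = d$, so summing over the $m$ training inputs gives $\norm{\Delta f_X}_2^2 \leq m d \cdot \norm{\Delta w}_2^2$, and taking square roots yields the claimed inequality. There is essentially no obstacle here, as the result is elementary; the only point demanding care is tracking the sphere-radius convention $\norm{x_i}_2 = \sqrt{d}$, which is precisely what produces the factor $\sqrt{md}$ rather than $\sqrt{m}$.
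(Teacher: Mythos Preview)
Your proposal is correct and follows essentially the same approach as the paper: compute $\Delta f_X$ coordinate-wise as $\Delta w^\top x_i$, use Cauchy--Schwarz together with $\norm{x_i}_2^2 = d$ to obtain the bound, and observe that linearity makes $\Delta f_X$ coincide with $\nabla_w f_X\,\Delta w$. The only cosmetic difference is the order in which the two claims are handled.
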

\begin{proof}
By linearity of Equation \ref{eq:lin-regress}, the projected functional perturbation is:
\begin{equation*}
    \Delta f_X = (\Delta w^\top x_1, ..., \Delta w^\top x_m).
\end{equation*}
Inequality \ref{eq:lin-func-change} follows from an application of the Cauchy-Schwarz inequality:
\begin{equation*}
    \norm{\Delta f_X}_2^2 = \sum_{i=1}^m (\Delta w^\top x_i)^2 \leq \sum_{i=1}^m \norm{\Delta w}_2^2\cdot\norm{x_i}_2^2 = md \cdot \norm{\Delta w}_2^2,
\end{equation*}
where the last equality follows since $x_i \in \sqrt{d}\cdot \Sph^{d-1}$. Finally, Equation \ref{eq:lin-lin} follows by observing that:
\begin{equation*}
    \Delta f_X - \nabla_w f_X \Delta w= (\Delta w^\top x_1, ..., \Delta w^\top x_m) - (x_1,...,x_m)\cdot\Delta w = 0.
\end{equation*}
This completes the proof.
\end{proof}

By combining these architectural perturbation bounds (Lemma \ref{lem:arch-perturb-linear}) with the functional majorisation of the square loss (Lemma \ref{lem:sq-major}), one obtains:
\begin{theorem}[Majorisation of the square loss for linear regression]
\begin{align}
        &\left|\el_2(w+\Delta w) -\left[ \el_2(w) + \nabla_w\el_2(w)^\top \Delta w\right]\right| \leq \frac{d}{2}\cdot \|\Delta w\|_2^2.
    \end{align}
\end{theorem}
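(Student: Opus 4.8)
The plan is to combine the functional majorisation of the square loss (Lemma \ref{lem:sq-major}, or more precisely its exact counterpart Lemma \ref{lem:sq-expand}) with the architectural perturbation bounds for linear regression (Lemma \ref{lem:arch-perturb-linear}), exactly as prescribed by Step 3 of the chapter's framework. The observation that makes the argument clean is that for a linear regressor the functional perturbation is \emph{exactly} linear in the weight perturbation, so the linearisation-error term in the expansion vanishes identically and the square-loss expansion collapses to a single term.

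Concretely, I would start from the exact expansion of Lemma \ref{lem:sq-expand},
\begin{equation*}
\el_2(w+\Delta w) - \left[\el_2(w) + \nabla_w\el_2(w)^\top \Delta w\right] = \frac{1}{m}(f_X - Y)^\top\left[\Delta f_X - \nabla_w f_X \Delta w\right] + \frac{1}{2m}\|\Delta f_X\|_2^2.
\end{equation*}
By Equation \ref{eq:lin-lin} the bracketed vector $\Delta f_X - \nabla_w f_X \Delta w$ is identically zero, so the entire first term on the right-hand side disappears, leaving
\begin{equation*}
\el_2(w+\Delta w) - \left[\el_2(w) + \nabla_w\el_2(w)^\top \Delta w\right] = \frac{1}{2m}\|\Delta f_X\|_2^2,
\end{equation*}
which is manifestly non-negative. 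This is the one point that deserves genuine care: the functional majorisation of Lemma \ref{lem:sq-major} on its own supplies only a one-sided upper bound, whereas the theorem asserts a two-sided (absolute-value) bound. Passing through the \emph{exact} expansion resolves this, since the collapsed quantity is a single non-negative term and hence equals its own absolute value without any sign ambiguity.

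It then remains only to control $\|\Delta f_X\|_2^2$. Invoking Equation \ref{eq:lin-func-change}, namely $\|\Delta f_X\|_2 \leq \sqrt{md}\cdot\|\Delta w\|_2$, and squaring gives $\|\Delta f_X\|_2^2 \leq md\cdot\|\Delta w\|_2^2$. Substituting into the identity above yields
\begin{equation*}
\left|\el_2(w+\Delta w) - \left[\el_2(w) + \nabla_w\el_2(w)^\top \Delta w\right]\right| = \frac{1}{2m}\|\Delta f_X\|_2^2 \leq \frac{1}{2m}\cdot md\cdot\|\Delta w\|_2^2 = \frac{d}{2}\cdot\|\Delta w\|_2^2,
\end{equation*}
which is precisely the claimed bound. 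I do not anticipate any real obstacle beyond the reconciliation of the one-sided majorisation with the two-sided statement noted above; everything else is a direct substitution of the two cited lemmas followed by the arithmetic simplification $\tfrac{1}{2m}\cdot md = \tfrac{d}{2}$.
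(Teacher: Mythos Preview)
Your proposal is correct and follows essentially the same approach as the paper, which simply states that the theorem is obtained ``by combining these architectural perturbation bounds (Lemma \ref{lem:arch-perturb-linear}) with the functional majorisation of the square loss (Lemma \ref{lem:sq-major}).'' You are in fact slightly more careful than the paper: you correctly observe that the two-sided absolute-value statement requires passing through the exact expansion of Lemma \ref{lem:sq-expand} (so that the surviving term $\tfrac{1}{2m}\|\Delta f_X\|_2^2$ is manifestly non-negative), whereas the paper's cited Lemma \ref{lem:sq-major} alone would only yield the one-sided inequality.
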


Then, by Theorem \ref{thm:gd}, the optimisation algorithm that minimises this majorisation with respect to $\Delta w$ is gradient descent with step-size $1/d$.

\section{Second application: Deriving the Gauss-Newton method}
\label{sec:ngd}

This section shows that the classic \textit{Gauss-Newton method} \citep{gauss-newton}, which is closely related to \textit{natural gradient descent} \citep{revisiting-ngd,amari}, may be derived under the proposed framework in a straightforward manner. In particular, the Gauss-Newton method is the minimiser of the functional majorisation of square loss (Lemma \ref{lem:sq-major}) under the assumption that functional perturbations are linear in weight perturbations:

\begin{assumption}[Functional perturbations are linear]\label{ass:func-linear}
The functional perturbation $\Delta f_X$ corresponding to a weight perturbation $\Delta w$ is given by:
\begin{equation}
    \Delta f_X = \nabla_w f_X \Delta w.
\end{equation}
\end{assumption}
In words: Assumption \ref{ass:func-linear} amounts to approximating the functional perturbation $\Delta f_X$ by its Taylor series in weight perturbations $\Delta w$ truncated to first order. When combined with the following definition, this assumption leads to very simple architectural perturbation bounds:

\begin{definition}[Squared Jacobian]\label{def:sq-jacobian} Consider a machine learning model $f:\mathcal{X}\times\mathcal{W}\to\R$ and a set of $m$ training inputs $X=\{x_1,...,x_m\}$. The squared Jacobian $F_X$ is given by:
\begin{equation}
    F_X \coloneqq \frac{1}{m}\cdot\nabla_w f_X^\top \nabla_w f_X.
\end{equation}
To make Defintion \ref{def:sq-jacobian} more explicit, for a $d$-dimensional weight space $\mathcal{W}=\R^d$, the squared Jacobian is the $d\times d$ matrix whose $(ik)$th entry is given by:
\begin{equation}
    F_X^{ik} = \frac{1}{m}\cdot \sum_{j=1}^m \frac{\partial f(x_j;w)}{\partial w_i}\cdot\frac{\partial f(x_j;w)}{\partial w_k}.
\end{equation}
\end{definition}
In the literature on natural gradient descent, the matrix $F_X$ is connected to the \textit{Fisher information matrix} of information geometry \citep{info-geom}. 

With Definition \ref{def:sq-jacobian} in hand, the following lemma is immediate:
\begin{lemma}[Architectural perturbation bounds for linear functional perturbations]\label{lem:arch-perturb-linear-functional}
Under Assumption \ref{ass:func-linear}, given a set of $m$ inputs $X$, the following hold:
\begin{align}
    \|\Delta f_X\|_2 &= \sqrt{m}\cdot\sqrt{\Delta w^\top F_X \Delta w}; \label{eq:ngd-func-change} \\
    \norm{\Delta f_X - \nabla_w f_X \Delta w}_2 &= 0. \label{eq:ngd-lin}
\end{align}
\end{lemma}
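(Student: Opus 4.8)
The plan is to dispatch both claims by direct substitution, since Assumption \ref{ass:func-linear} already does most of the work. For the second identity, I would simply observe that Assumption \ref{ass:func-linear} \emph{defines} $\Delta f_X = \nabla_w f_X \Delta w$, so the residual $\Delta f_X - \nabla_w f_X \Delta w$ is identically the zero vector in $\R^m$, and hence its Euclidean norm vanishes. There is nothing to compute here beyond recognising that the first-order linearisation is exact under the stated assumption.

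For the first identity, I would expand the squared Euclidean norm. Substituting the assumption gives $\normsq{\Delta f_X} = \normsq{\nabla_w f_X \Delta w} = (\nabla_w f_X \Delta w)^\top (\nabla_w f_X \Delta w) = \Delta w^\top \nabla_w f_X^\top \nabla_w f_X \, \Delta w$. The key step is then to recognise the middle factor as $m$ times the squared Jacobian: by Definition \ref{def:sq-jacobian} one has $F_X = \tfrac{1}{m}\nabla_w f_X^\top \nabla_w f_X$, so that $\nabla_w f_X^\top \nabla_w f_X = m\,F_X$. This yields $\normsq{\Delta f_X} = m\cdot \Delta w^\top F_X \Delta w$, and taking the nonnegative square root of both sides produces the claimed identity $\norm{\Delta f_X} = \sqrt{m}\cdot\sqrt{\Delta w^\top F_X \Delta w}$.

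Since both steps are exact algebraic manipulations, there is no genuine obstacle; the lemma is essentially a restatement of the definition of $F_X$ under the linearisation assumption, which is why the text flags it as immediate. The only point worth a moment's care is to confirm that the quadratic form $\Delta w^\top F_X \Delta w$ is nonnegative so that its square root is well-defined. This follows at once from the Gram-matrix structure of the squared Jacobian, since $\Delta w^\top F_X \Delta w = \tfrac{1}{m}\normsq{\nabla_w f_X \Delta w} \geq 0$.
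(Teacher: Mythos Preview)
Your proof is correct and matches the paper's approach exactly: the paper states the lemma as \emph{immediate} from Definition~\ref{def:sq-jacobian} and Assumption~\ref{ass:func-linear} without giving any further argument, and your direct substitution is precisely the computation that justifies this. The extra remark that $\Delta w^\top F_X \Delta w \geq 0$ (so the square root is well-defined) is a nice touch that the paper leaves implicit.
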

Referring to the results in Lemma \ref{lem:arch-perturb-linear-functional} as architectural perturbation \textit{bounds} is technically a misnomer since these results are actually \textit{equalities}. The thesis persists with this misnomer to emphasise the connection to Lemmas \ref{lem:arch-perturb-linear} and \ref{lem:deep_perturbation_bounds}. These architectural perturbation bounds lead to the following majorisation:

\begin{lemma}[Majorisation of square loss for linear functional perturbations]\label{lem:sq-major-ngd} 
Under Assumption \ref{ass:func-linear}, the square loss (Proposition \ref{eq:sq-loss-projected}) admits majorisation:
    \begin{align}\label{eq:sq-major-ngd} &\el_2(w+\Delta w) -\left[ \el_2(w) + \nabla_w\el_2(w)^\top \Delta w\right] \leq \frac{1}{2} \Delta w^\top F_X \Delta w.
    \end{align}
\end{lemma}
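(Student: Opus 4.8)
The plan is to obtain the claimed bound by directly composing the two preceding lemmas: the functional majorisation of square loss (Lemma \ref{lem:sq-major}) and the architectural perturbation bounds under linearity (Lemma \ref{lem:arch-perturb-linear-functional}). This mirrors Step 3 of the chapter's framework---substitute architectural perturbation bounds into the functional majorisation---and should require nothing beyond algebraic simplification.

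First I would invoke Lemma \ref{lem:sq-major}, which already supplies the inequality
\begin{equation*}
\el_2(w+\Delta w) -\left[ \el_2(w) + \nabla_w\el_2(w)^\top \Delta w\right] \leq \frac{1}{m}\cdot \norm{f_X-Y}_2\cdot\norm{\Delta f_X - \nabla_w f_X \Delta w}_2 + \frac{1}{2m} \cdot \norm{\Delta f_X}_2^2.
\end{equation*}
The whole task then reduces to controlling the two terms on the right using Assumption \ref{ass:func-linear}.

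Next I would substitute the two equalities of Lemma \ref{lem:arch-perturb-linear-functional}. The linearisation-error term vanishes exactly, since Equation \ref{eq:ngd-lin} gives $\norm{\Delta f_X - \nabla_w f_X \Delta w}_2 = 0$; this kills the entire first summand regardless of the data misfit $\norm{f_X-Y}_2$. For the remaining quadratic term, Equation \ref{eq:ngd-func-change} gives $\norm{\Delta f_X}_2^2 = m \cdot \Delta w^\top F_X \Delta w$, so that $\frac{1}{2m}\norm{\Delta f_X}_2^2 = \half \, \Delta w^\top F_X \Delta w$, with the factor of $m$ cancelling cleanly against the $1/m$ weighting inherited from the square loss normalisation.

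I do not anticipate a genuine obstacle here: the content of the lemma lives entirely in the two cited results, and the only subtlety worth flagging is conceptual rather than technical---namely that Assumption \ref{ass:func-linear} is precisely what forces the first-order linearisation error to zero, turning the general majorisation into an exact quadratic form in $\Delta w$ governed by the squared Jacobian $F_X$. Combining the two substitutions yields $\el_2(w+\Delta w) - [\el_2(w) + \nabla_w\el_2(w)^\top \Delta w] \leq \half \, \Delta w^\top F_X \Delta w$, which is the desired majorisation and completes the argument.
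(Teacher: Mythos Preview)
Your proposal is correct and takes essentially the same approach as the paper: substitute the results of Lemma~\ref{lem:arch-perturb-linear-functional} into Lemma~\ref{lem:sq-major}. The paper's proof is a single sentence to this effect; you have simply spelled out the substitution in full.
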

\begin{proof}
Substitute the results of Lemma \ref{lem:arch-perturb-linear-functional} into Lemma \ref{lem:sq-major}.
\end{proof}

Finally, the Gauss-Newton method is obtained:
\begin{theorem}[Gauss-Newton method] Under Assumption \ref{ass:func-linear}, Lemma \ref{lem:sq-major-ngd} produced a majorisation of square loss given by Inequality \ref{eq:sq-major-ngd}. The minimiser of this majorisation is as follows:
\begin{equation}
    \argmin_{\Delta w}\left[\el_2(w) + \nabla_w \el_2(w)^\top\Delta w + \frac{1}{2}\cdot\Delta w^\top F_X \Delta w\right] = - F_X^{-1}\cdot\nabla_w \el_2(w).
\end{equation}
\end{theorem}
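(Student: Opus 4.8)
The plan is to recognise the minimand as a convex quadratic form in the perturbation $\Delta w$ and to locate its minimiser by the first-order optimality condition, exactly as in the proof of Theorem~\ref{thm:gd} but with the isotropic penalty $\tfrac{\lambda}{2}\norm{\Delta w}_2^2$ replaced by the anisotropic form $\tfrac12\Delta w^\top F_X \Delta w$. Since the objective is a degree-two polynomial in $\Delta w$, setting its gradient to zero will pin down the answer outright, provided the quadratic term is nondegenerate.

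First I would differentiate the minimand with respect to $\Delta w$, holding $w$ fixed (so that $\el_2(w)$, $\nabla_w\el_2(w)$ and $F_X$ are all constants). The constant term contributes nothing; the linear term $\nabla_w\el_2(w)^\top\Delta w$ contributes $\nabla_w\el_2(w)$; and, using that $F_X$ is symmetric by Definition~\ref{def:sq-jacobian}, the quadratic term $\tfrac12\Delta w^\top F_X\Delta w$ contributes $F_X\Delta w$. The stationarity condition is therefore $\nabla_w\el_2(w) + F_X\Delta w = 0$, which I would solve as $\Delta w_* = -F_X^{-1}\nabla_w\el_2(w)$, invoking the invertibility of $F_X$ that is already implicit in writing $F_X^{-1}$. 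To certify that this stationary point is the global minimiser rather than a saddle, I would observe that $F_X = \tfrac1m\,\nabla_w f_X^\top \nabla_w f_X$ is positive semidefinite as a Gram matrix, and that its invertibility upgrades this to positive definiteness; hence the minimand is strictly convex and its unique critical point is the global minimum.

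The only real subtlety---hardly an obstacle for the calculus itself---is the invertibility of the squared Jacobian. In general $F_X$ is merely positive semidefinite, and is positive definite only when $\nabla_w f_X$ has full column rank, which requires at least as many projected training outputs as weights. In the degenerate, over-parameterised regime one would instead write $\Delta w_* = -F_X^{\pinv}\,\nabla_w\el_2(w)$ and restrict the perturbation to the row space of $\nabla_w f_X$; the statement as phrased tacitly assumes the nondegenerate case, so I would simply flag this and proceed under the stated invertibility.
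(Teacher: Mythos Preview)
Your proposal is correct and follows essentially the same approach as the paper: set the gradient of the minimand with respect to $\Delta w$ to zero, obtaining $\nabla_w\el_2(w)+F_X\Delta w=0$, and solve for $\Delta w$. Your added remarks on strict convexity and the invertibility caveat for $F_X$ go beyond the paper's terse proof but are perfectly sound embellishments.
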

\begin{proof}
    Set to zero the derivative of the minimand on the left-hand side with respect to $\Delta w$: $\nabla_w \el_2(w) + F_X\Delta w = 0$. Solve for $\Delta w$ to yield the result.
\end{proof}

In summary: this chapter developed a framework for deriving optimisation algorithms for generic learning problems. The framework involves minimising a functional majorisation of the loss. Architectural perturbation bounds are employed to relate functional perturbation to weight perturbations. The chapter concluded by demonstrating that both gradient descent and the Gauss-Newton method may be re-derived under this framework. The next chapter will apply this framework to deep neural networks.

\printbibliography[heading=subbibliography]
\end{refsection}
\begin{refsection}

\chapter{Majorise-Minimise for Deep Networks}
\label{chap:nn-maj-min}

\begin{tcolorbox}
This chapter derives architectural perturbation bounds for deep linear networks. When combined with the functional majorisation of a loss function, the bounds yield novel \textit{architecture aware} optimisation methods.
\end{tcolorbox}

Chapter \ref{chap:maj-min} developed a framework for deriving optimisation algorithms for generic machine learning problems. In essence, the framework describes a majorise-minimise meta-algorithm \citep{mm} for composite optimisation problems that apply an error measure to a function projected on to data.

The present chapter specialises this framework to machine learning problems that involve deep neural networks. Again, the framework works in three steps:

\begin{enumerate}[label=Step \arabic*:, leftmargin=*, font=\sffamily]
    \item Majorise a series expansion of the loss function in perturbations to the network output projected on to the training set.
    \item Derive architectural perturbation bounds that express the sensitivity of the network output to weight perturbations. The form of these bounds depends on details such as the width and depth of the network.
    \item Substitute the architectural perturbation bounds into the majorisation of the loss and minimise to obtain an optimisation algorithm.
\end{enumerate}

\subsection{A note on related work}

\citet{cohen2021gradient} express concern that majorise-minimise style optimisation theories (or ones based on Majorisation \ref{eq:gd-major} in particular) may be too pessimistic for deep learning. On the contrary, this chapter obtains various useful architectural scaling rules for learning rate from a majorise-minimise analysis.

While the depth scaling relation presented in this chapter is an original contribution of the thesis author and collaborators \citep{my-fromage}, the width scaling relation was first derived separately via the \textit{tensor programs} framework \citep{yang2021tuning}. That framework amounts to a perturbation analysis of neural networks with random weights operating in the asymptotic limit of infinite width. The width scaling relation in this chapter resulted from discussions with Greg Yang about how to reconcile the tensor programs framework with the non-asymptotic, non-random framework presented in this thesis. The experiments in Figures \ref{fig:width-depth} and \ref{fig:width-depth-lesion} were run as part of that collaboration.

\section{The deep linear network}

This chapter deals mainly with deep networks with identity nonlinearity $\phi = \Id$.
\begin{definition}[Deep linear network]\label{def:dln}
A \textit{deep linear network} $f$ of depth $L$ maps an input $x\in\R^{d_0}$ to an output $f(x;w) \in \R^{d_L}$ via $L$ matrix multiplications:
\begin{equation}\label{eq:dln}
f(x; w) \coloneqq W_L W_{L - 1} \dots W_1 \cdot x.
\end{equation}
In this expression, $w$ denotes the tuple $w = (W_1,...,W_L)$ for $W_l\in \R^{d_l\times d_{l-1}}$.
\end{definition}
This choice of \textit{non-nonlinearity} greatly simplifies the analysis. Although architectural perturbation bounds have been obtained for more general nonlinearities \citep{my-fromage}, the results obtained for deep linear networks in this chapter were already found to transfer experimentally to deep relu networks (Figure \ref{fig:width-depth}). It is important to note that the optimisation landscape of the deep linear network is still non-convex due to the product of weight matrices.

The following definition will aid the analysis of deep linear networks:
\begin{definition}[Output scale of the deep linear network]\label{def:output-scale}
The \textit{output scale} of a deep linear network $f$ with weight matrices $w = (W_1,...,W_L)$ is given by:
\begin{equation}
F(w) \coloneqq \sqrt{d_0}\cdot\prod_{l=1}^L \norm{W_l}_*.
\end{equation}
\end{definition}

The output scale provides a simple bound on the magnitude of network outputs:
\begin{lemma}[Output bound]
\label{lem:lipschitz} For a deep linear network $f(\cdot,w)$ and all hyperspherically constrained inputs $x\in\sqrt{d_0}\cdot\Sph^{d_0-1}$, the output magnitude obeys:
\begin{equation}
    \Vert f(x; w) \Vert_2 \leq F(w).
\end{equation}
\end{lemma}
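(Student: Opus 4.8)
The plan is to invoke the defining property of the operator norm and peel off the weight matrices one layer at a time. Recall that $\norm{W}_*$ is characterised by the bound $\norm{Wv}_2 \leq \norm{W}_* \cdot \norm{v}_2$, valid for every matrix $W$ and every vector $v$ of compatible dimension. Since the deep linear network of Definition \ref{def:dln} is simply the iterated product $f(x;w) = W_L W_{L-1}\dots W_1 \cdot x$, I would apply this bound $L$ times in succession, first stripping off $W_L$, then $W_{L-1}$, and so on down to $W_1$.

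Concretely, I would write out the telescoping inequality
\begin{equation*}
    \norm{f(x;w)}_2 = \norm{W_L W_{L-1}\dots W_1 \, x}_2 \leq \norm{W_L}_* \cdot \norm{W_{L-1}\dots W_1 \, x}_2 \leq \dots \leq \left(\prod_{l=1}^L \norm{W_l}_*\right) \cdot \norm{x}_2,
\end{equation*}
where each inequality is a single application of the operator-norm bound to the outermost remaining matrix. A fully rigorous version would phrase this as an induction on the number of layers, but for a linear network the repeated application is transparent enough to state directly.

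The only remaining ingredient is to control $\norm{x}_2$. Because the inputs are constrained to the hypersphere $\sqrt{d_0}\cdot\Sph^{d_0-1}$ of radius $\sqrt{d_0}$, we have exactly $\norm{x}_2 = \sqrt{d_0}$. Substituting this into the inequality chain yields $\norm{f(x;w)}_2 \leq \sqrt{d_0}\cdot\prod_{l=1}^L \norm{W_l}_*$, which is precisely $F(w)$ by Definition \ref{def:output-scale}, completing the argument.

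I do not anticipate any genuine obstacle here: the result is essentially a direct consequence of submultiplicativity of the operator norm together with the normalisation of the input. The only point requiring mild care is bookkeeping in the telescoping step---ensuring the matrices are stripped in the correct order and that the product of operator norms accumulates exactly once per layer---but this is routine rather than substantive.
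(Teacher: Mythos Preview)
Your proposal is correct and matches the paper's own proof essentially verbatim: recursively extract operator norms from the matrix product to obtain $\norm{f(x;w)}_2 \leq \prod_{l=1}^L \norm{W_l}_* \cdot \norm{x}_2$, then substitute $\norm{x}_2 = \sqrt{d_0}$.
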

\begin{proof} Recursively extract operator norms from the matrix product:
    \begin{align*}
        \norm{f(x;w)}_2 &= \norm{W_L W_{L - 1} \dots W_1 \cdot x}_2 \\&\leq \norm{W_L}_* \cdot \norm{W_{L - 1} \dots W_1 \cdot x}_2 
        \leq ... \leq \prod_{l=1}^L \norm{W_l}_* \cdot \norm{x}_2.
    \end{align*}
    Finally, substituting $\norm{x}_2 = \sqrt{d_0}$ completes the proof.
\end{proof}

\section{Architectural perturbation bounds for deep linear networks}

The deep linear network admits the following architectural perturbation bounds:
\begin{lemma}[Architectural perturbation bounds for the deep linear network]
\label{lem:deep_perturbation_bounds} Consider perturbing the weights of a deep linear network $f:\R^{d_0}\times\mathcal{W}\to\R$ from $w=(W_1,...,W_L)\in\mathcal{W}$ to $w+\Delta w=(W_1+\Delta W_1,...,W_L+\Delta W_L)\in\mathcal{W}$. For any collection of $m$ inputs $X\in (\sqrt{d_0}\cdot\Sph^{d_0-1})^m$, the following bounds hold:
\begin{align}
    \frac{\norm{\Delta f_X}_2}{\sqrt{m}} &\leq F(w) \cdot \left[ \prod_{l = 1}^L \left( 1 + \frac{\Vert \Delta W_l \Vert_{*}}{\Vert W_l \Vert_{*}}\right)  - 1 \right];\label{eq:drt}\\
    \frac{\norm{\Delta f_X - \nabla_w f_X \Delta w}_2}{\sqrt{m}} &\leq F(w) \cdot \left[ \prod_{l = 1}^L \left( 1 + \frac{\Vert \Delta W_l \Vert_{*}}{\Vert W_l \Vert_{*}}\right)  - 1 - \sum_{l = 1}^L \frac{\Vert \Delta W_l \Vert_{*}}{\Vert W_l \Vert_{*}} \right].\label{eq:jrt}
\end{align}
\end{lemma}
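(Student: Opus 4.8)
The plan is to expand the perturbed matrix product by full distributivity and organise the resulting $2^L$ terms according to how many perturbed factors each contains. Writing $f(x;w+\Delta w) = (W_L+\Delta W_L)\cdots(W_1+\Delta W_1)\,x$ and multiplying out, each summand is indexed by a subset $S$ of the layer indices $\{1,\ldots,L\}$, recording the layers at which the perturbed factor $\Delta W_l$ is selected (with $W_l$ taken for $l\notin S$). The empty subset reproduces the base output $f(x;w)$, so the functional perturbation $\Delta f(x)$ is exactly the sum over nonempty $S$. Crucially, the first-order term $\nabla_w f_X \Delta w$ coincides with the sum over the singletons $S=\{l\}$, namely $\sum_{l} W_L\cdots W_{l+1}\,\Delta W_l\,W_{l-1}\cdots W_1\,x$, which I would verify by differentiating the product via the chain rule and matching terms. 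Consequently $\Delta f(x)-(\nabla_w f\,\Delta w)(x)$ is precisely the sum over all $S$ with $\abs{S}\geq 2$.

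Next I would bound each subset term in Euclidean norm. By submultiplicativity of the operator norm together with $\norm{Ax}_2\leq\norm{A}_*\norm{x}_2$, the term indexed by $S$ has norm at most $\big(\prod_{l\in S}\norm{\Delta W_l}_*\big)\big(\prod_{l\notin S}\norm{W_l}_*\big)\norm{x}_2$. Summing these bounds and invoking the elementary factorisation $\prod_{l=1}^L(\norm{W_l}_*+\norm{\Delta W_l}_*)=\sum_{S}\prod_{l\in S}\norm{\Delta W_l}_*\prod_{l\notin S}\norm{W_l}_*$, the sum over nonempty $S$ collapses to $\prod_l(\norm{W_l}_*+\norm{\Delta W_l}_*)-\prod_l\norm{W_l}_*$, while the sum over $\abs{S}\geq 2$ additionally subtracts the singleton contribution $\sum_l\norm{\Delta W_l}_*\prod_{k\neq l}\norm{W_k}_*$.

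Then I would factor $\prod_l\norm{W_l}_*$ out of both expressions, which converts the ratios into the products $\prod_l(1+\norm{\Delta W_l}_*/\norm{W_l}_*)$ appearing on the right-hand sides, and substitute $\norm{x}_2=\sqrt{d_0}$ from the hyperspherical constraint so that $\sqrt{d_0}\prod_l\norm{W_l}_* = F(w)$ by Definition \ref{def:output-scale}. This yields the pointwise estimates $\abs{\Delta f(x)}\leq F(w)\cdot[\cdots]$ and $\abs{\Delta f(x)-(\nabla_w f\,\Delta w)(x)}\leq F(w)\cdot[\cdots]$ for every hyperspherical input. Finally, since the output is scalar, $\norm{\Delta f_X}_2^2=\sum_{i=1}^m\abs{\Delta f(x_i)}^2\leq m\,(F(w)\cdot[\cdots])^2$, and the analogous inequality holds for the deviation-from-linearisation vector; taking square roots and dividing by $\sqrt{m}$ delivers Inequalities \ref{eq:drt} and \ref{eq:jrt}.

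The main obstacle I anticipate is the combinatorial bookkeeping of the subset expansion: confirming that the singletons reproduce exactly $\nabla_w f_X \Delta w$ with no cross terms lost, and tracking which combinatorial sum governs each bound (all nonempty subsets for the first, subsets of size at least two for the second). The norm estimates themselves are routine applications of submultiplicativity; the real care lies in the algebraic identity relating the subset sum to $\prod_l(\norm{W_l}_*+\norm{\Delta W_l}_*)$ and in the factoring step that isolates the dimensionless ratios $\norm{\Delta W_l}_*/\norm{W_l}_*$.
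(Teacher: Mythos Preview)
Your proposal is correct and takes a genuinely different route from the paper. The paper proceeds by induction on the depth $L$: it peels off the outermost layer, writes $(W_L+\Delta W_L)[\cdots] + \Delta W_L W_{L-1}\cdots W_1 x$, applies the inductive hypothesis to the bracketed $(L-1)$-layer perturbation, and then regroups to recover the product formula. Your approach instead expands the full product $(W_L+\Delta W_L)\cdots(W_1+\Delta W_1)$ in one shot into $2^L$ subset-indexed terms and exploits the binomial-type identity $\prod_l(a_l+b_l)=\sum_S\prod_{l\in S}b_l\prod_{l\notin S}a_l$ to collapse the norm bounds. The combinatorial route makes the appearance of $\prod_l(1+\Vert\Delta W_l\Vert_*/\Vert W_l\Vert_*)$ completely transparent---it is literally the generating function for the subset sum---and it renders the identification of the linearisation with the singleton terms explicit rather than buried in the inductive algebra. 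The inductive proof, by contrast, avoids writing down all $2^L$ terms and is more mechanical to execute, at the cost of a somewhat opaque regrouping step. Both arguments rest on the same two ingredients (submultiplicativity of $\Vert\cdot\Vert_*$ and $\Vert x\Vert_2=\sqrt{d_0}$), and both reduce the $m$-input case to a per-input bound; your bookkeeping concern about the singletons matching $\nabla_w f_X\Delta w$ is well-placed but routine, since the product rule for matrix products gives exactly $\sum_l W_L\cdots W_{l+1}\Delta W_l W_{l-1}\cdots W_1 x$.
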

These architectural perturbation bounds involve a product over the relative size of the perturbation to each network layer, reflecting the product structure of the network itself. At large depth, the bounds are roughly exponential in the layerwise perturbation size.

While the best way to understand the results is by working out a few examples (for two, three and four layer networks, say) a formal proof is now given.

\begin{proof}[Proof of Lemma \ref{lem:deep_perturbation_bounds}] The result is shown by induction over depth $L$ for a network with multiple outputs $d_L>1$ and a single input $X=\{x\}$. The result for multiple inputs $m>1$ and a single output $d_L=1$ is then immediate.

For the base case $L=1$, the relevant network to consider is given by $f(x;w) = W_1\cdot x.$ Observe that $\nabla_w f_{\{x\}} \Delta w = \Delta W_1 \cdot x$, and also:
\begin{align*}
    \Delta f_{\{x\}} &\coloneqq f(x;w+\Delta w) - f(x;w) = (W_1 + \Delta W_1)\cdot x - W_1\cdot x = \Delta W_1 \cdot x.
\end{align*}
The base case is established by noting $\norm{\Delta f_{\{x\}} - \nabla_w f_{\{x\}} \Delta w}_2=0$, and also:
\begin{align*}
\norm{\Delta f_{\{x\}}}_2 &\leq \norm{\Delta W_1}_* \cdot \norm{x}_2 = \norm{\Delta W_1}_* \cdot \sqrt{d_0}.
\end{align*}

For the inductive step, the relevant network is given by $f(x;w) = W_L\mydots W_1\cdot x$. To tackle Inequality \ref{eq:drt}, observe that:
\begin{align*}
    &\norm{\Delta f_{\{x\}}}_2 \coloneqq \norm{f(x;w+\Delta w) - f(x;w)}_2\\
    & = \norm{(W_L+\Delta W_L) \mydots (W_1+\Delta W_1)\cdot x - W_L \mydots W_1\cdot x}_2\\
    & = \|(W_L+\Delta W_l)\cdot  \left[(W_{L-1}+\Delta W_{L-1}) \mydots (W_1+\Delta W_1)\cdot x - W_{L-1} \mydots W_1\cdot x\right] \\
    & \qquad + \Delta W_L W_{L-1} \mydots W_1\cdot x \|_2 \\
    &\leq (\norm{W_L}_* + \norm{\Delta W_L}_*)\cdot\norm{(W_{L-1}+\Delta W_{L-1}) \mydots (W_1+\Delta W_1) x - W_{L-1} \mydots W_1x}_2 \\
    & \qquad + \norm{\Delta W_L}_*\cdot \norm{W_{L-1}}_* \cdot\mydots\cdot \norm{W_1}_*\cdot\sqrt{d_0},
\end{align*}
where the last line follows by several applications of the triangle inequality and the operator norm bound. Then, by the inductive hypothesis:
\begin{align*}
    &\norm{\Delta f_{\{x\}}}_2 \leq  (\norm{W_L}_* + \norm{\Delta W_L}_*) \cdot \frac{F(w)}{\norm{W_L}_*} \cdot \left[ \prod_{l=1}^{L-1} \left( 1 + \frac{\Vert \Delta W_l \Vert_{*}}{\Vert W_l \Vert_{*}}\right)  - 1 \right]\\
    &\qquad\qquad\qquad\qquad+ \norm{\Delta W_L}_* \cdot \norm{W_{L-1}}_* \cdot\mydots\cdot \norm{W_1}_*\cdot \sqrt{d_0} \\
    &\qquad= \left(1 + \frac{\norm{\Delta W_L}_*}{\norm{W_L}_*}\right) \cdot F(w) \cdot \left[ \prod_{l=1}^{L-1} \left( 1 + \frac{\Vert \Delta W_l \Vert_{*}}{\Vert W_l \Vert_{*}}\right)  - 1 \right]+ \frac{\norm{\Delta W_L}_*}{\norm{W_L}_*}\cdot F(w)\\
    &\qquad= F(w)\cdot \left[ \prod_{l=1}^{L} \left( 1 + \frac{\Vert \Delta W_l \Vert_{*}}{\Vert W_l \Vert_{*}}\right)  - 1 \right],
\end{align*}
which establishes Inequality \ref{eq:drt}. Next, to tackle Inequality \ref{eq:jrt}, observe that:
\begin{align*}
    &\norm{\Delta f_{\{x\}} - \nabla_w f_{\{x\}}\Delta w}_2 \coloneqq \norm{f(x;w+\Delta w) - f(x;w)-\nabla_w f_{\{x\}}\Delta w}_2\\
    & = \big\|(W_L+\Delta W_L) \mydots (W_1+\Delta W_1)\cdot x - W_L \mydots W_1\cdot x \\
    &\qquad - \sum_{l=1}^L W_L\mydots W_{l+1} \Delta W_l W_{l-1}\mydots W_1 \cdot x \big\|_2 \\
    & = \big\|(W_L+\Delta W_L)\cdot\big[ (W_{L-1}+\Delta W_{L-1}) \mydots (W_1+\Delta W_1)\cdot x - W_{L-1} \mydots W_1\cdot x \\
    & \qquad - \sum_{l=1}^{L-1} W_{L-1}\mydots W_{l+1} \Delta W_l W_{l-1}\mydots W_1 \cdot x \big] \\ &\qquad +\Delta W_L \sum_{l=1}^{L-1} W_{L-1}\mydots W_{l+1} \Delta W_l W_{l-1}\mydots W_1 \cdot x \big\|_2 \\
    & \leq (\norm{W_L}_*+\norm{\Delta W_L}_*)\cdot\big\|(W_{L-1}+\Delta W_{L-1}) \mydots (W_1+\Delta W_1)x - W_{L-1} \mydots W_1 x \\
    & \qquad - \sum_{l=1}^{L-1} W_{L-1}\mydots W_{l+1} \Delta W_l W_{l-1}\mydots W_1 \cdot x \big\|_2 \\ &\qquad +\norm{\Delta W_L}_* \cdot \sum_{l=1}^{L-1} \norm{W_{L-1}}_*\mydots \norm{W_{l+1}}_* \norm{\Delta W_l}_* \norm{W_{l-1}}_*\mydots \norm{W_1}_* \cdot \sqrt{d_0},
\end{align*}
where the last line follows by several applications of the triangle inequality and the operator norm bound. Then, by the inductive hypothesis:
\begin{align*}
&\norm{\Delta f_{\{x\}} - \nabla_w f_{\{x\}}\Delta w}_2\\
& \leq (\norm{W_L}_*+\norm{\Delta W_L}_*)\cdot
    \frac{F(w)}{\norm{W_L}_*} \cdot \left[ \prod_{l=1}^{L-1} \left( 1 + \frac{\Vert \Delta W_l \Vert_{*}}{\Vert W_l \Vert_{*}}\right)  - 1 - \sum_{l=1}^{L-1} \frac{\Vert \Delta W_l \Vert_{*}}{\Vert W_l \Vert_{*}} \right] \\
&\qquad+ \norm{\Delta W_L}_* \cdot \frac{F(w)}{\norm{W_L}_*} \cdot \sum_{l=1}^{L-1}\frac{\norm{\Delta W_l}_*}{\norm{W_l}_*} \\
& = \left(1+\frac{\norm{\Delta W_L}_*}{\norm{W_L}_*}\right)\cdot
    F(w) \cdot \left[ \prod_{l=1}^{L-1} \left( 1 + \frac{\Vert \Delta W_l \Vert_{*}}{\Vert W_l \Vert_{*}}\right)  - 1 - \sum_{l=1}^{L-1} \frac{\Vert \Delta W_l \Vert_{*}}{\Vert W_l \Vert_{*}} \right] \\
&\qquad+ \frac{\norm{\Delta W_L}_*}{\norm{W_L}_*} \cdot F(w) \cdot \sum_{l=1}^{L-1}\frac{\norm{\Delta W_l}_*}{\norm{W_l}_*} \\
&= F(w) \cdot \left[ \prod_{l=1}^L \left( 1 + \frac{\Vert \Delta W_l \Vert_{*}}{\Vert W_l \Vert_{*}}\right)  - 1 - \sum_{l=1}^L \frac{\Vert \Delta W_l \Vert_{*}}{\Vert W_l \Vert_{*}} \right],
\end{align*}
which establishes Inequality \ref{eq:jrt} and completes the proof.
\end{proof}

\section{Majorise-minimise for deep linear networks}

This section converts the architectural perturbation bounds of Lemma \ref{lem:deep_perturbation_bounds} into an optimisation algorithm. The algorithm is \textit{architecture aware} in the sense that it automatically accounts for details of the network architecture such as the scale of the weights, the number of layers and the width of each layer.

Solving the full majorise-minimise problem obtained via the architectural perturbation bounds of Lemma \ref{lem:deep_perturbation_bounds} is challenging, due to the many degrees of freedom in how perturbation strengths could be assigned to different layers. To simplify matters, a restricted solution is presented under the following ansatz:

\begin{ansatz}[Equal layerwise updates]\label{ansatz} For some $\eta>0$ that is independent of layer, the perturbation $\Delta W_l$ to the weight matrix $W_l$ at layer $l$ is given by:
\begin{equation}
    \Delta W_l = - \eta \cdot \frac{1}{L} \cdot \norm{W_l}_* \cdot \frac{ \nabla_{W_l} \el(w)}{\norm{ \nabla_{W_l} \el(w)}_*}.
\end{equation}
\end{ansatz}
The content of this ansatz is that across all layers $l=1,...,L$, the perturbation $\Delta W_l$ is aligned with the negative gradient and has relative magnitude $\norm{\Delta W_l}_*/\norm{W_l}_*  = \eta/L$ independent of layer. The factor of $1/L$ is only included for later convenience---it could just as well be folded into the factor of $\eta$. Under Ansatz \ref{ansatz}, the majorise-minimise problem is reduced to solving for a single variable $\eta>0$. The architectural perturbation bounds simplify as follows:
\begin{lemma}[Architectural perturbation bounds under equal layerwise updates]\label{lem:arch-perturb-ansatz}
Consider perturbing the weights of a deep linear network $f:\R^{d_0}\times\mathcal{W}\to\R$ from $w=(W_1,...,W_L)\in\mathcal{W}$ to $w+\Delta w=(W_1+\Delta W_1,...,W_L+\Delta W_L)\in\mathcal{W}$. For any collection of $m$ inputs $X\in (\sqrt{d_0}\cdot\Sph^{d_0-1})^m$, under Ansatz \ref{ansatz}:
    \begin{align}
        \norm{\Delta f_X}_2 &\leq \sqrt{m} \cdot F(w) \cdot [\exp \eta - 1];\label{eq:drt-ansatz}\\
        \norm{\Delta f_X - \nabla_w f_X \Delta w}_2 &\leq \sqrt{m} \cdot F(w) \cdot [\exp \eta - \eta - 1].\label{eq:jrt-ansatz}
    \end{align}
\end{lemma}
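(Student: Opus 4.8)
The plan is to substitute Ansatz \ref{ansatz} directly into the general architectural perturbation bounds of Lemma \ref{lem:deep_perturbation_bounds} and then eliminate the depth dependence with a standard exponential estimate. The first thing I would do is extract the relative layerwise perturbation size implied by the ansatz. Taking operator norms on both sides of the ansatz equation and using that the operator norm of $\nabla_{W_l}\el(w)/\norm{\nabla_{W_l}\el(w)}_*$ is one, I get $\norm{\Delta W_l}_* = (\eta/L)\cdot\norm{W_l}_*$, so that
\begin{equation*}
    \frac{\norm{\Delta W_l}_*}{\norm{W_l}_*} = \frac{\eta}{L} \qquad \text{for every } l=1,\dots,L.
\end{equation*}
The crucial feature is that this ratio is identical across layers.

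With this uniform ratio in hand, I would substitute into the right-hand sides of Inequalities \ref{eq:drt} and \ref{eq:jrt}. Because every factor is the same, the product becomes a single power, $\prod_{l=1}^L(1+\eta/L) = (1+\eta/L)^L$, while the sum collapses to $\sum_{l=1}^L \eta/L = \eta$. This yields the depth-dependent intermediate bounds
\begin{align*}
    \frac{\norm{\Delta f_X}_2}{\sqrt{m}} &\leq F(w)\cdot\left[\left(1+\tfrac{\eta}{L}\right)^L - 1\right],\\
    \frac{\norm{\Delta f_X - \nabla_w f_X \Delta w}_2}{\sqrt{m}} &\leq F(w)\cdot\left[\left(1+\tfrac{\eta}{L}\right)^L - 1 - \eta\right].
\end{align*}

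The final step is to discard the dependence on depth $L$ via the elementary inequality $(1+\eta/L)^L \leq \exp\eta$, which follows from $1+x\leq\econst^x$ with $x=\eta/L$ raised to the $L$th power. Replacing $(1+\eta/L)^L$ by the larger quantity $\exp\eta$ keeps each upper bound valid; multiplying through by $\sqrt{m}$ then gives Inequalities \ref{eq:drt-ansatz} and \ref{eq:jrt-ansatz}. I do not expect a genuine obstacle here: the argument is a direct specialisation followed by a one-line exponential estimate. The only point needing any care is that the substitution $(1+\eta/L)^L \mapsto \exp\eta$ preserves the inequality even after the constants $1$ and $\eta$ are subtracted, which holds precisely because those subtracted terms are independent of $L$ and the substitution only enlarges the remaining term.
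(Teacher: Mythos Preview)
Your proposal is correct and matches the paper's own proof essentially line for line: both substitute $\norm{\Delta W_l}_*/\norm{W_l}_*=\eta/L$ into Lemma \ref{lem:deep_perturbation_bounds}, collapse the product and sum, and then bound $(1+\eta/L)^L\leq\exp\eta$. Your justification via $1+x\leq\econst^x$ is arguably cleaner than the paper's appeal to the $L\to\infty$ limit, but the approaches are identical in substance.
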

\begin{proof}
Under Ansatz \ref{ansatz}:
\begin{align*}
    \prod_{l=1}^L \left( 1 + \frac{\Vert \Delta W_l \Vert_{*}}{\Vert W_l \Vert_{*}}\right) &= \left( 1 + \frac{\eta}{L}\right)^L \leq \lim_{L\to\infty} \left( 1 + \frac{\eta}{L}\right)^L = \exp \eta,\\
    \sum_{l=1}^L \frac{\Vert \Delta W_l \Vert_{*}}{\Vert W_l \Vert_{*}} &= L\cdot \frac{\eta}{L} = \eta.
\end{align*}
Substituting these relations into Lemma \ref{lem:deep_perturbation_bounds} yields the results.
\end{proof}

These architectural perturbation bounds may be combined with the functional majorisation of square loss (Lemma \ref{lem:sq-major}) to obtain:

\begin{lemma}[Majorisation of square loss under equal layerwise updates]\label{lem:sq-major-nn} Under Ansatz \ref{ansatz}, the square loss of a deep linear network with $m$ training inputs $X\in (\sqrt{d_0}\cdot\Sph^{d_0-1})^m$ and corresponding label vector $Y\in \R^m$ satisfies:
    \begin{align}
        &\el_2(w+\Delta w) -\left[ \el_2(w) + \nabla_w\el_2(w)^\top \Delta w\right]\nonumber \\
        &\qquad\qquad\leq \frac{1}{2} \cdot F(w)\cdot\left(F(w) + \frac{\norm{Y}_2}{\sqrt{m}}\right)\cdot[ \exp(2\eta) -2 \eta - 1 ].
    \end{align}
\end{lemma}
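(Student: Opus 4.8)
The plan is to substitute the two architectural perturbation bounds of Lemma \ref{lem:arch-perturb-ansatz} directly into the functional majorisation of square loss (Lemma \ref{lem:sq-major}), and then to collapse the resulting exponential expressions using an algebraic identity. Recall that Lemma \ref{lem:sq-major} bounds the left-hand side by $\frac{1}{m}\norm{f_X-Y}_2\cdot\norm{\Delta f_X - \nabla_w f_X \Delta w}_2 + \frac{1}{2m}\norm{\Delta f_X}_2^2$. The latter two factors are controlled directly by Lemma \ref{lem:arch-perturb-ansatz}; the only quantity needing separate treatment is the data-misfit factor $\norm{f_X-Y}_2$.

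First I would bound the misfit. By the triangle inequality $\norm{f_X-Y}_2 \le \norm{f_X}_2 + \norm{Y}_2$, and since the output bound (Lemma \ref{lem:lipschitz}) gives $\abs{f(x_i;w)}\le F(w)$ for each hyperspherically constrained input, summing over the $m$ inputs yields $\norm{f_X}_2 \le \sqrt{m}\cdot F(w)$. Hence $\norm{f_X-Y}_2 \le \sqrt{m}\big(F(w) + \norm{Y}_2/\sqrt{m}\big)$. Substituting this together with the two bounds of Lemma \ref{lem:arch-perturb-ansatz} into the functional majorisation, the factors of $\sqrt{m}$ and $1/m$ cancel, leaving the cross term equal to $F(w)\big(F(w)+\norm{Y}_2/\sqrt{m}\big)\cdot[\exp\eta - \eta - 1]$ and the quadratic term equal to $\frac{1}{2}F(w)^2\cdot[\exp\eta-1]^2$.

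To reach the stated form I would extract the common prefactor $F(w)\big(F(w)+\norm{Y}_2/\sqrt{m}\big)$. The cross term already carries it; for the quadratic term I would use $F(w)^2 \le F(w)\big(F(w)+\norm{Y}_2/\sqrt{m}\big)$, which holds because $\norm{Y}_2/\sqrt{m}\ge 0$. This is the single inequality used in the whole argument. What then remains is the purely scalar claim that $[\exp\eta - \eta - 1] + \frac{1}{2}[\exp\eta - 1]^2 \le \frac{1}{2}[\exp(2\eta) - 2\eta - 1]$, which in fact holds with equality: expanding $\frac{1}{2}[\exp\eta-1]^2 = \frac{1}{2}\exp(2\eta) - \exp\eta + \frac{1}{2}$ and adding $\exp\eta - \eta - 1$ gives exactly $\frac{1}{2}\exp(2\eta) - \eta - \frac{1}{2}$. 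Multiplying back through by the common prefactor yields the desired majorisation.

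The main obstacle is really just spotting this exponential identity: the two separately-derived terms, one linear and one quadratic in $[\exp\eta - 1]$-type quantities, conspire to produce precisely the $\exp(2\eta) - 2\eta - 1$ appearing in the statement. Everything else is bookkeeping of the $\sqrt{m}$ factors together with the single elementary bound $F(w)^2 \le F(w)\big(F(w)+\norm{Y}_2/\sqrt{m}\big)$.
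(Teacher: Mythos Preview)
Your proposal is correct and follows essentially the same route as the paper: substitute Lemma~\ref{lem:arch-perturb-ansatz} into Lemma~\ref{lem:sq-major}, bound $\norm{f_X-Y}_2$ via the triangle inequality and Lemma~\ref{lem:lipschitz}, relax $F(w)^2 \le F(w)\big(F(w)+\norm{Y}_2/\sqrt{m}\big)$, and collapse the exponentials. The paper leaves the final algebraic step implicit; your explicit verification of the identity $[\exp\eta-\eta-1]+\tfrac{1}{2}[\exp\eta-1]^2=\tfrac{1}{2}[\exp(2\eta)-2\eta-1]$ is a nice addition.
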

\begin{proof} Substituting the architectural perturbation bounds from Lemma \ref{lem:arch-perturb-ansatz} into Lemma \ref{lem:sq-major} yields:
\begin{align*}
    &\el_2(w+\Delta w) -\left[ \el_2(w) + \nabla_w\el_2(w)^\top \Delta w\right]\nonumber \\
        &\qquad\qquad\leq \frac{\norm{f_X-Y}_2}{\sqrt{m}}\cdot F(w)\cdot [\exp \eta - \eta - 1] + \frac{1}{2}\cdot F(w)^2 \cdot [\exp \eta - 1]^2.
\end{align*}
To simplify this expression, one can observe that:
\begin{align*}
    \norm{f_X-Y}_2 \leq \norm{f_X}_2 + \norm{Y}_2 \leq  \sqrt{m}\cdot F(w) + \norm{Y}_2,
\end{align*}
where the last inequality follows from Lemma \ref{lem:lipschitz}. When combined with the relaxation that $F(W)^2 \leq F(w) \cdot (F(w) + \norm{Y}_2/\sqrt{m})$, the result is obtained.
\end{proof}

With the majorisation of Lemma \ref{lem:sq-major-nn} in hand, the majorise-minimise principle may be applied as follows:

\begin{theorem}[Log learning rates]\label{thm:log-lr} Lemma \ref{lem:arch-perturb-ansatz}'s majorisation of square loss for deep linear networks under Ansatz \ref{ansatz} is minimised by setting $\eta$ to:
\begin{equation}\label{eq:eta_star}
    \eta_\star \coloneqq \frac{1}{2} \log\left(1 + \frac{1}{F(w)\left(F(w) + \frac{\norm{Y}_2}{\sqrt{m}}\right)}\cdot \frac{1}{L}\sum_{l=1}^L \norm{W_l}_* \frac{\norm{ \nabla_{W_l} \el_2(w)}_F^2}{\norm{ \nabla_{W_l} \el_2(w)}_*}\right).
\end{equation}
\end{theorem}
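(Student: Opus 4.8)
The plan is to reduce the majorisation to an explicit univariate function of $\eta$ and then minimise by elementary calculus. Under Ansatz \ref{ansatz} the perturbation is a single scalar multiple of a fixed direction, so the right-hand side of the majorisation in Lemma \ref{lem:sq-major-nn}---together with the first-order term $\nabla_w\el_2(w)^\top\Delta w$ that was subtracted off in that lemma's statement---depends on the weights only through $\eta$. Adding the first-order term back, and discarding the constant $\el_2(w)$ which does not affect the location of the minimiser, the quantity to be minimised is
\begin{equation*}
    \mathcal{M}(\eta) \coloneqq \nabla_w\el_2(w)^\top\Delta w + \frac{1}{2}\cdot F(w)\cdot\left(F(w)+\frac{\norm{Y}_2}{\sqrt{m}}\right)\cdot[\exp(2\eta)-2\eta-1].
\end{equation*}

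The first concrete step is to evaluate the first-order term under the ansatz. Writing the weight-space inner product as a sum of Frobenius (trace) inner products over layers, $\nabla_w\el_2(w)^\top\Delta w = \sum_{l=1}^L \langle \nabla_{W_l}\el_2(w), \Delta W_l\rangle$, and substituting $\Delta W_l = -\eta\cdot\tfrac{1}{L}\cdot\norm{W_l}_*\cdot\nabla_{W_l}\el_2(w)/\norm{\nabla_{W_l}\el_2(w)}_*$, the self inner product in the numerator becomes $\langle \nabla_{W_l}\el_2(w), \nabla_{W_l}\el_2(w)\rangle = \norm{\nabla_{W_l}\el_2(w)}_F^2$. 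This is the key bookkeeping point: the numerator is a squared \emph{Frobenius} norm while the ansatz normalises by the \emph{operator} norm, and it is precisely this mismatch that produces the ratio $\norm{\nabla_{W_l}\el_2(w)}_F^2/\norm{\nabla_{W_l}\el_2(w)}_*$ in the final formula. Collecting terms gives $\nabla_w\el_2(w)^\top\Delta w = -\eta\cdot G$, where I set $G \coloneqq \tfrac{1}{L}\sum_{l=1}^L \norm{W_l}_*\,\norm{\nabla_{W_l}\el_2(w)}_F^2/\norm{\nabla_{W_l}\el_2(w)}_*$.

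Writing also $A \coloneqq \tfrac{1}{2}F(w)(F(w)+\norm{Y}_2/\sqrt{m})$, the objective reads $\mathcal{M}(\eta) = -\eta G + A[\exp(2\eta)-2\eta-1]$. I would then differentiate to obtain $\mathcal{M}'(\eta) = -G + 2A[\exp(2\eta)-1]$ and solve $\mathcal{M}'(\eta)=0$, giving $\exp(2\eta)=1+G/(2A)$ and hence $\eta = \tfrac{1}{2}\log(1+G/(2A))$. Substituting back the definitions of $A$ and $G$ and simplifying the ratio $G/(2A) = G/[F(w)(F(w)+\norm{Y}_2/\sqrt{m})]$ reproduces exactly the claimed $\eta_\star$ of Equation \ref{eq:eta_star}. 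To confirm this critical point is the minimiser rather than a maximiser or saddle, I would note $\mathcal{M}''(\eta) = 4A\exp(2\eta) > 0$ for all $\eta$ (since $F(w)>0$ forces $A>0$), so $\mathcal{M}$ is strictly convex with a unique stationary point; moreover $G/(2A)>0$ gives $\eta_\star>0$, so the minimiser respects the positivity required by the ansatz.

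There is no serious obstacle here: the result is a direct calculus computation once the univariate reduction is in place and the convexity of $\mathcal{M}$ is observed. The only point demanding care is the evaluation of the first-order term, where one must keep the Frobenius and operator norms distinct and correctly identify the squared Frobenius norm arising from the gradient's self inner product.
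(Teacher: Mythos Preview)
Your proposal is correct and follows essentially the same approach as the paper: evaluate the first-order term $\nabla_w\el_2(w)^\top\Delta w$ under the ansatz to obtain $-\eta G$, add the majorisation term from Lemma~\ref{lem:sq-major-nn}, set the derivative in $\eta$ to zero, and solve. Your version is in fact slightly more complete, since you verify convexity via $\mathcal{M}''(\eta)>0$ and check that $\eta_\star>0$, whereas the paper simply sets the derivative to zero and solves.
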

\begin{proof} 
    Under Ansatz \ref{ansatz}, the first-order Taylor expansion of square loss is:
    \begin{align*}
        \el_2^{(1)}(w+\Delta w) &\coloneqq \el_2(w) + \sum_{l=1}^L \nabla_{W_l}\el_2(w)^\top \Delta W_l \\
        &= \el_2(w) - \frac{\eta}{L}\sum_{l=1}^L \norm{W_l}_* \cdot \frac{\norm{ \nabla_{W_l} \el_2(w)}_F^2}{\norm{ \nabla_{W_l} \el_2(w)}_*}.
    \end{align*}
    Substituting this form of the first-order Taylor expansion into the majorisation of Lemma \ref{lem:sq-major-nn} implies that $\el_2(w+\Delta w)$ is upper bounded by:
    \begin{equation*}
        \el_2(w) - \frac{\eta}{L}\sum_{l=1}^L \norm{W_l}_* \frac{\norm{ \nabla_{W_l} \el_2(w)}_F^2}{\norm{ \nabla_{W_l} \el_2(w)}_*}  + \frac{F(w)}{2}\left(F(w) + \frac{\norm{Y}_2}{\sqrt{m}}\right)[ \exp(2\eta) -2 \eta - 1 ].
    \end{equation*}
    Setting the derivative of this expression with respect to $\eta$ to zero yields:
    \begin{equation*}
        \frac{1}{L}\sum_{l=1}^L \norm{W_l}_* \frac{\norm{ \nabla_{W_l} \el_2(w)}_F^2}{\norm{ \nabla_{W_l} \el_2(w)}_*}= F(w)\left(F(w) + \frac{\norm{Y}_2}{\sqrt{m}}\right)[\exp(2\eta) -1 ].
    \end{equation*}
    Finally, solving for $\eta$ yields the result.
\end{proof}
Theorem \ref{thm:log-lr} was derived in close collaboration with Kevin Huang. In short, the theorem suggests a learning rule where layer $l$ is perturbed via:
\begin{equation}\label{eq:opt1}
    W_l \mapsto W_l - \eta_\star \cdot \frac{1}{L} \cdot \norm{W_l}_* \cdot \frac{ \nabla_{W_l} \el(w)}{\norm{ \nabla_{W_l} \el(w)}_*},
\end{equation}
with $\eta_\star$ given by Equation \ref{eq:eta_star}. A curious aspect of this update is that the scale of the gradient only enters logarithmically through the $\eta_\star$ term. This may explain why popular neural net optimisers, such as \textit{Adam} \citep{kingma_adam:_2015}, more-or-less completely remove the gradient scale from their update.

Another feature of Update \ref{eq:opt1} is that explicit dependence on both the network depth $L$ and the scale of the weight matrices $\norm{W_l}_\star$ are encoded. But, as of yet, there is no explicit dependence on the network width. This omission is rectified in the next subsection.

\subsection{Width scaling}

\citet{my-fromage} assumed---without real evidence---that the weight matrices and gradients of a deep network have roughly the same conditioning. In turn, this meant that the architecture aware optimisation method developed in their paper does not scale properly with network width. A better conditioning assumption was employed in a paper by \citet{yang2021tuning}:

\begin{assumption}[Weight matrix and gradient conditioning]\label{ass:cond} For all $l=1,...,L$:
\begin{align}
    \norm{W_l}_* &= \frac{\norm{W_l}_F}{\sqrt{\min(d_l,d_{l-1})}}; \label{eq:w-cond} \\ \norm{\nabla_{W_l}\el(w)}_* &= \norm{\nabla_{W_l}\el(w)}_F.\label{eq:g-cond}
\end{align}
\end{assumption}
To understand this assumption, one needs to be familiar with the following aspect of matrix conditioning. Given a matrix $A\in\R^{d_l\times d_{l-1}}$ with $\overline{d} \coloneqq \min(d_l,d_{l-1})$ singular values denoted $\sigma_1, ..., \sigma_{\overline{d}}$, the norms of $A$ satisfy:
\begin{equation}\label{eq:frob-vs-op}
    \norm{A}_F^2 = \sum_{i=1}^{\overline{d}} \sigma_i^2 \geq \max_{i\in\{1,...,\overline{d}\}} \sigma_i^2 = \norm{A}_*^2.
\end{equation}
This means that $\norm{A}_F/\sqrt{\min(d_l,d_{l-1})}$ reports the root-mean-square singular value of $A$, while the operator norm $\norm{A}_*$ reports the largest singular value. 

Under this interpretation of matrix norms, Equation \ref{eq:w-cond} is stating that the largest singular value of weight matrix $W_l$ is equal to the average singular value---meaning that $W_l$ is \textit{well-conditioned}. The justification for this assumption is that the weight matrices in a deep network are typically initialised randomly, and random matrices are fairly well-conditioned.

On the other hand, Equation \ref{eq:g-cond} states that the operator norm and Frobenius norm of the gradient $\nabla_{W_l}\el(w)$ at layer $l$ are equal. By Equation \ref{eq:frob-vs-op}, this happens when $\nabla_{W_l}\el(w)$ has only one non-zero singular value---meaning that the gradient is very \textit{low rank}. The justification for this assumption is that the gradient is, in a sense, an \textit{optimal} object: it reports the perturbation direction that elicits the largest change in loss. This makes it reasonable that the gradient would not spread itself too thin in the sense of rank. This intuition is simple to prove for the gradient of the loss over a single training input $x$, which may be written directly as the rank-one outer product $\nabla_{W_l} \el = \nabla_{f_l(x)}\el\otimes \phi(f_{l-1}(x))$ in the notation of Equation \ref{eq:nn-layer}. Rigorously extending this argument to larger numbers of training examples appears challenging.

Combining Assumption \ref{ass:cond} with Theorem \ref{thm:log-lr} leads to the following update:
\begin{theorem}[Architecture aware deep network update]\label{thm:log-lr-width}
Lemma \ref{lem:arch-perturb-ansatz}'s majorisation of square loss for deep linear networks under Ansatz \ref{ansatz} and Assumption \ref{ass:cond} is minimised by the following update:
\begin{equation}\label{eq:opt2}
    W_l \mapsto W_l - \eta_\dagger \cdot \frac{1}{L} \cdot \frac{\norm{W_l}_F}{\sqrt{\min(d_l,d_{l-1})}} \cdot \frac{ \nabla_{W_l} \el(w)}{\norm{ \nabla_{W_l} \el(w)}_F},
\end{equation}
where the learning rate $\eta_\dagger$ is given by:
\begin{equation}\label{eq:log-lr}
    \eta_\dagger \coloneqq \frac{1}{2} \log\left(1 + \frac{\frac{1}{L}\sum_{l=1}^L \frac{\norm{W_l}_F}{\sqrt{\min(d_l,d_{l-1})}} \cdot \norm{\nabla_{W_l} \el_2(w)}_F}{F(w)\left(F(w) + \frac{\norm{Y}_2}{\sqrt{m}}\right)}\right).
\end{equation}
\end{theorem}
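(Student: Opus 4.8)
The plan is to obtain Theorem \ref{thm:log-lr-width} as a direct specialisation of Theorem \ref{thm:log-lr} under the conditioning Assumption \ref{ass:cond}, rather than re-running the majorise--minimise calculation from scratch. The key observation is that the minimisation over $\eta$ carried out in the proof of Theorem \ref{thm:log-lr} treats every norm quantity---$\norm{W_l}_*$, $\norm{\nabla_{W_l}\el_2(w)}_F$, $\norm{\nabla_{W_l}\el_2(w)}_*$ and the output scale $F(w)$---as a constant depending only on the current weights $w$ and not on the step size $\eta$. Assumption \ref{ass:cond} merely fixes particular values for these constants, so the minimiser $\eta_\star$ of Equation \ref{eq:eta_star} specialises without any further optimisation.

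First I would simplify the gradient-dependent factor using the gradient conditioning hypothesis of Equation \ref{eq:g-cond}. Since $\norm{\nabla_{W_l}\el_2(w)}_* = \norm{\nabla_{W_l}\el_2(w)}_F$, the ratio appearing inside the sum in Equation \ref{eq:eta_star} collapses:
\[
    \frac{\norm{\nabla_{W_l}\el_2(w)}_F^2}{\norm{\nabla_{W_l}\el_2(w)}_*} = \norm{\nabla_{W_l}\el_2(w)}_F.
\]
Next I would rewrite the weight-dependent factor using the well-conditioning hypothesis of Equation \ref{eq:w-cond}, replacing each occurrence of $\norm{W_l}_*$ in the summand by $\norm{W_l}_F / \sqrt{\min(d_l,d_{l-1})}$. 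Substituting both simplifications into Equation \ref{eq:eta_star} and folding the constant prefactor $1/[F(w)(F(w)+\norm{Y}_2/\sqrt{m})]$ into the denominator of the logarithm's argument yields exactly the stated learning rate $\eta_\dagger$ of Equation \ref{eq:log-lr}.

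Finally I would apply the identical pair of substitutions to the Ansatz \ref{ansatz} update $\Delta W_l = -\eta \cdot \tfrac{1}{L} \cdot \norm{W_l}_* \cdot \nabla_{W_l}\el(w) / \norm{\nabla_{W_l}\el(w)}_*$: replacing $\norm{W_l}_*$ by $\norm{W_l}_F/\sqrt{\min(d_l,d_{l-1})}$ and $\norm{\nabla_{W_l}\el(w)}_*$ by $\norm{\nabla_{W_l}\el(w)}_F$ recovers the update of Equation \ref{eq:opt2} with $\eta$ set to $\eta_\dagger$. The argument involves no genuine obstacle; the only point demanding care is bookkeeping---verifying that the two clauses of Assumption \ref{ass:cond} target precisely the two distinct places where spectral norms enter (the gradient ratio and the weight factor), and noting that $F(w)$ is deliberately left unexpanded for compactness even though Equation \ref{eq:w-cond} would equally permit rewriting it as $\sqrt{d_0}\prod_{l=1}^L \norm{W_l}_F/\sqrt{\min(d_l,d_{l-1})}$.
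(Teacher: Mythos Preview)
Your proposal is correct and matches the paper's own proof, which consists of the single sentence ``The result follows by substituting Assumption \ref{ass:cond} into Theorem \ref{thm:log-lr}.'' You have simply spelled out in detail what that substitution entails.
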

\begin{proof}
    The result follows by substituting Assumption \ref{ass:cond} into Theorem \ref{thm:log-lr}.
\end{proof}
Update \ref{eq:opt2} explicitly depends on the width, depth and weight scale of the deep network. For this reason, Theorem \ref{thm:log-lr-width} is tagged \textit{architecture aware}. The theorem unifies various heuristic and theoretical ideas explored in the literature:
\begin{enumerate}
    \item \textit{Relative updates.} The gradient is rescaled by $\norm{W_l}_F / \norm{\nabla_{W_l}\el(w)}_F$. This means that the magnitude of the update is in proportion to the magnitude of the weight matrix to which it is applied. Such a scaling was proposed on heuristic grounds by \citet{You:EECS-2017-156} and explored theoretically by \citet{my-fromage}. It also relates to ideas explored by \citet{carbonnelle2019layer} and \citet{my-nero}.
    \item \textit{Depth scaling.} Scaling the perturbation strength like $1/L$ for networks of depth $L$ was proposed on theoretical grounds by \citet{my-fromage}.
    \item \textit{Width scaling.} Scaling the perturbation size by $\norm{W_l}_F/\sqrt{\min(d_l,d_{l-1})}$ relates to a theoretical technique proposed by \citet{yang2021tuning}.
    \item \textit{Adaptive gradient clipping.} The logarithmic dependence of the update on the gradient scale relates to a heuristic technique known as \textit{adaptive gradient clipping} \citep{pmlr-v139-brock21a} which clips the gradient once its magnitude surpasses a certain threshold.
\end{enumerate}

\section{Experimental tests with relu networks}

The performance of Update \ref{eq:opt2} was tested for training multilayer perceptrons (Definition \ref{def:mlp}) with relu nonlinearity and of varying width and depth. The performance of the update was measured as a function of $\eta_\dagger$, where $\eta_\dagger$ was held constant during training. Testing the logarithmic form of $\eta_\dagger$ (Equation \ref{eq:log-lr}) is part of ongoing research with Kevin Huang, Chris Mingard and Yisong Yue.

The networks were trained on the CIFAR-10 dataset \citep{Krizhevsky09learningmultiple}. CIFAR-10 consists of sixty thousand 32px by 32px RGB input images that each fall into one of ten classes. This means that each image is described by $32\times 32 \times 3 = 3072\eqqcolon d_0$ real numbers along with a class index in $\{1,...,10\}$. The input images were pre-processed as follows: each image was flattened to lie in $\R^{d_0}$, centred to have mean zero and then projected on to the hypersphere of radius $\sqrt{d_0}$. The nonlinearity $\phi$ was set to $\phi(\cdot)=\sqrt{2}\cdot\max(0,\cdot)$. The weight matrices were initialised iid Gaussian and scaled such that the root mean square singular value at each layer was approximately one. The loss function was set to measure the square loss between the 10-dimensional network output and a one-hot encoding of the class index. The networks were trained for 19 epochs with 1000 train images used to compute the gradient at each step.

The results are presented in Figure \ref{fig:width-depth}. As can be seen in that figure, the train loss as a function of learning rate $\eta_\dagger$ is quite stable as both the width and depth of the network are varied. For comparison, Figure \ref{fig:width-depth-lesion} displays the behaviour without explicit depth scaling, by plotting train loss as a function of $\eta_\dagger/L$. As can be seen, this causes performance to drift with depth.

\begin{figure}[p]
    \centering
    \includegraphics{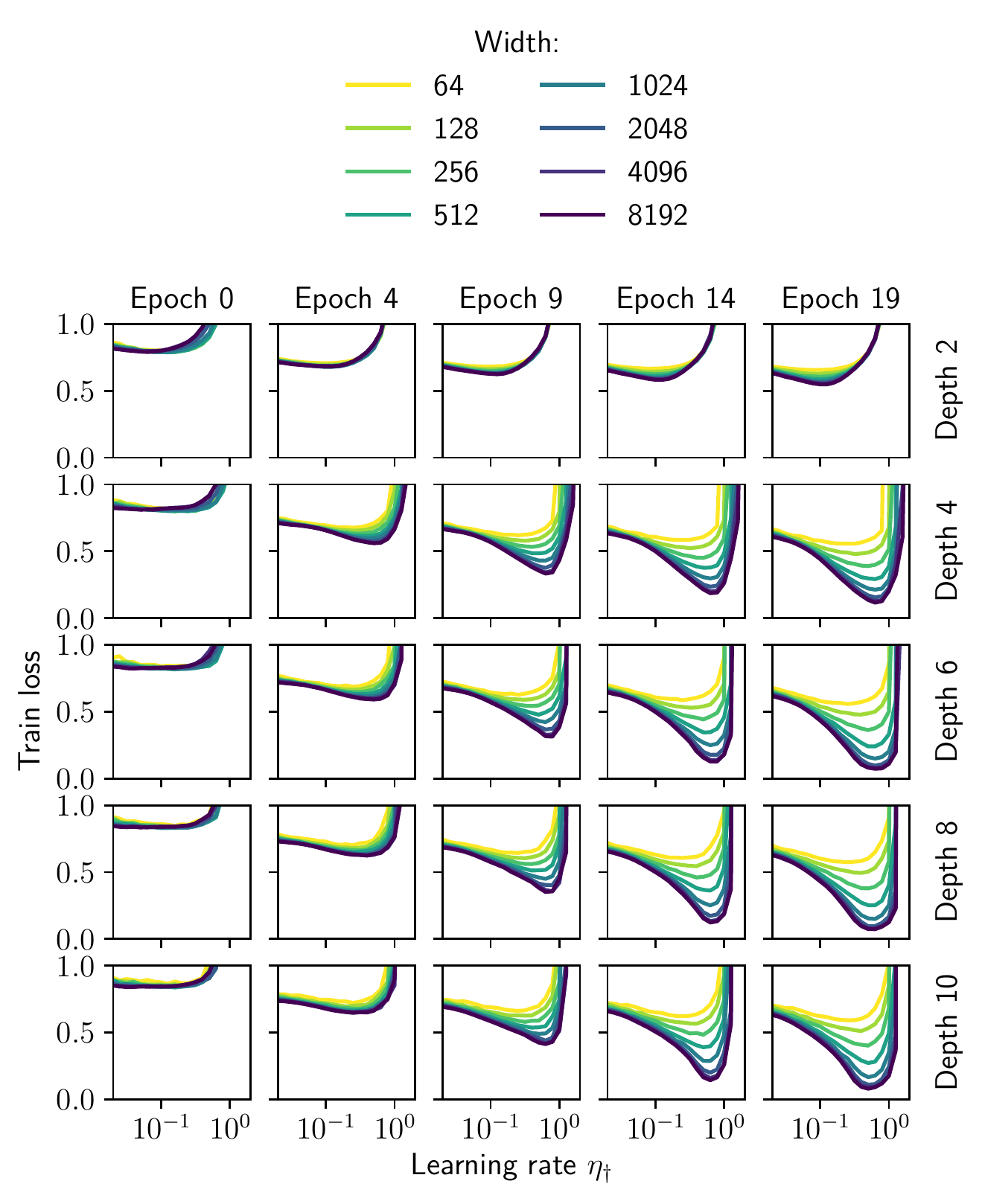}
    \caption[Learning rate transfer across width and depth]{Learning rate transfer across width and depth. Update \ref{eq:opt2} was used to train relu multilayer perceptrons of varying width and depth on the CIFAR-10 dataset. As can be seen, training performance was quite stable as a function of learning rate $\eta_\dagger$, as both width and depth were varied.}
    \label{fig:width-depth}
\end{figure}
\begin{figure}[p]
    \centering
    \includegraphics{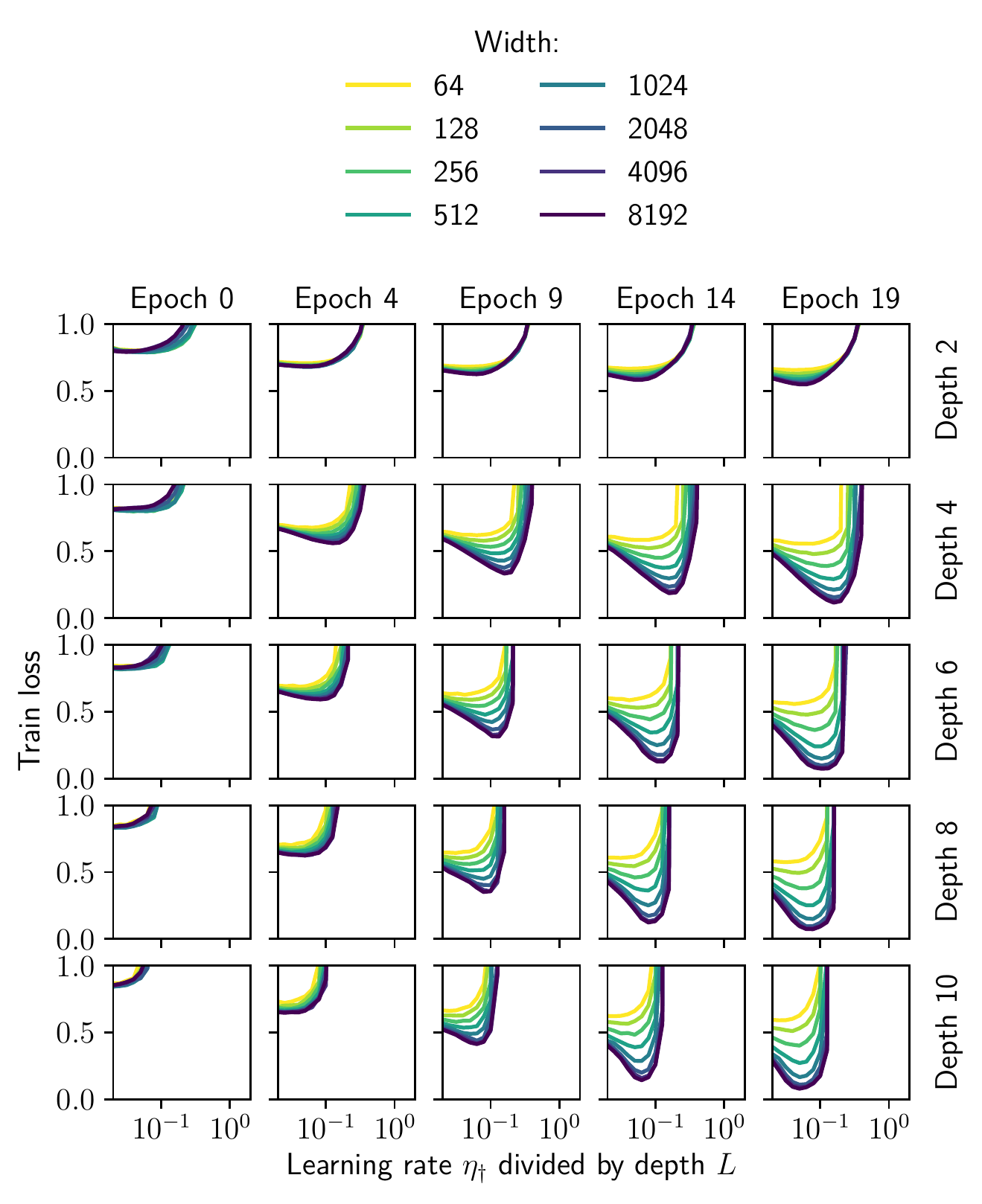}
    \caption[Learning rate transfer without explicit depth scaling]{Learning rate transfer without explicit depth scaling. The same results as Figure \ref{fig:width-depth} are plotted, except as a function of $\eta_\dagger/L$. This displays the behaviour of Update \ref{eq:opt2} with the explicit depth scaling removed. As can be seen, the tuning curves shift left for increasing depth.}
    \label{fig:width-depth-lesion}
\end{figure}

\clearpage
\printbibliography[heading=subbibliography]
\end{refsection}

\partimage{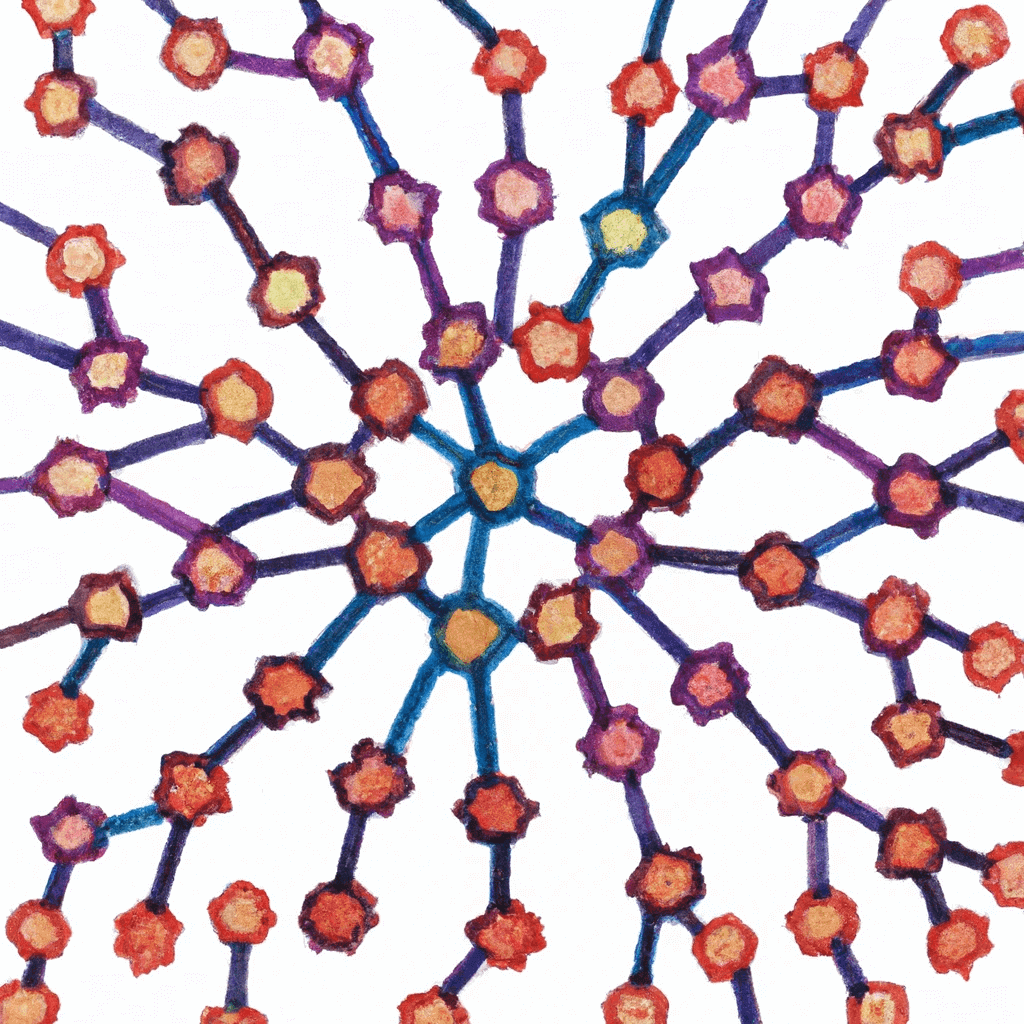}
\partquote{[The brain’s] underlying physical storage devices are capable of a thousand to a million times the capacity manifest in learned behavior... Possibly we should not be looking for models and mechanisms that produce storage economies, but rather ones in which marvels are produced by profligate use of capacity.}{Thomas K.~Landauer, 1986}
\part{Generalisation}
\label{part:gen}

\begin{refsection}

\chapter{Classic Generalisation Theories}
\label{chap:g-theory}

\begin{tcolorbox}
This chapter provides an introduction to three classic generalisation theories: uniform convergence, algorithmic stability and PAC-Bayes. The material is expositional and is included for the reader's aid.
\end{tcolorbox}

The goal of this chapter is to review three classic theoretical frameworks for reasoning about generalisation of machine learning algorithms. The intention is not to provide a comprehensive survey, but more to provide an introduction to the landscape of thought surrounding generalisation. For each framework, the chapter discusses its general relevance to neural networks, including key hurdles that may need to be overcome to make the approach workable.

The three frameworks that are considered are:
\begin{enumerate}
    \item \textit{Uniform convergence.} This framework seeks to assess the amount of training data needed such that train error converges to test error for all functions within a function space. To achieve this, the techniques tend to rely on limiting the size of the function space in some way.
    \item \textit{Algorithmic stability.} This framework makes an important departure from uniform convergence by seeking to characterise generalisation of only those functions returned by a learning algorithm. In particular, the framework uses the insensitivity of the learning algorithm to modifications of its training set in order to guarantee generalisation.
    \item \textit{PAC-Bayes.} This framework also departs from uniform convergence by assessing generalisation only of a user-specified ensemble of functions within the function space. What's more, the guarantee is on the average generalisation error over the ensemble, meaning that a small fraction of ensemble members are allowed to generalise poorly. The guarantee is good when the \textit{posterior measure} that defines the ensemble is sufficiently close to a \textit{prior measure} chosen before seeing the training data.
\end{enumerate}

\section{Uniform convergence}

Uniform convergence is a classic framework for studying generalisation in machine learning. The name refers to the idea of test error converging to train error \textit{for all} functions in a function class. The framework dates back at least to the work of \citet{vcpaper}.

The mechanism for proving uniform convergence bounds relies on the size of a function space being, in a sense, \textit{small} compared to the number of training examples. Since function spaces of interest may naïvely contain infinitely many functions, one needs to be careful about how one counts functions.

In the case of binary classification, a suitable means of counting functions is to count the number of distinct binary labellings that the function space is able to assign to a finite set of inputs. This is known as the \textit{shattering coefficient}, which given the notion of \textit{function projection} (Definition \ref{def:project}), is easy to define:

\begin{definition}[Shattering coefficient] Given a machine learning model $f:\mathcal{X}\times \mathcal{W}\to\R$, the \textit{shattering coefficient} $\mathcal{N}(f,m)$ is the largest number of binary labellings that $f$ can realise on any set of $m$ training inputs $X\in\mathcal{X}^m$:
\begin{equation}
    \mathcal{N}(f,m) \coloneqq \max_{X\in\mathcal{X}^m} \,\# \{\sign f_X(w) \mid w\in\mathcal{W}\}.
\end{equation}
\end{definition}

Since there are only $2^m$ possible binary labellings of $m$ points, it holds that $\mathcal{N}(f,m) \leq 2^m$. A VC (Vapnik-Chervonenkis) bound relies on the machine learning model $f$ being so limited in its capacity that $\mathcal{N}(f,m) \ll 2^m$:

\begin{theorem}[VC bound]\label{thm:vc} Consider a machine learning model $f:\mathcal{X}\times \mathcal{W}\to\R$ that makes classifications by binarising its output $f \mapsto \sign f$. For a training set $S \in (\mathcal{X} \times \{\pm1\})^m$ drawn iid from a data distribution $\mathcal{D}$, let $\el_{0/1}^S(w)$ denote the zero-one loss of weight vector $w\in\mathcal{W}$ with respect to training set $S$, and define $\el_{0/1}^\mathcal{D}(w)\coloneqq \Expect_{x,y\sim\mathcal{D}}\mathbb{I}[\sign f(x,w) \neq y]$ to be the average misclassification rate over $\mathcal{D}$. Then for a fraction $1-\delta$ of such training sets drawn in this way:
\begin{equation}
\text{For all } w\in\mathcal{W},\quad
    \el_{0/1}^\mathcal{D}(w) \leq \el_{0/1}^S(w) + \sqrt{\frac{8}{m}\left(\log  \mathcal{N}(f,2m)+\log \frac{4}{\delta}\right)}.
\end{equation}
\end{theorem}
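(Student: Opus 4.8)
The plan is to follow the classical Vapnik--Chervonenkis argument, whose three pillars are \emph{symmetrisation}, \emph{reduction to a finite class via the shattering coefficient}, and \emph{a union bound against Hoeffding concentration}. The central difficulty is that the population risk $\el_{0/1}^\mathcal{D}(w)$ depends on the entire unknown distribution $\mathcal{D}$, so there is no finite object over which to union bound. The whole strategy is to trade this population quantity for an empirical quantity that only inspects finitely many points, after which the shattering coefficient can do its work.

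First I would introduce a \emph{ghost sample} $S' \in (\mathcal{X}\times\{\pm1\})^m$ drawn iid from $\mathcal{D}$, independent of $S$, and prove a symmetrisation lemma: for $\eps$ not too small (say $m\eps^2 \geq 2$),
\begin{equation*}
\Prob{\sup_{w} \big(\el_{0/1}^\mathcal{D}(w) - \el_{0/1}^S(w)\big) > \eps} \leq 2\cdot\Prob{\sup_{w}\big(\el_{0/1}^{S'}(w) - \el_{0/1}^S(w)\big) > \tfrac{\eps}{2}}.
\end{equation*}
The argument fixes, on the event that some $w$ has a large true-versus-train gap, one such witness; since $\el_{0/1}^{S'}(w)$ concentrates around $\el_{0/1}^\mathcal{D}(w)$ by Chebyshev's inequality, the ghost empirical risk is also large with probability at least $\tfrac12$, which supplies the factor of two. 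Crucially, the right-hand side now involves only the $2m$ points in $S\cup S'$.

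Next I would condition on the pooled multiset of $2m$ inputs and exploit the shattering coefficient. On a fixed set of $2m$ inputs the classifiers $\{\sign f(\cdot\,;w) : w\in\mathcal{W}\}$ realise at most $\mathcal{N}(f,2m)$ distinct labellings, so the supremum over the infinite class $\mathcal{W}$ collapses to a maximum over at most $\mathcal{N}(f,2m)$ effective classifiers. To control each fixed classifier I would use a \emph{randomisation} argument: conditioned on the pooled sample, the $2m$ points are exchangeable under swapping the $i$th point of $S$ with the $i$th point of $S'$, so the difference $\el_{0/1}^{S'}(w)-\el_{0/1}^S(w)$ becomes a signed average of terms bounded in $[-1,1]$, and Hoeffding's inequality yields $\Prob{\el_{0/1}^{S'}(w)-\el_{0/1}^S(w) > \tfrac{\eps}{2}} \leq \exp(-m\eps^2/8)$.

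Finally I would assemble the pieces: a union bound over the $\mathcal{N}(f,2m)$ labellings followed by the symmetrisation factor gives
\begin{equation*}
\Prob{\sup_{w}\big(\el_{0/1}^\mathcal{D}(w)-\el_{0/1}^S(w)\big) > \eps} \leq 4\cdot\mathcal{N}(f,2m)\cdot\exp(-m\eps^2/8).
\end{equation*}
Setting the right-hand side equal to $\delta$ and solving for $\eps$ reproduces the stated threshold $\eps = \sqrt{\tfrac{8}{m}\big(\log\mathcal{N}(f,2m)+\log\tfrac{4}{\delta}\big)}$. I expect the symmetrisation lemma to be the main obstacle: it is where the unbounded population object is eliminated, and making the constants line up with the stated form (the factor of two and the halving of $\eps$) requires the ghost-sample concentration step to be handled with some care. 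The subsequent combinatorial collapse and the Hoeffding estimate are routine by comparison.
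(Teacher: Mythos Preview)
Your proposal follows the classical Vapnik--Chervonenkis argument (symmetrisation via a ghost sample, collapse to a finite class through the shattering coefficient, then a union bound against Hoeffding), and the constants you obtain match the stated form. This is the standard route and is correct.

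However, the paper does not actually supply a proof of this theorem: immediately after the statement it simply cites the textbook of \citet{lwk} as the source and moves on. So there is no in-paper argument to compare against. Your sketch is in fact precisely the proof one finds in that reference, so in that sense you have reproduced what the paper defers to.
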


This example of a VC bound is taken from the textbook of \citet{lwk}. A core feature of this bound is that the amount by which the train error can exceed the test error is bounded \textit{for all} weight vectors. Provided that the shattering coefficient saturates at some number of training examples $m$, then the bound is $\mathcal{O}(1/\sqrt{m})$.

\subsection{Applicability to neural networks} 
The relevance of a result like Theorem \ref{thm:vc} for deep learning practice is unclear. Often, neural networks operate in a regime where they can realise essentially any labelling of their training set \citep{Zhang2017UnderstandingDL}. This means that the shattering coefficient $\mathcal{N}(f,m) \approx 2^m$. In turn, the complexity term $\log \mathcal{N}(f,2m) / m$ appearing in Theorem \ref{thm:vc} does not decay as $m$ increases.

One could resolve this problem by restricting the neural network function space, excluding the kinds of highly expressive function that allow the shattering coefficient $\mathcal{N}(f,2m)$ to approach $2^m$. It is not obvious how to accomplish this, although there have been proposals. For instance, \citet{specnorm} propose an approach based on restricting to neural network functions with bounded layerwise spectral norms. Such approaches have not yet led to non-vacuous generalisation guarantees for deep neural networks.

\section{Algorithmic stability}

The distinguishing feature of a uniform convergence bound is that the bound holds for all functions in the function space. Usually, in practice, one is interested in the generalisation performance of only a single function---the one that is returned by the learning algorithm. If the learning algorithm has special properties, then perhaps the function that it returns could generalise significantly better than other functions in the function space. This motivates the framework of \textit{algorithmic stability} \citep{stability}. To introduce this framework, it first helps to formally define the learning algorithm:

\begin{definition}[Learning algorithm]\label{def:algorithm} A \textit{learning algorithm} $A$ is a deterministic mapping $S \mapsto A(S)$ from a training set $S$ to a weight vector $A(S)$.
\end{definition}

In practice, a learning algorithm need not be deterministic. For instance, the order in which the training set is fed into the algorithm will often affect the returned weight vector. But Definition \ref{def:algorithm} glosses over this detail.

Algorithmic stability provides generalisation guarantees for learning algorithms that obey some kind of stability property. The following definition establishes a notion of stability for learning algorithms with respect to manipulating a training set $S$ by removing the $i$th example $S\mapsto S^{\backslash i}$:

\begin{definition}[Uniform stability]\label{def:stability} Let $\ell(w,z)\in\R$ denote the loss incurred by weight vector $w\in \mathcal{W}$ on a single training example $z\in(\mathcal{X}\times\mathcal{Y})$. A learning algorithm $A$ has \textit{uniform stability} $\beta$ with respect to loss $\ell$ if the following holds: for all training sets $S\in(\mathcal{X}\times\mathcal{Y})^m$ and training examples $i\in\{1,...,m\}$, 
\begin{equation}
    \sup_{z\in \mathcal{X}\times\mathcal{Y}} \left|\ell\big(A(S),z\big) - \ell\big(A(S^{\backslash i}),z\big)\right|<\beta. 
\end{equation}
\end{definition}
In words: if for any training set, removing any training example leads to a small change in loss on any test point, then the learning algorithm is uniformly stable. This definition leads to the following generalisation bound for regression:

\begin{theorem}[Uniform stability bound]\label{thm:stability} Let $A$ be a learning algorithm with uniform stability $\beta$ with respect to a bounded loss $0\leq\ell(\cdot,\cdot)\leq 1$. For a data distribution $\mathcal{D}$, define the test loss $\el_\mathcal{D}(w)\coloneqq \Expect_{z\sim\mathcal{D}}\ell(w,z)$ and train loss $\el_S(w)\coloneqq \Expect_{z\sim\uniform(S)}\ell(w,z)$. Then, for a fraction $1-\delta$ of training sets $S\overset{\text{iid}}{\sim}\mathcal{D}^m$:
\begin{equation}\label{eq:stability}
    \el_\mathcal{D}(A(S)) \leq \el_S(A(S)) + 2\beta + (4m\beta + 1)\cdot\sqrt{\frac{\log 1/\delta}{2m}}.
\end{equation}
\end{theorem}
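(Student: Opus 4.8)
The plan is to apply McDiarmid's bounded-differences inequality to the random variable $\Phi(S) \coloneqq \el_\mathcal{D}(A(S)) - \el_S(A(S))$, regarded as a function of the $m$ iid training examples comprising $S$. This reduces the theorem to two ingredients: an upper bound on $\Expect_S[\Phi(S)]$, and a bound on how much $\Phi$ changes when a single training example is resampled. Throughout, a recurring preliminary step will be to convert the \emph{removal}-based stability of Definition \ref{def:stability} into a \emph{replacement}-based statement: inserting $A(S^{\backslash i})$ and using the triangle inequality shows that replacing the $i$th example by an independent copy perturbs the loss on any fixed test point by at most $2\beta$.

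First I would control the mean. Writing $\el_\mathcal{D}(A(S)) = \Expect_{z'}[\ell(A(S),z')]$ for a fresh $z'\sim\mathcal{D}$, I would apply the standard symmetrisation (relabelling) trick: since $z'$ and the $i$th training point $z_i$ are iid, renaming them shows that $\Expect_S[\Phi(S)]$ equals the average over $i$ of $\Expect[\ell(A(S^i),z_i) - \ell(A(S),z_i)]$, where $S^i$ denotes $S$ with $z_i$ replaced by an independent copy. Each summand is at most $2\beta$ by replacement stability, whence $\Expect_S[\Phi(S)] \leq 2\beta$.

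Next I would establish the bounded-difference constants. Resampling the $i$th example to obtain $S^i$, I would bound $|\Phi(S) - \Phi(S^i)|$ by treating the test and train terms separately. The test term $|\el_\mathcal{D}(A(S)) - \el_\mathcal{D}(A(S^i))|$ is at most $2\beta$ by replacement stability averaged over fresh points. For the train term I would split $|\el_S(A(S)) - \el_{S^i}(A(S^i))|$ into a contribution from the change in the algorithm's output, at most $2\beta$ by applying replacement stability to each summand of the empirical average, plus a contribution from the single altered summand, at most $1/m$ since $0\leq\ell\leq 1$. Combining gives the uniform constant $c_i = 4\beta + 1/m$ for every $i$.

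Finally I would invoke McDiarmid, $\Prob{\Phi(S) - \Expect_S\Phi(S) \geq t} \leq \exp\!\left(-2t^2 / \sum_i c_i^2\right)$ with $\sum_i c_i^2 = m(4\beta + 1/m)^2$. Setting the right-hand side equal to $\delta$ and solving yields $t = (4\beta + 1/m)\sqrt{m}\cdot\sqrt{\tfrac{\log(1/\delta)}{2}} = (4m\beta + 1)\sqrt{\tfrac{\log(1/\delta)}{2m}}$, and adding the mean bound $2\beta$ gives the claimed inequality on a $1-\delta$ fraction of training sets. I expect the main obstacle to be the symmetrisation argument for $\Expect_S[\Phi(S)]$, which rests on a careful relabelling of iid variables, together with the bookkeeping that promotes the removal-stability of Definition \ref{def:stability} to the replacement-stability that both the mean bound and McDiarmid actually require.
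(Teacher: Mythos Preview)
The paper does not actually prove this theorem: it states the result and attributes it to \citet{stability} without supplying an argument. Your proposal is the standard Bousquet--Elisseeff proof---symmetrise to bound $\Expect_S[\Phi(S)]\leq 2\beta$, establish bounded differences $c_i = 4\beta + 1/m$, then apply McDiarmid---and it is correct in all essential details, including the promotion of removal-stability to replacement-stability via the triangle inequality through $A(S^{\backslash i})$.
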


This theorem is due to \citet{stability}. The theorem is most interesting for a learning algorithm with uniform stability $\beta = \mathcal{O}(1/m)$, for which the complexity term in Equation \ref{eq:stability} decays like $\mathcal{O}(1/\sqrt{m})$, similar to the VC bound of Theorem \ref{thm:vc}.

\subsection{Applicability to neural networks}
It is not obvious whether neural networks satisfy a stability condition such as Definition \ref{def:stability} at a non-trivial level $\beta$. Sometimes neural networks are seemingly able to interpolate any training set \citep{Zhang2017UnderstandingDL}, meaning that they are highly sensitive to the inclusion or exclusion of any particular data point.

To make this framework workable for neural networks, one would need to produce a stability condition that is satisfied by a neural network. This could potentially be a weaker notion of stability than uniform stability, and some alternatives are proposed in the paper by \citet{stability}.

\section{PAC-Bayes}

PAC-Bayes theory presents another approach to introducing algorithm dependence into a generalisation bound. The theory makes a conceptual shift from the frameworks of both uniform convergence as well as algorithmic stability. It is worth introducing this new perspective in three steps:

\begin{enumerate}[label=Step \arabic*:, leftmargin=*, font=\sffamily]
    \item \textit{Specify prior belief.} Before seeing the training data, assign a prior measure $P$ to the function space. The idea is that functions that are believed more likely to explain the training data (upon its arrival) should be assigned higher probability under this prior measure.
    \item \textit{Given data, construct a posterior.} Once the training data has arrived, a posterior measure $Q$ should be constructed that assigns higher probability to functions that now seem more likely. The posterior is, in a sense, \textit{algorithm dependent}: the choice of which functions to include in the posterior constitutes the learning algorithm. Although it can be, the posterior $Q$ need not be set to the Bayesian posterior for prior $P$.
    \item \textit{Measure distance between prior and posterior.} According to PAC-Bayes theory, the closer the posterior $Q$ is to the prior $P$, the better the functions in the posterior $Q$ will generalise on average. The idea is that if the posterior and prior are very similar, then less information was extracted from the training data. To quantify this, a distance on probability measures is required, such as the KL divergence $\kl(Q||P)$.
\end{enumerate}

These three steps underlie the following theorem of \citet{Langford01boundsfor}:

\begin{theorem}[PAC-Bayes bound]\label{thm:pac-bayes}
Let $P$ be a prior over functions realised by a classifier and let $\mathcal{D}$ denote the data distribution. For a fraction $1-\delta$ of training sets $S\overset{\text{iid}}{\sim}\mathcal{D}^m$, the following holds for all posterior distributions $Q$:
\begin{equation}
    \kl(\bern_{S,Q}||\bern_{\mathcal{D},Q}) \leq \frac{\kl(Q||P) + \log (2m/\delta)}{m-1},
\end{equation}
where $\bern_{S,Q}$ is Bernoulli with probability $\Probe_{w\sim Q, (x,y)\sim\uniform(S)}[\sign f(x;w)\neq y]$ and $\bern_{\mathcal{D},Q}$ is Bernoulli with probability $\Probe_{w\sim Q, (x,y)\sim\mathcal{D}}[\sign f(x;w)\neq y]$.
\end{theorem}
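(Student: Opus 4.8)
The plan is to prove the PAC-Bayes bound by reducing the problem to a statement about a single scalar random variable---the empirical error of a randomly drawn classifier---and then applying a change-of-measure argument together with a concentration inequality. First I would fix the prior $P$ and work conditionally on a draw $w\sim P$. For such a fixed $w$, the quantity $\Probe_{(x,y)\sim\mathcal{D}}[\sign f(x;w)\neq y]$ is a fixed number, call it $p_w$, and the empirical error $\Probe_{(x,y)\sim\uniform(S)}[\sign f(x;w)\neq y]$ is an average of $m$ iid Bernoulli$(p_w)$ indicators (here using that $S\sim\mathcal{D}^m$). The core probabilistic input is a Chernoff-type tail bound controlling the binary KL divergence between the empirical and true error rates: for a single fixed hypothesis, $\Probe_S\!\left[\kl(\hat p_w \| p_w) \geq t/m\right] \leq \mathrm{e}^{-t}$, or more precisely the bound $\Expect_S\,\mathrm{e}^{(m-1)\,\kl(\hat p_w\|p_w)} \leq 2m$, which is the standard per-hypothesis moment estimate underlying this form of the bound.

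The key step is the change of measure. Having controlled the per-hypothesis exponential moment, I would form the quantity $\Expect_{w\sim P}\,\mathrm{e}^{(m-1)\,\kl(\hat p_w\|p_w)}$ and exchange the expectation over $w$ with the expectation over $S$ by Fubini, giving that $\Expect_S\,\Expect_{w\sim P}\,\mathrm{e}^{(m-1)\,\kl(\hat p_w\|p_w)} \leq 2m$. Then, for an arbitrary posterior $Q$, I would introduce the likelihood ratio $\mathrm{d}Q/\mathrm{d}P$ and write $\Expect_{w\sim Q}[\cdot] = \Expect_{w\sim P}[(\mathrm{d}Q/\mathrm{d}P)\cdot]$. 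Applying the Donsker--Varadhan variational characterisation of KL divergence (equivalently, Jensen's inequality applied to the convex exponential together with the definition of $\kl(Q\|P)$) yields, for every $Q$ simultaneously,
\begin{equation*}
    (m-1)\cdot\Expect_{w\sim Q}\kl(\hat p_w\|p_w) \leq \kl(Q\|P) + \log \Expect_{w\sim P}\,\mathrm{e}^{(m-1)\,\kl(\hat p_w\|p_w)}.
\end{equation*}

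The final step is to make the bound hold with high probability over $S$ and to replace the averaged-over-$Q$ quantity by the single binary KL of the aggregated Bernoullis $\bern_{S,Q}$ and $\bern_{\mathcal{D},Q}$. The high-probability claim follows from Markov's inequality: the random variable $\Expect_{w\sim P}\,\mathrm{e}^{(m-1)\,\kl(\hat p_w\|p_w)}$ has expectation at most $2m$, so with probability at least $1-\delta$ it is at most $2m/\delta$, contributing the $\log(2m/\delta)$ numerator term. To pass from $\Expect_{w\sim Q}\kl(\hat p_w\|p_w)$ to $\kl(\bern_{S,Q}\|\bern_{\mathcal{D},Q})$ I would invoke the joint convexity of the binary KL divergence in its pair of arguments together with Jensen's inequality, since $\bern_{S,Q}$ and $\bern_{\mathcal{D},Q}$ are exactly the $Q$-averages of $\hat p_w$ and $p_w$ respectively. \textbf{The main obstacle} I anticipate is the per-hypothesis moment estimate $\Expect_S\,\mathrm{e}^{(m-1)\,\kl(\hat p_w\|p_w)}\leq 2m$: this is the technically delicate ingredient, requiring a careful tail analysis of the binomial via the relative-entropy Chernoff bound and a summation (or integral) over the possible empirical error values to produce the clean factor of $2m$. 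Everything downstream---change of measure, Markov, and the convexity step---is comparatively routine once this estimate is in hand.
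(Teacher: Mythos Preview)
Your proof sketch is correct and follows the standard route to this result: the Maurer--Langford moment estimate $\Expect_S\,\econst^{(m-1)\kl(\hat p_w\|p_w)}\leq 2m$, Fubini, the Donsker--Varadhan change-of-measure inequality, Markov, and finally joint convexity of the binary KL to pass from $\Expect_{w\sim Q}\kl(\hat p_w\|p_w)$ to $\kl(\bern_{S,Q}\|\bern_{\mathcal{D},Q})$. The paper, however, does not supply its own proof of this theorem; it simply attributes the statement to \citet{Langford01boundsfor} and uses it as a black box. So there is nothing to compare against---your outline is essentially the argument one would find in that reference (and in Maurer's refinement), and the obstacle you flag, the per-hypothesis exponential-moment bound, is indeed the one nontrivial combinatorial ingredient.
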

In words: the KL divergence appearing on the left-hand side measures the distance between train error and test error averaged over the posterior. This is upper bounded by the KL divergence between prior and posterior (plus a logarithmic confidence term) divided by the number of training examples. 

The following corollary may be slightly easier to parse than Theorem \ref{thm:pac-bayes}. It specialises to the \textit{realisable} setting of functions that attain zero train error.

\begin{corollary}[Realisable PAC-Bayes]\label{cor:pac-bayes}
Let $P$ be a prior over functions realised by a classifier. For a fraction $1-\delta$ of training sets $S\overset{\text{iid}}{\sim}\mathcal{D}^m$, the following holds for all posterior distributions $Q$ over functions that correctly classify $S$:
\begin{equation}\label{eq:pac-bayes-corollary}
\Probe_{w\sim Q, (x,y)\sim\mathcal{D}}[\sign f(x;w)\neq y] \leq 1 - \exp \left[-\frac{\kl(Q||P) + \log (2m/\delta)}{m-1}\right].
\end{equation}
\end{corollary}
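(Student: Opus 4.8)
The plan is to specialise Theorem \ref{thm:pac-bayes} to the situation of the corollary, where the posterior $Q$ is supported entirely on functions that correctly classify the training set $S$. Under this realisability restriction, every weight vector $w$ in the support of $Q$ satisfies $\sign f(x;w) = y$ for each $(x,y)\in S$, so the posterior-averaged train error vanishes: the Bernoulli random variable $\bern_{S,Q}$ on the left-hand side of Theorem \ref{thm:pac-bayes} has parameter $\Probe_{w\sim Q, (x,y)\sim\uniform(S)}[\sign f(x;w)\neq y] = 0$.

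First I would recall that the KL divergence between two Bernoulli distributions with parameters $p$ and $q$ is $\kl(\bern_p||\bern_q) = p\log\tfrac{p}{q} + (1-p)\log\tfrac{1-p}{1-q}$. Setting $p=0$ and adopting the usual convention $0\log 0 = 0$, the first term drops out and the second simplifies, leaving $\kl(\bern_0||\bern_q) = -\log(1-q)$. Writing $q \coloneqq \Probe_{w\sim Q, (x,y)\sim\mathcal{D}}[\sign f(x;w)\neq y]$ for the posterior-averaged test error, Theorem \ref{thm:pac-bayes} therefore collapses to the scalar inequality
\[
    -\log(1-q) \leq \frac{\kl(Q||P) + \log(2m/\delta)}{m-1}.
\]

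Finally I would rearrange to isolate $q$. Multiplying through by $-1$ reverses the inequality, and exponentiating both sides (using monotonicity of $\exp$) yields $1-q \geq \exp[-(\kl(Q||P)+\log(2m/\delta))/(m-1)]$, whence $q \leq 1 - \exp[-(\kl(Q||P)+\log(2m/\delta))/(m-1)]$, which is precisely Inequality \ref{eq:pac-bayes-corollary}. There is no substantive obstacle here, since the corollary is a direct specialisation of the already-established Theorem \ref{thm:pac-bayes}; the only point demanding any care is the evaluation of the Bernoulli KL divergence at $p=0$, where one must invoke the $0\log 0 = 0$ convention to discard the term that would otherwise be ill-defined.
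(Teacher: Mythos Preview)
Your proposal is correct and follows essentially the same approach as the paper: you specialise Theorem~\ref{thm:pac-bayes} by observing that realisability forces $\bern_{S,Q}$ to have parameter zero, compute $\kl(\bern_0\|\bern_q)=-\log(1-q)$, and rearrange. The paper's proof is identical in structure, just slightly more terse about the rearrangement step.
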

\begin{proof} For a posterior $Q$ that correctly classifies $S$, the Bernoulli random variable $\bern_{S,Q}$ is zero with probability one. In turn, this implies that:
\begin{equation*}
    \kl(\bern_{S,Q}||\bern_{\mathcal{D},Q})= - \log (1-\Probe_{w\sim Q, (x,y)\sim\mathcal{D}}[\sign f(x;w)\neq y]).
\end{equation*}
Substituting this relation into Theorem \ref{thm:pac-bayes} and rearranging yields the result.
\end{proof}

\subsection{Applicability to neural networks}

PAC-Bayes has been shown to yield non-vacuous generalisation guarantees both for neural networks \citep{DR17} and for infinitely wide neural networks \citep{Prez2020GeneralizationBF,my-bpm}. Indeed, by inspection of Corollary \ref{cor:pac-bayes}, when $\kl(Q||P)$ is finite, the right-hand side of Inequality \ref{eq:pac-bayes-corollary} is smaller than one and the bound is never vacuous.

One of the main problems with applying Theorem \ref{thm:pac-bayes} or Corollary \ref{cor:pac-bayes} to neural networks is that there is a mismatch between what is bounded and what matters in a standard machine learning problem. In particular, PAC-Bayes bounds are on the average test error over an ensemble of functions, described by posterior distribution $Q$. But in a standard machine learning scenario, one is most interested in the generalisation error of an individual function. Steps towards resolving this issue will be taken in Chapter \ref{chap:bpm}.

\printbibliography[heading=subbibliography]
\end{refsection}
\begin{refsection}

\chapter{PAC-Bayes for Gaussian Processes}
\label{chap:gp-pac-bayes}

\begin{tcolorbox}
This chapter derives PAC-Bayes risk bounds for Gaussian process classification. While the specific calculations are novel, the general framework is due to \citet{seeger}. The calculations will be useful in Chapter \ref{chap:bpm}.
\end{tcolorbox}

Gaussian processes already contain many of the perplexing phenomena present in neural networks. They represent a function space of essentially unlimited expressivity, that nevertheless tends to generalise well when fit to data.

PAC-Bayes theory applies naturally to Gaussian process classification since Gaussian processes come equipped with prior and posterior measures that can be used in the PAC-Bayes bound. \citet{seeger} developed an extensive PAC-Bayesian treatment of Gaussian process classification. Those results apply to a very flexible set of methods known as \textit{approximate Gaussian process classifiers}. Essentially, these methods use a Gaussian distribution to approximate an intractable Bayesian posterior. Since the PAC-Bayes bound applies to any posterior and not just the Bayesian posterior, PAC-Bayes theory can be straightforwardly applied to control the error of these approximate methods. 

The focus of this chapter is on developing a simple and direct PAC-Bayesian treatment of Gaussian process classification for use in Chapter \ref{chap:bpm}.

\section{Gaussian process classification}\label{sec:gpc}

There is a simple way to convert a Gaussian process into a binary classifier. Given a training sample $S=(X,Y)$, one can simply draw functions from the prior $f\sim\gp(0,k)$ until a function is found that correctly classifies the training sample: $\sign f_X = Y$. This function can then be evaluated on novel test points.

To develop a PAC-Bayesian generalisation theory of this procedure, one needs to write down the corresponding posterior and compute its KL-divergence with the prior. To ease this process, it is helpful to decompose the prior into two separate pieces, each defined implicitly via a sampling procedure:
\begin{definition}[GP binary classifier prior]\label{def:gpc-prior} To construct the \textit{prior} $\pgp(\cdot)$ of a GP binary classifier, take train inputs $X$ and any other inputs $X^\prime$ and decompose:
\begin{equation}
    \pgp(f_{X^\prime}, f_X) = \pgp(f_{X^\prime}|f_X) \cdot \pgp(f_X).
\end{equation}
Now, define each piece of this decomposition separately:
\begin{enumerate}
    \item The measure $\pgp(f_X)$ corresponds to sampling train outputs:
    \begin{equation}
        f_X \sim \normal(0,K_{XX}).
    \end{equation}
    \item The measure $\pgp(f_{X^\prime}|f_X)$ corresponds to sampling the other outputs:
    \begin{equation}
        f_{X^\prime} \sim \normal(K_{X^\prime X}K_{XX}^{-1}f_X, K_{X^\prime X^\prime}-K_{X^\prime X}K_{XX}^{-1}K_{XX^\prime}).
    \end{equation}
\end{enumerate}
\end{definition}
One might worry that this definition of the prior depends on the training sample $X$, which is expressly not allowed. But this dependence is illusory: by Theorem \ref{thm:gp-cond}, this definition is equivalent to sampling  $f_{X^\prime}, f_X \sim \normal(0,K_{X^\prime \cup X\,X^\prime\cup X})$. Breaking the sampling procedure in two like this is helpful for making a direct comparison with the following definition of the posterior:

\begin{definition}[GP binary classifier posterior]\label{def:gpc-posterior}
To construct the \textit{posterior} $\qgp(\cdot)$ of a GP binary classifier, for train inputs $X$ and other inputs $X^\prime$, decompose:
\begin{equation}
    \qgp(f_{X^\prime}, f_X) = \qgp(f_{X^\prime}|f_X) \cdot \qgp(f_X).
\end{equation}
Now, define each piece of this decomposition separately:
\begin{enumerate}
    \item The measure $\qgp(f_X)$ corresponds to sampling train outputs:
    \begin{equation}
        f_X \sim \normal(0,K_{XX}\mid\sign f_X = Y).
    \end{equation}
    \item The measure $\qgp(f_{X^\prime}|f_X) \coloneqq \pgp(f_{X^\prime}|f_X)$ from Definition \ref{def:gpc-prior}.
\end{enumerate}
\end{definition}
The posterior is identical to the prior, except that the distribution on train outputs is truncated to the orthant $\{f_X \in \R^m \mid \sign f_X = Y\}$. 

It will also turn out to be both computationally and analytically convenient to define the following approximate posterior distribution:

\begin{definition}[GP binary classifier spherised posterior]\label{def:gpc-posterior-spherised}
For train inputs $X$ and other inputs $X^\prime$, the \textit{spherised posterior} $\qsph(\cdot)$ is defined by first decomposing:
\begin{equation}
    \qsph(f_{X^\prime}, f_X) = \qsph(f_{X^\prime}|f_X) \cdot \qsph(f_X).
\end{equation}
And, next, defining each piece of this decomposition separately:
\begin{enumerate}
    \item The measure $\qsph(f_X)$ corresponds to sampling train outputs:
    \begin{equation}
        f_X \sim \normal(0,\Id\cdot \abs{K_{XX}}^{1/m}\mid\sign f_X = Y).
    \end{equation}
    \item The measure $\qsph(f_{X^\prime}|f_X) \coloneqq \pgp(f_{X^\prime}|f_X)$ from Definition \ref{def:gpc-prior}.
\end{enumerate}
\end{definition}

The spherised posterior modifies the posterior of Definition \ref{def:gpc-posterior} by replacing the truncated Gaussian distribution over train outputs $\qgp(f_X)$ with a truncated spherical Gaussian distribution $\qsph(f_X)$. The factor $\abs{K_{XX}}^{1/m}$ is included to match the variance scales of these two distributions.

\section{The KL divergence between prior and posterior}\label{sec:kl}

To construct PAC-Bayes bounds, one needs to compute the KL-divergence between these prior and posterior measures over functions. A lemma will help:
\begin{lemma}[Chain rule for the KL divergence]\label{lem:kl-chain} Let $Q$ and $P$ denote measures over functions $f:\mathcal{X}\to\R$, for a finite input space $\mathcal{X}$. For a collection of $m$ inputs $X\in\mathcal{X}^m$, let $X^\mathsf{c}$ denote all inputs excluding $X$: $X^\mathsf{c}\coloneqq \mathcal{X}\setminus X$. Then:
    \begin{align}
        \kl\big(Q(f)\;||\;P(f)\big) = \kl\big(Q(f_{X^\mathsf{c}} | f_X)\;||\;P(f_{X^\mathsf{c}}|f_X)\big) + \kl\big(Q(f_X)\;||\;P(f_X)\big).
    \end{align}
\end{lemma}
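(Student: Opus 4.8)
The plan is to unfold the definition of the KL divergence as an expectation of a log-density-ratio under $Q$, factor both measures with the probabilistic chain rule, and then identify the two resulting pieces. Because $\mathcal{X}$ is finite, both $Q$ and $P$ are measures on the finite-dimensional space $\R^{\abs{\mathcal{X}}}$, so I may work with honest densities and write
\begin{equation*}
    \kl\big(Q(f)\;||\;P(f)\big) = \Expect_{f\sim Q}\left[\log \frac{Q(f)}{P(f)}\right],
\end{equation*}
under the standing hypothesis $Q\ll P$ that makes the ratio well-defined $Q$-almost everywhere and every divergence finite.

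First I would apply the chain rule of probability to each density, writing $Q(f) = Q(f_{X^\mathsf{c}}\mid f_X)\,Q(f_X)$ and likewise $P(f) = P(f_{X^\mathsf{c}}\mid f_X)\,P(f_X)$; this is just the decomposition of a joint density into a conditional times a marginal, with $f_X$ playing the role of the conditioning variable. Substituting these factorisations into the log-ratio and using $\log\tfrac{ab}{cd} = \log\tfrac{a}{c} + \log\tfrac{b}{d}$ splits the integrand additively:
\begin{equation*}
    \log \frac{Q(f)}{P(f)} = \log \frac{Q(f_{X^\mathsf{c}}\mid f_X)}{P(f_{X^\mathsf{c}}\mid f_X)} + \log \frac{Q(f_X)}{P(f_X)}.
\end{equation*}

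Next I would take the expectation over $f\sim Q$ of each term by linearity. The second term depends on $f$ only through $f_X$, so its expectation collapses to the marginal divergence $\Expect_{f_X\sim Q}\log\tfrac{Q(f_X)}{P(f_X)} = \kl\big(Q(f_X)||P(f_X)\big)$. For the first term I would invoke the tower property, taking the expectation first over $f_{X^\mathsf{c}}$ conditioned on $f_X$ and then over $f_X\sim Q$; the inner expectation is exactly the pointwise KL divergence between the two conditional laws at a fixed $f_X$, so the whole term equals $\Expect_{f_X\sim Q}\kl\big(Q(f_{X^\mathsf{c}}\mid f_X)||P(f_{X^\mathsf{c}}\mid f_X)\big)$, which is precisely the conditional KL divergence notation appearing in the statement. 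Summing the two pieces gives the claimed identity.

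The argument is essentially bookkeeping, so the only genuine subtlety is conventional rather than technical: one must fix the meaning of the conditional KL term on the right-hand side as an average over $f_X$ drawn from the \emph{posterior} $Q$, and not the prior $P$, since it is exactly this choice that makes the tower-property step above go through. I would flag this convention explicitly, and note that the absolute-continuity hypothesis $Q\ll P$ is what keeps the log-ratios and all three divergences well-defined. In the intended application to Definitions \ref{def:gpc-prior} and \ref{def:gpc-posterior} this hypothesis holds because the posterior is a truncation of the prior, and the conditional factors $Q(f_{X^\mathsf{c}}\mid f_X)$ and $P(f_{X^\mathsf{c}}\mid f_X)$ in fact coincide, so the conditional term will vanish.
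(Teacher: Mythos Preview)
Your proof is correct and follows essentially the same route as the paper: factor $Q(f)$ and $P(f)$ via the probabilistic chain rule, split the log-ratio, and identify the two resulting expectations as the marginal and conditional KL divergences. Your additional remarks on the convention that the conditional KL is averaged under $Q$, and on absolute continuity, are helpful clarifications that the paper leaves implicit.
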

\begin{proof}
First, a full function $f \equiv f_\mathcal{X} \equiv (f_{X^\mathsf{c}},f_X)$. The result follows by substituting the chain rule for joint probability into the definition of the KL divergence, separating the logarithm, and recognising the two KL divergences:
    \begin{align*}
    &\kl(Q(f)\;||\;P(f)) = \int \diff{f}\,Q(f) \log \frac{Q(f)}{P(f)}\\
    & \qquad= \int\diff{f_X}\, Q(f_X)\int \diff{f_{X^\mathsf{c}}}\,Q(f_{X^\mathsf{c}}|f_X) \log \frac{Q(f_{X^\mathsf{c}}|f_X)\cdot Q(f_X)}{P(f_{X^\mathsf{c}}|f_X)\cdot P(f_X)}\\
    & \qquad= \int\diff{f_X}\,Q(f_X) \left[\kl(Q(f_{X^\mathsf{c}} | f_X)\;||\;P(f_{X^\mathsf{c}}| f_X)) + \log \frac{Q(f_X)}{P(f_X)}\right]\\
    & \qquad = \kl(Q(f_{X^\mathsf{c}} | f_X)\;||\;P(f_{X^\mathsf{c}}| f_X)) + \kl(Q(f_X)\;||\;P(f_X)).
    \end{align*}
    The proof is complete.
\end{proof}

Note that machine learning methods implemented on computers use finite input spaces. It is possible to generalise the lemma beyond finite input spaces, but this requires concepts from measure theory such as \textit{Radon-Nikodym derivatives} and the \textit{disintegration theorem}, which are beyond the scope of this thesis.

The relevance of this result is that, since the second sampling steps of Definitions \ref{def:gpc-prior}, \ref{def:gpc-posterior} and \ref{def:gpc-posterior-spherised} are identical, it holds that:
\begin{equation}
    \kl(\qgp(f_{X^\mathsf{c}} | f_X)\;||\;P(f_{X^\mathsf{c}}| f_X))=\kl(\qsph(f_{X^\mathsf{c}} | f_X)\;||\;P(f_{X^\mathsf{c}}| f_X)) = 0.
\end{equation}
In turn, by Lemma \ref{lem:kl-chain}, this implies that:
\begin{align}
    \kl(\qgp(f)\;||\;P(f)) &= \kl(\qgp(f_X)\;||\;P(f_X)); \\
    \kl(\qsph(f)\;||\;P(f)) &= \kl(\qsph(f_X)\;||\;P(f_X)).
\end{align}
These relations mean that, in order to evaluate the KL divergence between the full prior and posterior, one only needs to evaluate the KL divergence between the prior and posterior restricted to the training inputs. To derive these KL divergences, it is first helpful to define two quantities:

\begin{definition}[Gaussian orthant probability]\label{def:gop} Given a training set $S=(X,Y)$ with binary labels $Y\in\{\pm1\}^m$, the \textit{Gaussian orthant probability} $P_Y$ is:
\begin{equation}\label{eq:gop}
    P_Y : = \Probe_{f_X \sim P_{\mathrm{GP}}}[\sign f_X = Y].
\end{equation}
\end{definition}
This is termed an \textit{orthant probability} because for a training set $X$ consisting of $m$ examples, the set $\left\{f_X \in \R^m \;\middle|\; \sign f_X = Y\right\}$ is an orthant of $\R^m$.
\begin{definition}[Kernel complexity]\label{def:k-complex} Given a kernel $k$ and a training set $S=(X,Y)$, the \textit{kernel complexity} $\mathcal{A}(k,X,Y)$ is given by:
\begin{align}
    \mathcal{A}(k,X,Y):= m\cdot\left(\log 2 - \frac{1}{2}\right) \;+ \qquad\qquad\qquad\qquad\qquad\qquad \nonumber\\
    \abs{K_{XX}}^{1/m}\cdot\left[ \left(\frac{1}{2} - \frac{1}{\pi}\right)\trace K_{XX}^{-1} + \frac{1}{\pi} Y^T K_{XX}^{-1}Y \right].
\end{align}
\end{definition}
Given these definitions, the following KL divergences may be obtained:
\begin{lemma}[KL divergences for Gaussian process classification]\label{lem:kl-gp}
Given a training set $S=(X,Y)$ with binary labels $Y\in\{\pm1\}^m$ and a kernel $k$:
    \begin{align}
        \kl(\qgp \;||\; \pgp) &= \log (1/P_Y) \leq \mathcal{A}(k,X,Y) = \kl(\qsph \;||\; \pgp).
    \end{align}
\end{lemma}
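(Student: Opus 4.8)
The plan is to exploit the reduction already in hand — namely, that by Lemma \ref{lem:kl-chain} both $\kl(\qgp\;||\;\pgp)$ and $\kl(\qsph\;||\;\pgp)$ equal their counterparts restricted to the training inputs — so that it suffices to compare the distributions $\qgp(f_X)$, $\qsph(f_X)$ and $\pgp(f_X)$ on $\R^m$. Write $A \coloneqq \{f_X \in \R^m \mid \sign f_X = Y\}$ for the target orthant. The crucial observation to set up first is that $\qgp(f_X)$ is precisely the prior $\pgp(f_X)=\normal(0,K_{XX})$ conditioned on $A$; that is, $\qgp(f_X) = \pgp(f_X)\cdot\mathbb{I}[f_X\in A]/P_Y$, where $P_Y$ is the orthant probability of Definition \ref{def:gop}.

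This lets me dispatch the first equality and the middle inequality together, with no Gaussian integrals. I would record the information-projection identity that for \emph{any} distribution $Q$ supported on $A$,
$$\kl(Q\;||\;\pgp) = \kl(Q\;||\;\qgp) + \log(1/P_Y),$$
which follows by splitting the logarithm and using that $\log(\qgp/\pgp)=\log(1/P_Y)$ is \emph{constant} on $A$. Setting $Q=\qgp$ annihilates the first term and yields $\kl(\qgp\;||\;\pgp)=\log(1/P_Y)$. Setting $Q=\qsph$, which is supported on $A$ by construction, yields $\kl(\qsph\;||\;\pgp)=\kl(\qsph\;||\;\qgp)+\log(1/P_Y)\geq\log(1/P_Y)$, which is exactly the middle inequality. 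Thus $\qgp$ being the $\kl$-projection of $\pgp$ onto $A$ delivers both statements at once.

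The remaining and most laborious step is the final equality $\kl(\qsph\;||\;\pgp)=\mathcal{A}(k,X,Y)$, which genuinely requires computation. Writing $\sigma^2\coloneqq\abs{K_{XX}}^{1/m}$, two simplifications drive everything: (i) because $\qsph(f_X)$ truncates the \emph{isotropic} Gaussian $\normal(0,\sigma^2\Id)$ to $A$, its coordinates stay independent one-sided half-normals, with $\Expect[f_i]=Y_i\,\sigma\sqrt{2/\pi}$ and $\Expect[f_i^2]=\sigma^2$; and (ii) the determinant prefactors cancel, since $\sigma^m=\abs{K_{XX}}^{1/2}$ matches the prior's normaliser — this is the entire purpose of the scaling $\sigma^2=\abs{K_{XX}}^{1/m}$. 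I would then evaluate the density ratio on $A$,
$$\log\frac{\qsph}{\pgp} = m\log 2 - \frac{1}{2\sigma^2}\norm{f_X}_2^2 + \half\,f_X^\top K_{XX}^{-1}f_X,$$
the $m\log2$ arising from the orthant normaliser $2^m$, take its expectation under $\qsph$, and assemble the second-moment matrix $\Expect_{\qsph}[f_X f_X^\top]=\sigma^2[(1-\tfrac{2}{\pi})\Id + \tfrac{2}{\pi}YY^\top]$. Substituting, together with $\trace(K_{XX}^{-1}YY^\top)=Y^\top K_{XX}^{-1}Y$, collapses the expression exactly into Definition \ref{def:k-complex}.

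The main obstacle is the bookkeeping in this last step: one must correctly obtain $\Expect_{\qsph}[\norm{f_X}_2^2]=m\sigma^2$ and the quadratic form $\half\Expect_{\qsph}[f_X^\top K_{XX}^{-1}f_X]$ from the off-diagonal cross-moments $Y_iY_j\sigma^2\cdot\tfrac{2}{\pi}$, and check that the $m(\log 2-\half)$ term emerges from combining $m\log 2$ with $-\tfrac{1}{2\sigma^2}\cdot m\sigma^2$. The isotropy of $\qsph$ is what makes all of this tractable — the identical calculation for $\qgp$ would founder on correlated truncated-Gaussian moments, which is precisely why the spherised posterior is introduced in the first place.
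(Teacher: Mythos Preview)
Your proposal is correct and follows essentially the same route as the paper: both reduce to the $f_X$-marginals via the chain rule, both obtain the first equality and the inequality from the decomposition $\kl(Q\,\|\,\pgp)=\kl(Q\,\|\,\qgp)+\log(1/P_Y)$ for $Q$ supported on the orthant, and both obtain the final equality by direct computation using the half-normal second moments $\Expect_{\qsph}[f_X^i f_X^j]=\abs{K_{XX}}^{1/m}[\delta_{ij}+\tfrac{2}{\pi}Y_iY_j(1-\delta_{ij})]$. Your only departure is cosmetic: you state the information-projection identity once and instantiate it twice, whereas the paper handles the first equality and the inequality in separate paragraphs.
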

\begin{proof} First, by Lemma \ref{lem:kl-chain} and the observation that the KL divergences from both $\qgp(f_{X^\mathsf{c}} | f_X)$ and $\qsph(f_{X^\mathsf{c}} | f_X)$ to $\pgp(f_{X^\mathsf{c}} | f_X)$ are zero, it is enough to relate $\kl(\qgp(f_X)\;||\;\pgp(f_X))$ to $\kl(\qsph(f_X)\;||\;\pgp(f_X))$.

To establish the first equality, observe that since $\qgp(f_X)$ and $\pgp(f_X)$ differ on the support of $\qgp(f_X)$ only by normalisation constant $P_Y$, it holds that:
\begin{equation*}
    \kl(Q_{\mathrm{GP}}(f_X) \;||\; P_{\mathrm{GP}}(f_X)) = \Expect_{f_X\sim Q_{\mathrm{GP}}} \log(1/P_Y)= \log (1/P_Y).
\end{equation*}
The last equality is derived by first observing that:
\begin{align*}
    &\kl(\qsph (f_X) \;||\; \pgp(f_X)) = \Expect_{f_X \sim \qsph} \log \frac{2^n\cdot\econst^{-\half\norm{f_X}_2^2\cdot\abs{K_{XX}}^{-1/m}}}{\econst^{-\half f_X^\top K_{XX}^{-1}f_X}} \\
    &\qquad\qquad= n \log 2 + \half \Expect_{f_X \sim \qsph} [f_X^\top (K_{XX}^{-1} - \Id\abs{K_{XX}}^{-1/m})f_X].
\end{align*}
To obtain $\mathcal{A}(k,X,Y)$, one must substitute in the following identity for half-Normal random variables:
\begin{gather*}
    \Expect_{f_X \sim \qsph}[f_X^i\cdot f_X^j] = \abs{K_{XX}}^{1/m}\cdot\left[\delta_{ij} + \tfrac{2}{\pi} Y_i Y_j (1-\delta_{ij})\right].
\end{gather*}

Finally, the inequality follows via:
\begin{align*}
    \kl(\qsph(f_X)\;||\;\pgp(f_X)) &= \Expect_{f_X\sim \qsph}\left[\log\tfrac{\qsph(f_X)}{\qgp(f_X)} + \log\tfrac{\qgp(f_X)}{\pgp(f_X)}\right]\\
    &= \kl(\qsph(f_X)\;||\;\qgp(f_X)) + \log(1/P_Y),
\end{align*}
and noting that $\kl(\qsph(f_X)\;||\;\qgp(f_X)) \geq 0$.
\end{proof}

\section{PAC-Bayes bounds}

Given the results in Sections \ref{sec:gpc} and \ref{sec:kl}, it is now a simple matter to write down PAC-Bayesian generalisation bounds for a Gaussian process binary classifier:

\begin{theorem}[PAC-Bayes for Gaussian process classification]\label{thm:pac-bayes-gpc} Given a kernel $k$ and a training sample $S=(X,Y)$, recall the definitions of the kernel complexity $\mathcal{A}$ (Definition \ref{def:k-complex}) and the Gaussian orthant probability $P_Y$ (Definition \ref{def:gop}). Given a data distribution $\mathcal{D}$ over $\mathcal{X}\times\{\pm1\}$, for a fraction $1-\delta$ of training sets $S\overset{\text{iid}}{\sim}\mathcal{D}^m$, the following bounds hold simultaneously:
\begin{align}
\Probe_{f\sim \qgp, (x,y)\sim\mathcal{D}}[\sign f(x)\neq y] &\leq 1 - \exp \left[-\frac{\log 1/P_Y + \log (2m/\delta)}{m-1}\right] \label{eq:gpc-bound}\\
& \leq 1 - \exp \left[-\frac{\mathcal{A}(k,X,Y) + \log (2m/\delta)}{m-1}\right];\label{eq:gpc-bound-slack}\\
\Probe_{f\sim \qsph, (x,y)\sim\mathcal{D}}[\sign f(x)\neq y] &\leq 1 - \exp \left[-\frac{\mathcal{A}(k,X,Y) + \log (2m/\delta)}{m-1}\right]. \label{eq:sph-bound}
\end{align}
\end{theorem}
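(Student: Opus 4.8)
The plan is to apply the realisable PAC-Bayes bound of Corollary \ref{cor:pac-bayes} to each of the two posteriors $\qgp$ and $\qsph$ against the common prior $\pgp$, and then to substitute the KL divergences already computed in Lemma \ref{lem:kl-gp}.

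First I would check that both posteriors are admissible inputs to the realisable corollary---that is, that each is supported only on functions correctly classifying $S$. For $\qgp$ this is immediate from Definition \ref{def:gpc-posterior}, whose train-output marginal $\qgp(f_X)$ is the Gaussian truncated to the orthant $\{f_X \in \R^m : \sign f_X = Y\}$; the analogous truncation in Definition \ref{def:gpc-posterior-spherised} gives it for $\qsph$. Corollary \ref{cor:pac-bayes} therefore applies with $Q\gets\qgp$ and, separately, with $Q\gets\qsph$. Note that the error event and the KL divergences here concern the full functions, but Lemma \ref{lem:kl-gp} already reports the full-measure divergences (the conditional pieces contributing nothing by Lemma \ref{lem:kl-chain}), so no further reduction is needed.

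The observation that secures the word \emph{simultaneously} is that Corollary \ref{cor:pac-bayes}, inheriting this from Theorem \ref{thm:pac-bayes}, holds for \emph{all} posteriors on one single $(1-\delta)$-probability event over the draw of $S$: the change-of-measure argument underlying PAC-Bayes yields a quantity independent of $Q$ to which Markov's inequality is applied once. Since $\qgp$ and $\qsph$ are just two particular posteriors, all three displayed inequalities are valid on this one event---no union bound, and hence no extra factor in $\delta$, is required.

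It then remains to substitute. Lemma \ref{lem:kl-gp} supplies $\kl(\qgp \;||\; \pgp) = \log(1/P_Y)$ and $\kl(\qsph \;||\; \pgp) = \mathcal{A}(k,X,Y)$; feeding the former into Corollary \ref{cor:pac-bayes} gives Inequality \ref{eq:gpc-bound} and the latter gives Inequality \ref{eq:sph-bound}. For the intermediate Inequality \ref{eq:gpc-bound-slack} I would use the remaining relation $\log(1/P_Y)\leq\mathcal{A}(k,X,Y)$ from Lemma \ref{lem:kl-gp} together with monotonicity: the map $z\mapsto 1-\exp[-(z+\log(2m/\delta))/(m-1)]$ is increasing, so replacing $\log(1/P_Y)$ by the larger $\mathcal{A}(k,X,Y)$ can only weaken the bound. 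The computation carries no real obstacle; the only step warranting care is the simultaneity argument, whose naïve reading might suggest a costly union bound that the uniform-over-posteriors nature of PAC-Bayes in fact avoids.
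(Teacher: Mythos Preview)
Your proposal is correct and follows essentially the same approach as the paper: instantiate the PAC-Bayes bound with prior $\pgp$ and the two posteriors, then substitute the KL divergences from Lemma \ref{lem:kl-gp}. The paper's proof is a one-liner citing Theorem \ref{thm:pac-bayes} directly, whereas you cite the realisable Corollary \ref{cor:pac-bayes}; since both posteriors are supported on interpolating functions this is the appropriate specialisation, and your added remarks on realisability, simultaneity, and the monotonicity step for Inequality \ref{eq:gpc-bound-slack} simply make explicit what the paper leaves implicit.
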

\begin{proof} Instantiate Theorem \ref{thm:pac-bayes} with prior $\pgp(f)$, the two posteriors $\qgp(f)$ and $\qsph(f)$ and the KL divergences from Lemma \ref{lem:kl-gp}.
\end{proof}

According to Theorem \ref{thm:pac-bayes-gpc}, the posterior $\qgp$ enjoys a tighter risk bound than the spherised posterior $\qsph$. So why introduce the spherised posterior? There are two reasons:
\begin{enumerate}
    \item It is significantly easier to sample from the spherised posterior (Definition \ref{def:gpc-posterior-spherised}) than the original posterior (Definition \ref{def:gpc-posterior}).
    \item Since the kernel complexity measure $\mathcal{A}(k,X,Y)$ is a closed-form analytical expression, whereas the Gaussian orthant probability $P_Y$ requires computing a high-dimensional integral, Inequality \ref{eq:sph-bound} is much easier to evaluate than Inequality \ref{eq:gpc-bound}.
\end{enumerate}

These results will be used in Chapter \ref{chap:bpm} to study generalisation in neural networks through the lens of the neural network--Gaussian process correspondence.

\printbibliography[heading=subbibliography]
\end{refsection}
\begin{refsection}

\chapter{Neural Networks as Bayes Point Machines}\label{chap:bpm}

\begin{tcolorbox}
This chapter introduces a novel correspondence between neural networks and kernels. In particular, as both width and normalised margin are sent to infinity, the neural network function space concentrates on a particular kernel classifier. This kernel classifier aggregates over all infinitely wide neural networks that correctly classify the train sample. This offers a new, potentially fruitful perspective as to why neural networks generalise.
\end{tcolorbox}

Chapter \ref{chap:gp-pac-bayes} derived PAC-Bayes bounds for Gaussian process classification. These bounds may be readily transferred to neural networks---albeit infinitely wide ones---by leveraging the neural network--Gaussian process correspondence from Section \ref{sec:nngp}. Having these bounds is certainly nice, and they have been found to be non-vacuous \citep{seeger,Prez2020GeneralizationBF}. But what would be nicer is having the bounds inform some aspect of deep learning practice. For instance: could the bounds tell which single function should generalise best?

There is a problem, however. Theorem \ref{thm:pac-bayes-gpc} bounds the misclassification rate of the Gaussian process averaged over posterior draws. This leaves room for individual draws to generalise either significantly worse or significantly better than average. Then which single function should be returned in practice? 

To answer this question, the chapter takes a detour through \textit{Bayesian classification strategies}. A Bayesian classification strategy is a rule for combining an input $x$ with a posterior distribution $Q$ over classifiers to yield a prediction. The most natural of these strategies involve either randomly sampling a single posterior function or aggregating over the posterior. Bayesian wisdom, and also certain technical results \citep{lacasse}, suggest that aggregation over the posterior should perform best. This is for the simple reason that aggregation removes the variance incurred by sampling a posterior function. 

But the trouble with aggregation is that it is expensive. Naïvely, it involves either integrating or summing over lots of posterior functions. To get the benefits of aggregation without the associated cost, this chapter picks up on an old idea from the kernels literature.  A \textit{Bayes point machine} \citep{bpms} is a single posterior sample that, by itself, approximates the posterior aggregate. In the context of neural networks, this suggests the question:
\begin{quote}
    Can a single neural network report the aggregated classification of an ensemble of networks?
\end{quote}
Given the expressive power of the neural network function class, it might seem reasonable that the answer could be yes. In that case:
\begin{quote}
    How can such a network be found?
\end{quote}
This chapter attempts to resolve these two questions. The chapter argues that, in the limit of large width and large normalised margin, the entire space of neural networks that interpolate a training set \textit{concentrates} on a kernel classifier that itself aggregates over a posterior distribution. This implies that a single wide neural network trained to large normalised margin will attain the same reduction in the variance of its predictions as an aggregated Bayesian method.

\section{Bayesian classification strategies}

From a Bayesian perspective, there are three natural ways to use a posterior $Q$ over functions to classify a fresh input $x\in\mathcal{X}$. The first is the random strategy:
\begin{definition}[Gibbs classifier]
The \textit{Gibbs classifier} returns a random draw:
    \begin{equation}
        f_\mathrm{Gibbs}(x) \coloneqq \sign f(x)  \text{ for } f\sim Q.
    \end{equation}
\end{definition}
PAC-Bayes (Theorem \ref{thm:pac-bayes}) bounds the probability that the Gibbs classifier misclassifies a randomly drawn test point. But the Gibbs classifier, being random, has the misfortune of containing variance. A Bayesian would like to deal with this issue by integrating, or \textit{aggregating}, over the posterior, and thus removing this variance. This motivates the second strategy:
\begin{definition}[Bayes classifier]
The \textit{Bayes classifier} returns the majority vote:
    \begin{equation}
        f_\mathrm{Bayes}(x) \coloneqq \sign \Expect_{f\sim Q} \sign f(x).
    \end{equation}
\end{definition}
The majority is one form of aggregation. The third strategy employs another:
\begin{definition}[BPM classifier]
    The \textit{BPM classifier} returns the simple average:
        \begin{equation}
            f_\mathrm{BPM}(x) \coloneqq \sign \Expect_{f\sim Q} f(x).
        \end{equation}
\end{definition}

The abbreviation BPM is short for \textit{Bayes point machine}. Two observations motivate this terminology:
\begin{enumerate}
    \item The BPM classifier is obtained by reversing the order of the sign and expectation operators in the Bayes classifier:
\begin{equation}\label{eq:approx}
\underbrace{\sign \tikzmarknode{a}{\Expect}_{f\sim Q} \,\tikzmarknode{b}{\sign} f(x)}_{\text{Bayes classifier}} \approx \underbrace{\sign \Expect_{f\sim Q}f(x)}_{\text{BPM classifier}}.
\end{equation}\tikz[remember picture, overlay]{\draw[latex-latex] ([yshift=0.15em,xshift=0.5em]a.north) to[bend left] ([yshift=0.15em,xshift=-0.2em]b.north);}%
This operator exchange amounts to approximating the ensemble majority by a single point in function space: the ensemble centre-of-mass. Approximation \ref{eq:approx} is referred to as the \textit{the BPM approximation}. The quality of this approximation will be considered in this chapter.
\item Suppose that the classifier has \textit{hidden linearity} when represented in weight space. In particular, consider classifier $f_{\mathrm{\phi}}(x;w) \coloneqq \phi(x)^\top w$, where $\phi$ is an arbitrary nonlinear input embedding. Then:
\begin{equation}\label{eq:linear}
\underbrace{\sign \Expect_{w\sim Q}f_\phi(x;w)}_{\text{BPM classifier}} = \sign f_\phi(x;\underbrace{\Expect_{w\sim Q}w}_{\mathclap{\text{weight space centre-of-mass}}}).
\end{equation}
In words: for linear classifiers, the BPM classifier is equivalent to a single point in weight space: the posterior $Q$'s weight space centre-of-mass.
\end{enumerate}

Of course, since the $\sign$ function is nonlinear, the BPM approximation is not always correct. \citet{herbrich_book} calls it a \textit{trick}. Is the approximation ever correct? In the case of hidden linearity (Equation \ref{eq:linear}), the approximation is correct when over half the ensemble agrees with the centre-of-mass on an input. This happens, for example, when the posterior $Q$ is point symmetric about the centre-of-mass \citep{herbrich_book}. But point symmetry is a strong assumption that does not hold for, say, the posterior of a GP classifier (Definition \ref{def:gpc-posterior}).

The next section presents a novel result on the quality of the BPM approximation. This result leverages a novel connection between Bayesian classification strategies and certain objects of study in convex geometry \citep{grunbaum} and social choice theory \citep{meanvoter}. The result leads to a novel bound on the generalisation error of the BPM classifier.

\section{Relationships between classification strategies}

This section presents relations, one of which is novel, between the test error of the three classification strategies introduced in the previous section. In each case, the test error is measured over a data distribution $\mathcal{D}$ on $\mathcal{X}\times\{\pm1\}$. It will help to formally define the three notions of error considered:

First, the Gibbs error measures the misclassification rate averaged over both the data distribution and the posterior:
\begin{definition}[Gibbs error] The \textit{Gibbs error} $\gibbserr\in[0,1]$ is given by:
    \begin{equation}
        \gibbserr \coloneqq \Expect_{f\sim Q} \Expect_{(x,y)\sim\mathcal{D}}\mathbb{I}\big[\sign f(x)\neq y\big].
    \end{equation}
\end{definition}

Meanwhile, the Bayes error measures the misclassification rate of the posterior majority, averaged over the data distribution:
\begin{definition}[Bayes error]The \textit{Bayes error} $\bayeserr \in [0,1]$ is given by:
    \begin{equation}
        \bayeserr \coloneqq \Expect_{(x,y)\sim\mathcal{D}}\mathbb{I}\big[f_\mathrm{Bayes}(x)\neq y\big];
    \end{equation}
\end{definition}

Finally, the BPM error measures the misclassification rate of the posterior mean, averaged over the data distribution:
\begin{definition}[BPM error]The \textit{BPM error} $\bpmerr\in[0,1]$ is given by:
    \begin{equation}
        \bpmerr \coloneqq \Expect_{(x,y)\sim\mathcal{D}}\mathbb{I}\big[f_\mathrm{BPM}(x)\neq y\big].
    \end{equation}
\end{definition}

Various relationships exist between these three notions of error. A classic example is that the Bayes error cannot be more than twice the Gibbs error:
\begin{lemma}[Pessimistic Gibbs--Bayes]\label{lem:gibbs-bayes} For any ensemble of classifiers $Q$,
\begin{equation*}
    \bayeserr \leq 2 \cdot \gibbserr.
\end{equation*}
\end{lemma}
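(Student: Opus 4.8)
The plan is to reduce the claimed global inequality to a pointwise statement at a single test input and then integrate over the data distribution. First I would fix a test pair $(x,y)$ and introduce the \emph{local} Gibbs error $p(x,y) \coloneqq \Probe_{f\sim Q}[\sign f(x) \neq y]$, which records the $Q$-mass of ensemble members that misclassify this particular point. The whole argument then rests on a single observation: the Bayes (majority-vote) classifier can only err at $(x,y)$ when at least half of the ensemble also errs there. Concretely, if $f_\mathrm{Bayes}(x) \neq y$ then $\sign \Expect_{f\sim Q}\sign f(x)$ disagrees with $y$, which forces the $Q$-mass voting for the wrong label to be no smaller than the mass voting for $y$; hence $p(x,y) \geq \tfrac{1}{2}$.

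From this I would extract the pointwise inequality $\mathbb{I}[f_\mathrm{Bayes}(x)\neq y] \leq 2\,p(x,y)$. It holds trivially when the Bayes classifier is correct, since the left-hand side is then zero and the right-hand side is nonnegative; and when the Bayes classifier is wrong, the left-hand side equals $1$ while the previous step guarantees $2\,p(x,y) \geq 1$. Taking the expectation of this inequality over $(x,y)\sim\mathcal{D}$ gives $\bayeserr \leq 2\,\Expect_{(x,y)\sim\mathcal{D}}\,p(x,y)$, and recognising that $\Expect_{(x,y)\sim\mathcal{D}}\,p(x,y) = \Expect_{(x,y)\sim\mathcal{D}}\Expect_{f\sim Q}\mathbb{I}[\sign f(x)\neq y] = \gibbserr$, after exchanging the order of the two expectations (justified by Tonelli, since the integrand is nonnegative), completes the result.

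The calculations are light, so the only real subtlety is making the majority-vote step airtight at the boundary. I would need to pin down the convention for $\sign$ at zero and, more importantly, handle the tie case in which exactly half the ensemble votes each way: here $\Expect_{f\sim Q}\sign f(x) = 0$, the Bayes prediction is whatever the tie-breaking convention dictates, and $p(x,y)$ equals exactly $\tfrac{1}{2}$, so the bound $2\,p(x,y)\geq 1$ still holds regardless of how the tie is resolved. Dispatching this edge case is the main obstacle, and once it is handled the inequality is immediate.
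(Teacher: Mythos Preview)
Your proposal is correct and follows essentially the same argument as the paper: both reduce to the pointwise inequality $\mathbb{I}[f_\mathrm{Bayes}(x)\neq y] \leq 2\,\gibbserr(x,y)$ via the observation that a majority-vote error forces at least half the ensemble to err, then integrate over $\mathcal{D}$. Your treatment of the tie case is slightly more careful than the paper's, which leaves that edge implicit.
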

\begin{proof}
    First, consider the Bayes and Gibbs errors on a single datapoint $(x,y)$:
    \begin{align*}
    \bayeserr(x,y) &\coloneqq \mathbb{I}\left[\sign \Expect_{f\sim Q}\sign f(x)\neq y\right]; \\
    \gibbserr(x,y)&\coloneqq\Expect_{f\sim Q} \mathbb{I}\left[\sign f(x)\neq y\right].
    \end{align*}
    When the Bayes classifier is correct, $\bayeserr(x,y)=0$. When the Bayes classifier is incorrect, $\bayeserr(x,y)=1$ and $\gibbserr(x,y) \geq 1/2$. In either case:
    \begin{align*}
        \bayeserr(x,y)\leq 2\cdot \gibbserr(x,y).
    \end{align*}
    Taking the expectation over $(x,y)\sim\mathcal{D}$ yields the result.
\end{proof}

This result is tagged \textit{pessimistic} since one often expects the Bayes classifier to significantly outperform the Gibbs classifier: $\bayeserr \ll \gibbserr$. This is because the Gibbs classifier is noisy, whereas the Bayes classifier aggregates over this noise. For this reason, \citet{seeger} referred to Lemma \ref{lem:gibbs-bayes} as \textit{crude}.

A potentially less crude relationship is given by the following lemma:
\begin{lemma}[Optimistic Gibbs--Bayes]\label{lem:gibbs-bayes-opt} Define the average Gibbs agreement:
\begin{equation*}
\alpha_{\mathrm{Gibbs}}\coloneqq\Expect_{x\sim\mathcal{D}}\left[\left[\Expect_{f\sim Q}\sign f(x)\right]^2\right]\in[0,1].
\end{equation*}
Then, for any ensemble of classifiers $Q$,
\begin{equation*}
    \bayeserr \leq 1 - \frac{(1-2 \cdot \gibbserr)^2}{\alpha_{\mathrm{Gibbs}}}.
\end{equation*}
\end{lemma}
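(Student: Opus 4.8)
The plan is to reduce everything to a single scalar random variable and then apply a one-sided second-moment inequality. Define the \emph{signed agreement} $g(x) \coloneqq \Expect_{f\sim Q}\sign f(x) \in [-1,1]$, and introduce the random variable $Z \coloneqq y\cdot g(x)$ where $(x,y)\sim\mathcal{D}$. Since $y\in\{\pm1\}$ we have $Z^2 = g(x)^2$, so that $\Expect[Z^2] = \Expect_{x\sim\mathcal{D}}[g(x)^2] = \alpha_{\mathrm{Gibbs}}$; the denominator in the statement is exactly the second moment of $Z$.

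First I would rewrite the Gibbs error in terms of $g$. Writing $p(x)\coloneqq\Probe_{f\sim Q}[\sign f(x)=+1]$ gives $g(x) = 2p(x)-1$, and a short case split on $y$ shows that the pointwise Gibbs error equals $(1 - y\,g(x))/2 = (1-Z)/2$. Taking the expectation over $\mathcal{D}$ yields $\Expect[Z] = 1 - 2\gibbserr$, identifying the numerator. Next I would observe that the Bayes classifier satisfies $f_{\mathrm{Bayes}}(x) = \sign g(x)$, so it is correct on $(x,y)$ exactly when $y\,g(x)>0$; hence $1-\bayeserr = \Probe[Z>0]$ (ties $g(x)=0$ count as errors, which only helps the bound).

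With these identifications the claim becomes the purely probabilistic inequality $\Probe[Z>0]\ge (\Expect Z)^2/\Expect[Z^2]$, valid whenever $\Expect Z \ge 0$. The crux is to prove this via a one-sided Cauchy--Schwarz argument: since the contribution of the event $\{Z\le 0\}$ to $\Expect Z$ is nonpositive, $\Expect Z \le \Expect[Z\,\mathbb{I}[Z>0]] \le \sqrt{\Expect[Z^2]}\cdot\sqrt{\Probe[Z>0]}$, and squaring (legitimate because $\Expect Z\ge 0$) rearranges to the desired bound. Substituting $\Expect Z = 1-2\gibbserr$ and $\Expect[Z^2]=\alpha_{\mathrm{Gibbs}}$ and rewriting $1-\bayeserr=\Probe[Z>0]$ then finishes the proof.

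The main obstacle is the sign condition: the squaring step --- and indeed the inequality itself --- requires $\Expect Z = 1-2\gibbserr \ge 0$, i.e. $\gibbserr\le\tfrac12$. This is precisely the regime in which the bound is meaningful (a constant classifier with $\gibbserr=1$ gives $\bayeserr=1$ while the right-hand side is $0$), so I would state the lemma under the hypothesis $\gibbserr\le 1/2$ or explicitly note that outside this regime the bound is vacuous. A secondary, minor point is the treatment of exact ties $\sign f(x)=0$ or $g(x)=0$, which I would absorb by the convention that a tie is a misclassification, making $\Probe[Z>0]$ the correct expression for $1-\bayeserr$.
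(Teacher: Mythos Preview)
Your proof is correct and complete. The paper itself does not supply a proof for this lemma: it simply states that ``This result is usually known by a different name: \emph{the $\mathcal{C}$-bound}. Its proof is given by \citet{lacasse}.'' So there is no in-paper argument to compare against; you have filled in what the thesis outsources to a citation.

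Your route---reducing to the scalar $Z=y\,g(x)$, identifying $\Expect Z=1-2\gibbserr$ and $\Expect Z^2=\alpha_{\mathrm{Gibbs}}$, and then applying the one-sided Cauchy--Schwarz bound $\Probe[Z>0]\ge(\Expect Z)^2/\Expect[Z^2]$---is essentially the standard second-moment proof of the $\mathcal{C}$-bound. Your observation that the squaring step requires $\gibbserr\le\tfrac12$ is well taken: the lemma as stated in the paper omits this hypothesis, and your counterexample (a constant classifier that is always wrong) shows the inequality genuinely fails when $\gibbserr>\tfrac12$. This is a real omission in the paper's statement, not a defect in your argument; the cited source states the bound under exactly the positivity assumption you identify.
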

This result is usually known by a different name: \textit{the $\mathcal{C}$-bound}. Its proof is given by \citet{lacasse}. The result is tagged \textit{optimistic} since it is capable of expressing that the Bayes classifier can significantly outperform the Gibbs classifier: $\bayeserr \ll \gibbserr$. In particular, this happens when the ensemble members make very noisy predictions, such that the Gibbs error $\gibbserr$ is large but the Gibbs agreement $\alpha_{\mathrm{Gibbs}}$ is small.

This thesis proves a novel relationship, analogous to Lemma \ref{lem:gibbs-bayes}, between the BPM error and the Gibbs error. The result leverages the following relationship between (sub)majorities and averages:

\begin{lemma}[Weighted Gr\"unbaum's inequality]\label{lem:grunbaum}
Let $Q$ be a log-concave probability density supported on a convex subset of $\R^d$ with positive volume. Let $\mu\in\R^d$ denote the mean $\mu\coloneqq \Expect_{w\sim Q}w$. Then for any vector $x\in\R^d$:
\begin{equation*}
\Probe_{w\sim Q} \left[\sign [w^\top x] = \sign [\mu^\top x]\right] \geq 1/e.
\end{equation*}
\end{lemma}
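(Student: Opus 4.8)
The plan is to reduce the $d$-dimensional statement to a one-dimensional statement about log-concave densities, and then to prove that one-dimensional statement by identifying its extremal density. First I would project: fix the direction $x$ and push $Q$ forward along the linear map $w \mapsto w^\top x$. Since marginals (linear images) of a log-concave density are again log-concave---a standard consequence of Prékopa's theorem---the pushforward is a log-concave density $g$ on $\R$. Its mean is $\Expect_{w\sim Q}[w^\top x] = \mu^\top x \eqqcolon t_0$, because expectation commutes with the linear functional. The event $\{\sign(w^\top x) = \sign(\mu^\top x)\}$ is invariant under the replacement $x \mapsto -x$, so without loss of generality I may take $t_0 = \mu^\top x > 0$ (the boundary case $t_0 = 0$ is excluded, as then $\sign(\mu^\top x)=0$ and the continuous density assigns zero mass to $\{w^\top x = 0\}$). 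Under this reduction the claim becomes $\Probe_{t\sim g}[t > 0] \ge 1/e$.

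Next I would use the nesting $\{t \ge t_0\} \subseteq \{t > 0\}$, valid because $t_0 > 0$, to reduce to controlling the mass strictly beyond the mean: it suffices to show that the log-concave density $g$ places mass at least $1/e$ on the half-line $\{t \ge t_0\}$ lying past its own mean $t_0$. After translating the mean to the origin, this is exactly the classical one-dimensional Grünbaum bound for log-concave measures: \emph{a log-concave probability density on $\R$ with mean $0$ assigns mass at least $1/e$ to each of the half-lines $[0,\infty)$ and $(-\infty,0]$}. Both half-lines are covered at once, because the reflection $t \mapsto -t$ preserves log-concavity and the vanishing of the mean, so it is enough to treat $[0,\infty)$.

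The crux is this one-dimensional bound, which I would prove by an extremality argument: among all log-concave densities with mean $0$, the one minimising the mass on $[0,\infty)$ is a one-sided shifted exponential. Concretely, writing $g = \econst^{-V}$ with $V$ convex, one straightens $V$ into an affine function on each side of the origin while preserving the value $g(0)$ and the two side-masses, checking that this operation can only decrease the right-hand mass once the mean-zero constraint is restored; iterating drives $g$ to the extremal profile $g(t) \propto \econst^{-t}\,\mathbb{I}[t \ge -1]$, for which a direct computation gives $\int_0^\infty g = \econst^{-1} = 1/e$. I expect the honest justification of this extremality step---verifying that the straightening is mass-monotone and that the infimum is genuinely attained by the exponential rather than escaping to a degenerate limit---to be the main obstacle, whereas the projection and nesting steps are routine.

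An alternative that sidesteps the variational argument entirely is to invoke the log-concave Grünbaum inequality directly from the literature (for instance the halfspace-through-the-centroid bound of Lovász and Vempala), applied to the halfspace $\{w : w^\top x \ge \mu^\top x\}$, whose boundary passes through $\mu$ and which is contained in $\{w : \sign(w^\top x) = \sign(\mu^\top x)\}$ once $\mu^\top x > 0$. In the write-up I would likely present the self-contained reduction of the first two paragraphs and then either prove or cite the one-dimensional bound, since that single inequality carries the entire content of the lemma.
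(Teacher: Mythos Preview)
The paper does not actually prove this lemma: it states the result and attributes it to \citet{meanvoter}, noting that it generalises Gr\"unbaum's original convex-geometry inequality. There is no argument given in the paper beyond this citation.

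Your proposal therefore goes well beyond the paper. The approach you outline is correct and is essentially the standard route to this inequality: push forward along the linear functional $w\mapsto w^\top x$ (Pr\'ekopa gives log-concavity of the marginal), reduce to the statement that a one-dimensional log-concave density places mass at least $1/\econst$ on the half-line past its mean, and identify the shifted one-sided exponential as the extremiser. Your nesting step $\{t\ge t_0\}\subseteq\{t>0\}$ is the right way to pass from the halfspace through the centroid to the halfspace through the origin. The alternative you mention---citing the log-concave Gr\"unbaum/Lov\'asz--Vempala halfspace bound directly and observing the containment---is exactly in the spirit of what the paper does, just with a different reference. Either presentation would be fine; your self-contained reduction is more informative than the paper's bare citation.
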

In words: for any input $x$, a fraction of at least $1/\econst\approx36\%$ of the distribution reports the same binary classification as the mean. This result is due to economists \citet{meanvoter}, who were working in social choice theory. Their interest was in understanding what fraction of an electorate can disagree with the most average individual voter. The result generalises an inequality of \citet{grunbaum} on mass partitions in convex geometry.

Lemma \ref{lem:grunbaum} leads directly to the following analogue of Lemma \ref{lem:gibbs-bayes}:
\begin{lemma}[Pessimistic Gibbs--BPM]\label{lem:gibbs-bpm} Consider an ensemble of classifiers whose distribution at all inputs $x\in\mathcal{X}$ follows:
\begin{equation*}
    f_\phi(x;w) = w^\top \phi(x), \qquad w\sim Q,
\end{equation*}
for arbitrary nonlinear input embedding $\phi$, and log-concave probability density $Q$ supported on a convex subset of $\R^d$ with positive volume. Then:
\begin{equation*}
    \bpmerr \leq \econst \cdot \gibbserr.
\end{equation*}
\end{lemma}
\begin{proof} The proof mirrors the structure of the proof of Lemma \ref{lem:gibbs-bayes}. First, consider the BPM and Gibbs error on a single datapoint $(x,y)$:
    \begin{align*}
    \bpmerr(x,y) &\coloneqq \mathbb{I}\left[\sign \Expect_{w\sim Q}w^\top \phi(x)\neq y\right]; \\
    \gibbserr(x,y)&\coloneqq\Expect_{w\sim Q} \mathbb{I}\left[\sign w^\top \phi(x)\neq y\right].
    \end{align*}
    When the BPM classifier is correct, $\bpmerr(x,y)=0$. When the BPM classifier errs, $\bpmerr(x,y)=1$ and $\gibbserr(x,y) \geq 1/e$ by Lemma \ref{lem:grunbaum}. In either case:
    \begin{align*}
        \bpmerr(x,y)\leq \econst\cdot \gibbserr(x,y).
    \end{align*}
    Taking the expectation over $(x,y)\sim\mathcal{D}$ yields the result.
\end{proof}

So, under the stated conditions of Lemma \ref{lem:gibbs-bpm}, the BPM error cannot be more than $\econst$ times the Gibbs error. The result is tagged \textit{pessimistic} since, in practice, one might expect the BPM classifier to perform significantly better than the noisy Gibbs classifier: $\bpmerr\ll\gibbserr$. While proving this intuition appears to be an open problem, the rest of this section provides one potential route.

The idea is that, when the Gibbs classifier is very noisy, Lemma \ref{lem:gibbs-bayes-opt} suggests a more optimistic relationship between the Gibbs and Bayes errors. But if the BPM approximation (Approximation \ref{eq:approx}) is good, then the BPM classifier should inherit the same favourable properties as the Bayes classifier. To pursue this idea, it will help to formalise the BPM approximation error:

\begin{definition}[BPM approximation error] The \textit{BPM approximation error} $\Delta$ is defined as follows: 
\begin{equation}
    \Delta \coloneqq \Expect_{(x,y)\sim\mathcal{D}}\mathbb{I}\big[f_\mathrm{BPM}(x)\neq f_\mathrm{Bayes}(x)\big].
\end{equation}
\end{definition}
So the BPM approximation error $\Delta$ measures at what rate the BPM classifier and the Bayes classifier disagree. The BPM approximation error $\Delta$ relates the BPM and Bayes errors as follows:

\begin{lemma}[Bayes--BPM]\label{lem:bayes-bpm} For any ensemble of classifiers $Q$,
    \begin{equation*}
        \bpmerr \leq \bayeserr + \Delta.
    \end{equation*}
\end{lemma}
\begin{proof}
    First consider the BPM error, Bayes error and BPM approximation error on a single datapoint $(x,y)$:
    \begin{align*}
    \bpmerr(x,y) &\coloneqq \mathbb{I}\left[f_\mathrm{BPM}(x)\neq y\right]; \\
    \bayeserr(x,y)&\coloneqq \mathbb{I}\left[f_\mathrm{Bayes}(x)\neq y\right] ;\\
    \Delta(x,y) &\coloneqq \mathbb{I}\big[f_\mathrm{BPM}(x)\neq f_\mathrm{Bayes}(x)\big].
    \end{align*}
    When the BPM classifier is correct, $\bpmerr(x,y)=0$. Otherwise, $\bpmerr(x,y)=1$ and either $\bayeserr(x,y)=1$ and $\Delta(x,y)=0$ or vice versa. Thus:
    \begin{align*}
    \bpmerr(x,y) \leq \bayeserr(x,y) + \Delta(x,y).
    \end{align*}
    Taking the expectation over $(x,y)\sim\mathcal{D}$ yields the result.
\end{proof}

Lemmas \ref{lem:gibbs-bayes-opt} and \ref{lem:bayes-bpm} may be directly combined to yield the following result:

\begin{lemma}[Optimistic Gibbs--BPM]\label{lem:gibbs-bpm-opt}
    Let $\alpha_{\mathrm{Gibbs}}$ denote the average Gibbs agreement (Lemma \ref{lem:gibbs-bayes-opt}) and let $\Delta$ denote the BPM approximation error. Then:
    \begin{equation*}
        \bpmerr \leq 1 - \frac{(1-2 \cdot \gibbserr)^2}{\alpha_{\mathrm{Gibbs}}} + \Delta.
    \end{equation*}
\end{lemma}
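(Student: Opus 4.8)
The plan is to obtain the result by chaining the two immediately preceding lemmas, since the target inequality is their direct composition. First I would invoke Lemma~\ref{lem:bayes-bpm}, which controls the BPM error by the Bayes error plus the BPM approximation error,
\begin{equation*}
    \bpmerr \leq \bayeserr + \Delta.
\end{equation*}
This step routes the analysis through the Bayes classifier, which is the natural intermediary: the BPM classifier and the Bayes classifier are both aggregation strategies, and $\Delta$ exactly quantifies the rate at which they disagree.

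Next I would bound the Bayes error on the right-hand side using Lemma~\ref{lem:gibbs-bayes-opt} (the $\mathcal{C}$-bound),
\begin{equation*}
    \bayeserr \leq 1 - \frac{(1-2 \cdot \gibbserr)^2}{\alpha_{\mathrm{Gibbs}}},
\end{equation*}
and then substitute this into the previous display to conclude
\begin{equation*}
    \bpmerr \leq \bayeserr + \Delta \leq 1 - \frac{(1-2 \cdot \gibbserr)^2}{\alpha_{\mathrm{Gibbs}}} + \Delta.
\end{equation*}

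The only point requiring any care is checking that the two lemmas apply under a common hypothesis, so that they may legitimately be composed. Here this is immediate: both Lemma~\ref{lem:gibbs-bayes-opt} and Lemma~\ref{lem:bayes-bpm} are stated to hold for \emph{any} ensemble of classifiers $Q$, with no structural assumption (in contrast to the log-concavity requirement of Lemma~\ref{lem:gibbs-bpm}), so their conjunction holds in the same generality and the substitution is valid. I therefore do not expect any genuine obstacle; the substantive content lives entirely in the two ingredient lemmas, and the present result is a clean deterministic combination of them. If anything, the interpretive remark worth emphasising is that the $\Delta$ term isolates precisely the cost of the BPM approximation, so that whenever $\Delta$ is small the BPM classifier inherits the optimistic variance-reduction behaviour that the $\mathcal{C}$-bound grants the Bayes classifier.
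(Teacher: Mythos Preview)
Your proposal is correct and matches the paper's approach exactly: the paper simply states that Lemmas~\ref{lem:gibbs-bayes-opt} and~\ref{lem:bayes-bpm} ``may be directly combined to yield'' the result, and your chaining of $\bpmerr \leq \bayeserr + \Delta$ with the $\mathcal{C}$-bound on $\bayeserr$ is precisely that combination.
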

In words: when the BPM classifier is a good approximation to the Bayes classifier, and when the Gibbs classifier is noisy such that the Gibbs error is large but the Gibbs agreement is small, then the BPM classifier can substantially outperform the Gibbs classifier.

\section{Kernel interpolation as a Bayes point machine}
\label{sec:k-bpm}

This section shows that kernel interpolators of minimum RKHS norm (Equation \ref{eq:min-rkhs-interpolator}) are themselves Bayes point machine classifiers. While, in itself, this observation is not novel \citep{seeger}, it yields novel PAC-Bayes generalisation bounds for kernel interpolators when combined with the novel Lemma \ref{lem:gibbs-bpm}.

To begin, the BPM classifier of the Gaussian process posterior $\qgp$ is the sign of the kernel interpolator of \textit{centre-of-mass labels} $\Expect_{f_X\sim \qgp}f_X$:

\begin{lemma}[BPM of a Gaussian process classifier is a kernel interpolator]\label{lem:gpc-bpm} For the Gaussian process classification posterior $\qgp$ (Definition \ref{def:gpc-posterior}),
\begin{equation}
    f_{\mathrm{BPM}}(x) = \sign [K_{xX} K_{XX}^{-1} \Expect_{f_X\sim \qgp}f_X ].
\end{equation}
\end{lemma}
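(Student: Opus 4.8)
The plan is to unfold the definition of the BPM classifier and then exploit the two-stage sampling structure of the posterior $\qgp$ via the tower property of expectation. By definition, the BPM classifier satisfies $f_{\mathrm{BPM}}(x) = \sign \Expect_{f\sim\qgp} f(x)$, so the entire task reduces to computing the scalar $\Expect_{f\sim\qgp} f(x)$ at a fresh input $x$ and then applying $\sign$.

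First I would recall from Definition \ref{def:gpc-posterior} that a draw $f\sim\qgp$ is generated in two stages: one samples the train outputs $f_X$ from the truncated Gaussian $\normal(0, K_{XX}\mid \sign f_X = Y)$, and then samples the remaining outputs---here just the single value $f(x)$, obtained by taking the ``other inputs'' to be $X^\prime=\{x\}$---from the conditional measure $\qgp(f_{X^\prime}\mid f_X) \coloneqq \pgp(f_{X^\prime}\mid f_X)$. By Definition \ref{def:gpc-prior}, this conditional is Gaussian with mean $K_{xX} K_{XX}^{-1} f_X$, so its conditional expectation is
\begin{equation*}
    \Expect[f(x)\mid f_X] = K_{xX} K_{XX}^{-1} f_X,
\end{equation*}
which is linear in $f_X$.

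The key step is then the tower property: averaging first over $f(x)\mid f_X$ and then over $f_X$ gives
\begin{equation*}
    \Expect_{f\sim\qgp} f(x) = \Expect_{f_X\sim\qgp}\big[\Expect[f(x)\mid f_X]\big] = \Expect_{f_X\sim\qgp}\big[K_{xX} K_{XX}^{-1} f_X\big] = K_{xX} K_{XX}^{-1}\, \Expect_{f_X\sim\qgp} f_X,
\end{equation*}
where the final equality pulls the constant matrix $K_{xX} K_{XX}^{-1}$ outside the expectation by linearity. Applying $\sign$ to both sides then yields the claimed identity exactly.

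There is no serious obstacle here; the only point requiring any care is justifying that the linear map commutes with the outer expectation---which is immediate since $K_{xX} K_{XX}^{-1}$ does not depend on the random variable $f_X$---together with the observation that the truncation appearing in $\qgp(f_X)$ alters only the \emph{distribution} of $f_X$ and not the conditional law of $f(x)\mid f_X$, which is inherited unchanged from the prior $\pgp$. This last observation is precisely what makes the posterior mean a kernel interpolator, now applied to the \emph{centre-of-mass labels} $\Expect_{f_X\sim\qgp} f_X$ rather than to the raw labels $Y$.
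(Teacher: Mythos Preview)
Your proof is correct and follows essentially the same approach as the paper. The paper's version makes the second-stage sampling explicit by writing $f(x)=K_{xX}K_{XX}^{-1}f_X+\xi$ with independent zero-mean noise $\xi$ and then averaging over $(\xi,f_X)$, whereas you package the same computation as the tower property plus $\Expect[f(x)\mid f_X]=K_{xX}K_{XX}^{-1}f_X$; these are the same argument in slightly different notation.
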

\begin{proof}
The Gibbs classifier of $\qgp$ classifies a test point $x$ in three steps:
\begin{align*}
\text{Sample train outputs:}\quad &f_X \sim \normal\big(0,K_{XX} \;|\; \sign f_X = Y \big);\\
\text{Sample noise:}\quad &\xi \sim \normal\left(0, K_{xx} - K_{xX}K_{XX}^{-1} K_{Xx}\right);\\
\text{Return:}\quad &\sign [K_{xX} K_{XX}^{-1} f_X + \xi ].
\end{align*}
Then, recalling that the BPM classifier is obtained by reversing the order of sign and expectation in the Bayes classifier, the BPM classifier is given by:
\begin{align*}
	&\sign \tikzmarknode{a}{\Expect}_{\xi,f_X}\,\tikzmarknode{b}{\sign} [K_{xX} K_{XX}^{-1} f_X + \xi ] \qquad \tag{Bayes classifier}\\
	&\qquad\approx\sign \Expect_{\xi,f_X} [K_{xX} K_{XX}^{-1} f_X + \xi ]\tag{BPM classifier}\\
	&\qquad=\sign K_{xX} K_{XX}^{-1} \Expect_{f_X\sim \qgp}f_X. \tag{kernel interpolator}
\end{align*}
\tikz[remember picture, overlay]{\draw[latex-latex] ([yshift=0.15em,xshift=0.5em]a.north) to[bend left] ([yshift=0.15em,xshift=-0.2em]b.north);}%
This completes the proof.
\end{proof}
And second, the BPM of the spherised Gaussian process posterior $\qsph$ is the sign of the kernel interpolator of \textit{centroidal labels} $Y$:
\begin{lemma}[BPM of a spherised Gaussian process classifier is a kernel interpolator]\label{lem:gpc-spherised-bpm} For the spherised posterior $\qsph$ (Definition \ref{def:gpc-posterior-spherised}),
\begin{equation}
    f_{\mathrm{BPM}}(x) = \sign [K_{xX} K_{XX}^{-1} Y ].
\end{equation}
\end{lemma}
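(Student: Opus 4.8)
The plan is to mirror the structure of the proof of Lemma~\ref{lem:gpc-bpm}, replacing the correlated truncated Gaussian over train outputs with its spherised counterpart. First I would write down the Gibbs classifier of $\qsph$ as a three-step sampling procedure: sample the train outputs $f_X \sim \normal(0, \Id\cdot\abs{K_{XX}}^{1/m}\mid \sign f_X = Y)$; sample the conditional noise $\xi \sim \normal(0, K_{xx}-K_{xX}K_{XX}^{-1}K_{Xx})$ (this step is identical to the prior, since $\qsph(f_{X^\prime}|f_X)=\pgp(f_{X^\prime}|f_X)$ by Definition~\ref{def:gpc-posterior-spherised}); and return $\sign[K_{xX}K_{XX}^{-1}f_X + \xi]$. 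Applying the BPM approximation — reversing the order of $\sign$ and $\Expect$ exactly as in Lemma~\ref{lem:gpc-bpm} — together with $\Expect\xi = 0$ reduces the claim to showing
\[
\sign\big[K_{xX}K_{XX}^{-1}\,\Expect_{f_X\sim\qsph}f_X\big] = \sign\big[K_{xX}K_{XX}^{-1}Y\big].
\]

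The crux is therefore to evaluate the mean $\Expect_{f_X\sim\qsph}f_X$. Here I would exploit the defining feature of the spherised posterior: because its covariance $\Id\cdot\abs{K_{XX}}^{1/m}$ is a scalar multiple of the identity, the components of $f_X$ are independent prior to truncation, and the orthant constraint $\sign f_X = Y$ acts componentwise. Hence the truncated law factorises into $m$ independent half-normals, the $i$th restricted to the half-line of sign $Y_i$. Since the mean of a one-dimensional Gaussian of variance $\sigma^2$ truncated to $\{\,t : \sign t = Y_i\,\}$ is $Y_i\cdot\sigma\sqrt{2/\pi}$, taking $\sigma = \abs{K_{XX}}^{1/(2m)}$ gives
\[
\Expect_{f_X\sim\qsph}f_X = \sqrt{\tfrac{2}{\pi}}\cdot\abs{K_{XX}}^{1/(2m)}\cdot Y.
\]
As a sanity check, this is consistent with the off-diagonal second-moment identity $\Expect_{f_X\sim\qsph}[f_X^i f_X^j] = \abs{K_{XX}}^{1/m}\cdot\tfrac{2}{\pi}Y_iY_j$ for $i\neq j$ already used in the proof of Lemma~\ref{lem:kl-gp}, which factors as a product of the componentwise means by independence.

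Finally I would observe that the prefactor $\sqrt{2/\pi}\cdot\abs{K_{XX}}^{1/(2m)}$ is a strictly positive scalar, so it commutes through the linear map $K_{xX}K_{XX}^{-1}(\cdot)$ and then drops out of the outer $\sign$, yielding $f_{\mathrm{BPM}}(x) = \sign[K_{xX}K_{XX}^{-1}Y]$ as claimed. I expect no real obstacle beyond the half-normal mean computation; the only point requiring any care is justifying that a spherical covariance together with a componentwise orthant truncation genuinely factorises the distribution. This factorisation is precisely what collapses the centre-of-mass labels to a \emph{positive multiple} of the centroidal label vector $Y$ — in contrast to Lemma~\ref{lem:gpc-bpm}, where the correlated covariance $K_{XX}$ leaves $\Expect_{f_X\sim\qgp}f_X$ in general position and so the interpolator is of centre-of-mass rather than centroidal labels.
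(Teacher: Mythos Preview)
Your proposal is correct and follows essentially the same route as the paper: write the Gibbs classifier of $\qsph$ as the three-step sampling procedure, swap $\sign$ and $\Expect$, and reduce to the mean $\Expect_{f_X\sim\qsph}f_X$. The paper is actually terser than you are---it simply asserts the final equality $\sign K_{xX}K_{XX}^{-1}\Expect_{f_X\sim\qsph}f_X = \sign K_{xX}K_{XX}^{-1}Y$ without spelling out the half-normal mean computation or the factorisation argument---so your explicit evaluation $\Expect_{f_X\sim\qsph}f_X = \sqrt{2/\pi}\,\abs{K_{XX}}^{1/(2m)}Y$ and the observation that the positive scalar drops through $\sign$ fill in exactly the step the paper leaves implicit.
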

\begin{proof}
The Gibbs classifier of $\qsph$ classifies a test point $x$ in three steps:
\begin{align*}
\text{Sample train outputs:}\quad &f_X \sim \normal\big(0,\Id\cdot \abs{K_{XX}}^{1/m} \;|\; \sign f_X = Y \big);\\
\text{Sample noise:}\quad &\xi \sim \normal\left(0, K_{xx} - K_{xX}K_{XX}^{-1} K_{Xx}\right);\\
\text{Return:}\quad &\sign [K_{xX} K_{XX}^{-1} f_X + \xi ].
\end{align*}
Then the BPM classifier is given by exchanging operators in the Bayes classifier:
\begin{flalign*}
	&\sign \tikzmarknode{a}{\Expect}_{\xi,f_X}\,\tikzmarknode{b}{\sign} [K_{xX} K_{XX}^{-1} f_X + \xi ] && \tag{Bayes classifier}\\
	&\qquad\approx\sign \Expect_{\xi,f_X} [K_{xX} K_{XX}^{-1} f_X + \xi ]\tag{BPM classifier} &&\\
	&\qquad=\sign K_{xX} K_{XX}^{-1} \Expect_{f_X\sim \qsph}f_X = \sign K_{xX} K_{XX}^{-1}Y.&& \tag{kernel interpolator}
\end{flalign*}
\tikz[remember picture, overlay]{\draw[latex-latex] ([yshift=0.15em,xshift=0.5em]a.north) to[bend left] ([yshift=0.15em,xshift=-0.2em]b.north);}%
This completes the proof.
\end{proof}

These results lead to the following PAC-Bayesian generalisation guarantees for minimum RKHS norm kernel interpolation.

\begin{theorem}[Kernel PAC-Bayes]\label{thm:k-bpm} Given a kernel $k$ and a training sample $S=(X,Y)$, recall the definitions of the kernel complexity $\mathcal{A}$ (Definition \ref{def:k-complex}) and the Gaussian orthant probability $P_Y$ (Definition \ref{def:gop}). Let $\mathcal{D}$ be a data distribution over $\mathcal{X}\times\{\pm 1\}$. For the minimum RKHS norm kernel interpolator of data sample $(X,\Upsilon)$ given by $f_{\Upsilon}(x) := K_{xX} K_{XX}^{-1} \Upsilon$, define the test error:
\begin{align*}
    \eps[\Upsilon] &:= \Expect_{(x,y)\sim\mathcal{D}} \mathbb{I}[\sign f_{\Upsilon}(x) \neq y].
\end{align*}
Then, with probability $1-\delta$ over a training sample $(X,Y) \overset{\text{iid}}{\sim}\mathcal{D}^m$, the following bounds hold simultaneously:
\begin{align}
\eps[\Expect_{f_X\sim \qgp}f_X] &\leq \econst \cdot\left[ 1 - \exp \left(-\frac{\log 1/P_Y + \log (2m/\delta)}{m-1}\right)\right]\label{eq:k-pb1}\\
& \leq \econst \cdot\left[ 1 - \exp \left(-\frac{\mathcal{A}(k,X,Y) + \log (2m/\delta)}{m-1}\right)\right];\label{eq:k-pb2}\\
\eps[Y] &\leq \econst \cdot\left[ 1 - \exp \left(-\frac{\mathcal{A}(k,X,Y) + \log (2m/\delta)}{m-1}\right)\right].\label{eq:k-pb3}
\end{align}
\end{theorem}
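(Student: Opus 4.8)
The plan is to recognise that each of the three left-hand sides is exactly a BPM error for one of the two Gaussian process posteriors, and then to bound that BPM error by the corresponding Gibbs error — at the cost of a factor $\econst$ — before reading off the PAC-Bayes control of the Gibbs error from Theorem \ref{thm:pac-bayes-gpc}. Concretely, Lemma \ref{lem:gpc-bpm} identifies $\sign f_{\Upsilon}$ with $\Upsilon = \Expect_{f_X\sim\qgp}f_X$ as the BPM classifier of $\qgp$, so that $\eps[\Expect_{f_X\sim\qgp}f_X] = \bpmerr$ under the posterior $\qgp$; likewise Lemma \ref{lem:gpc-spherised-bpm} gives $\eps[Y] = \bpmerr$ under the spherised posterior $\qsph$. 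Each target inequality then follows from the single chain $\eps[\cdot] = \bpmerr \leq \econst\cdot\gibbserr \leq \econst\cdot[\text{PAC-Bayes bound}]$, reading the Gibbs-error bound off Inequality \ref{eq:gpc-bound} for \ref{eq:k-pb1}, off Inequality \ref{eq:gpc-bound-slack} for \ref{eq:k-pb2}, and off Inequality \ref{eq:sph-bound} for \ref{eq:k-pb3}. Since the bounds of Theorem \ref{thm:pac-bayes-gpc} already hold simultaneously for a $1-\delta$ fraction of samples, and every remaining step is deterministic given $S$, the simultaneous validity is inherited.

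The substantive step is the invocation of Lemma \ref{lem:gibbs-bpm}, whose hypotheses I would verify for $\qgp$ and $\qsph$. First I would pass to the feature-space representation in which the classifier acts linearly: writing $f(x) = w^\top\phi(x)$ for the feature map $\phi$ of $k$, the Gaussian process prior places a Gaussian measure on $w$, and the constraint $\sign f_X = Y$ becomes the system of linear half-space constraints $y_i\, w^\top\phi(x_i) > 0$, whose intersection is a convex cone. The posterior $\qgp$ is thus a Gaussian restricted to this cone and $\qsph$ a spherical Gaussian restricted to the same cone; in both cases the density is log-concave and supported on a convex set of positive volume, exactly as Lemma \ref{lem:gibbs-bpm}, via the weighted Gr\"unbaum inequality (Lemma \ref{lem:grunbaum}), requires. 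Any worry about an infinite-dimensional feature space is dispelled by noting that the argument only ever touches the one-dimensional projection $w^\top\phi(x)$ for a single test input: pushing the posterior forward along this linear functional yields a one-dimensional log-concave law (marginals of log-concave measures are log-concave), whose centroid halfspace carries mass at least $1/\econst$ and is contained in the origin halfspace of matching sign, so the $d=1$ case of Lemma \ref{lem:grunbaum} suffices.

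The main obstacle I anticipate is precisely this log-concavity verification, and in particular confirming that extending the truncated train-output marginal to the full function via the unchanged conditional $\pgp(f_{X^{\comp}}\mid f_X)$ preserves joint log-concavity. For this I would write the joint log-density as $\log q(f_X) + \log \pgp(f_{X^{\comp}}\mid f_X)$, observe that the first term is concave on the orthant — a negated positive-definite quadratic plus the indicator of a convex set — while the second is a negated quadratic in $f_{X^{\comp}} - K_{X^{\comp}X}K_{XX}^{-1}f_X$ that is jointly concave in $(f_X, f_{X^{\comp}})$, and conclude joint log-concavity by summation. With the hypotheses of Lemma \ref{lem:gibbs-bpm} secured, everything downstream — the factor $\econst$, the identification of the two kernel interpolators, and the deterministic character of these steps given $S$ — is routine, so the proof reduces to stitching Lemmas \ref{lem:gpc-bpm}, \ref{lem:gpc-spherised-bpm} and \ref{lem:gibbs-bpm} together with Theorem \ref{thm:pac-bayes-gpc}.
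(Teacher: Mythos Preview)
Your plan is correct and matches the paper's proof in structure: identify the BPM via Lemmas \ref{lem:gpc-bpm} and \ref{lem:gpc-spherised-bpm}, invoke Lemma \ref{lem:gibbs-bpm} for the factor $\econst$, and read off Theorem \ref{thm:pac-bayes-gpc}. The only substantive difference is how the hypotheses of Lemma \ref{lem:gibbs-bpm} are verified. Where you pass to the (possibly infinite-dimensional) RKHS feature map and then rescue the argument by marginalising to one dimension, the paper instead notes that for each fixed test input $x$ one may write $f(x)$ as a linear function of the $(m{+}1)$-dimensional random vector $(f_X,z)$, with $f_X$ drawn from the truncated posterior on train outputs and $z\sim\normal(0,1)$ independent, via the feature $\phi(x)=\big(K_{XX}^{-1}K_{Xx},\,(K_{xx}-K_{xX}K_{XX}^{-1}K_{Xx})^{1/2}\big)$. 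This puts the ``weight'' in $\R^{m+1}$ from the outset, where log-concavity on a convex set of positive volume is immediate for both $\qgp$ and $\qsph$, so Lemma \ref{lem:gibbs-bpm} applies with no infinite-dimensional detour and no appeal to preservation of log-concavity under marginals. Your route also works, but note a small slip: $\qsph$ is \emph{not} the restriction of a spherical Gaussian in the kernel feature space---it replaces the covariance of $f_X$ while retaining the GP conditional $\pgp(f_{X^\comp}\mid f_X)$---so your cone-in-feature-space description is only literally correct for $\qgp$; your second, density-level log-concavity argument does cover $\qsph$ correctly.
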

\begin{proof} First, consider the Gaussian process posterior $\qgp$ (Definition \ref{def:gpc-posterior}). The Gibbs error of this classifier is bounded by Inequalities \ref{eq:gpc-bound} and \ref{eq:gpc-bound-slack}. Observe that for all inputs $x\in\mathcal{X}$, the value $f(x)\sim\qgp$ follows:
\begin{equation*}
    f(x) = (f_X, z)^\top (K_{XX}^{-1} K_{Xx}, K_{xx}-K_{xX}K_{XX}^{-1}K_{Xx}),
\end{equation*}
for $f_X\sim\qgp$ and $z\sim\normal(0,1)$. But this distribution over $(f_X,z)$ is log-concave and supported on a convex subset of $\R^{m+1}$ with positive volume. Therefore, Inequalities \ref{eq:k-pb1} and \ref{eq:k-pb2} follow from Inequalities \ref{eq:gpc-bound} and \ref{eq:gpc-bound-slack} by Lemmas \ref{lem:gibbs-bpm} and \ref{lem:gpc-bpm}. Inequality \ref{eq:k-pb3} follows by applying the same argument but switching $\qgp$ for $\qsph$.
\end{proof}

\section{Max-margin neural networks as Bayes point machines}

This section presents the novel argument that infinitely wide neural networks, fit to large normalised margin, are Bayes point machine classifiers. The argument works by considering what happens to the neural network--Gaussian process posterior distribution as a notion of normalised margin is taken large. In short, this posterior distribution concentrates on a minimum RKHS norm kernel interpolator, which is itself a Bayes point machine by the results of Section \ref{sec:k-bpm}.

Consider a hyperspherical input space $\mathcal{X} = \sqrt{d_0}\cdot\Sph^{d_0-1}$ and a multilayer perceptron (Definition \ref{def:mlp}) with $L$ layers and scaled relu nonlinearity $\phi(\cdot) = \sqrt{2} \cdot \max(0,\cdot)$. For each layer $l=1,...,L$, consider sampling the weight matrix at that layer $W_l \in \R^{d_l \times d_{l-1}}$ from the distribution $\normal(0,\Id/d_{l-1})$. Then, as the layer widths $d_1,...,d_{l-1}\to\infty$, the corresponding functions are distributed:
\begin{equation}
    f \sim \gp(0,k_\mathrm{arccos}),
\end{equation}
where the kernel $k_\mathrm{arccos}$ is the compositional arccosine kernel of Theorem \ref{thm:relu}.

Now suppose that, at all layers $l=1,...,L$, the weight matrices are instead sampled $W_l \sim \normal(0,\sigma^2\cdot\Id/d_{l-1})$ for a choice of ``normalisation'' $\sigma>0$. The core observation is that the relu multilayer perceptron is homogeneous of degree $L$ in its weight vector, meaning that $f(x;\sigma\cdot w) = \sigma^L \cdot f(x;w)$ for all inputs $x\in\mathcal{X}$. This means that, as the layer widths $d_1,...,d_{l-1}\to\infty$, the resulting functions are distributed according to:
\begin{equation}
    f \sim \gp(0,\sigma^{2L}\cdot k_\mathrm{arccos}).
\end{equation}
Next, condition this distribution of functions on interpolating a training sample $(X,\gamma\cdot Y)$, where the labels $Y$ were scaled by a ``margin'' $\gamma>0$:
\begin{equation}\label{eq:nngp-norm-margin}
    f\sim\gp(0, \sigma^{2L} \cdot k_\mathrm{arccos} \mid f_X=\gamma\cdot Y ).
\end{equation}
By Theorem \ref{thm:force-gp}, the distribution of $f/\gamma$ concentrates on the kernel interpolator $x\mapsto K_{xX}K_{XX}^{-1}Y$ in the limit that the ``normalised margin'' $\gamma/\sigma^L$ is sent to infinity. In this expression, $K_{xX}$ and $K_{XX}$ are the Gram vector and Gram matrix corresponding to the unscaled kernel $k_\mathrm{arccos}$.

In summary: by defining a notion of \textit{normalised margin} for the neural network--Gaussian process posterior, and taking this normalised margin to infinity, the posterior concentrates on a minimum RKHS norm kernel interpolator. This function is itself a Bayes point machine by the results of Section \ref{sec:k-bpm}. 

This behaviour was tested experimentally, and the results are displayed in Figure \ref{fig:nn-margin}. The plots show the test accuracy of both the NNGP posterior as a function of normalised margin (Equation \ref{eq:nngp-norm-margin}), and also large but finite width multilayer perceptrons trained by a variant of gradient descent as a function of Frobenius-normalised margin (Definition \ref{def:frob-norm-margin}). Qualitatively similar behaviour was observed in both cases: the average of many functions of small normalised margin attained similar accuracy to one function of large normalised margin. 
\begin{figure}[p]
    \centering
    \includegraphics{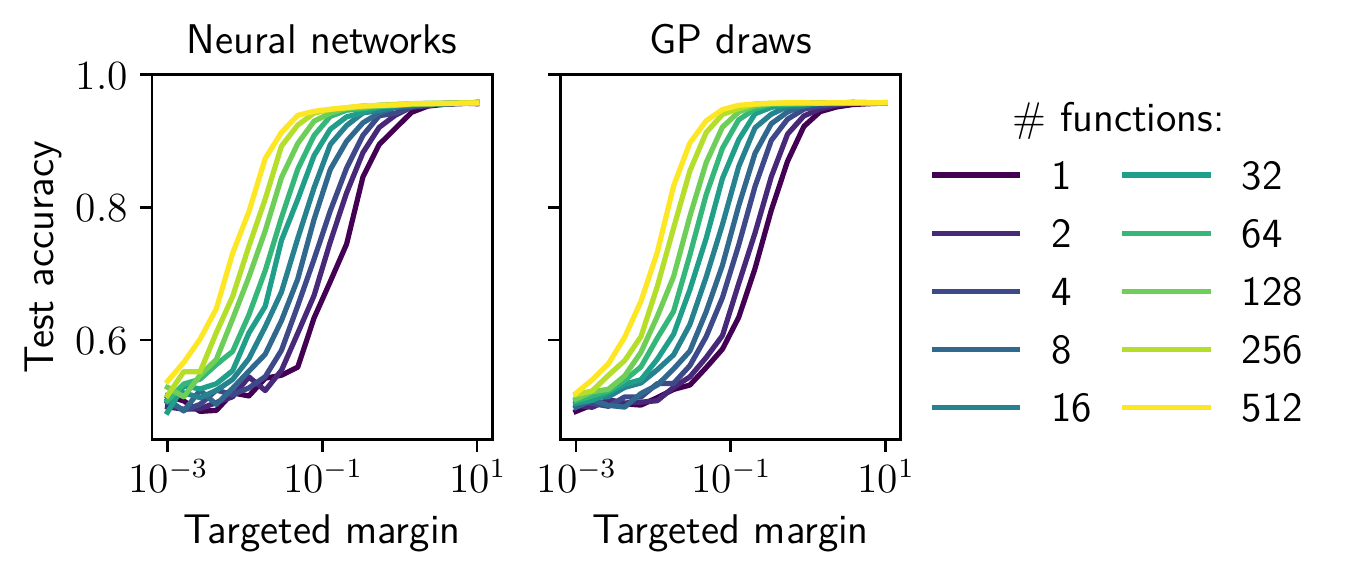}
    \caption[Test accuracy as a function of normalised margin]{Test accuracy as a function of normalised margin, for both width-2048 neural networks (left) and neural network--Gaussian processes (right). The task was binary classification of MNIST digits \citep{lecun2010mnist} using a 5-layer multilayer perceptron. Each curve shows the test accuracy of the average over a number of functions, where each function in the average attains the specified value of normalised margin. The main behaviour visible is that one function of large normalised margin appears equivalent to the average of many functions of small normalised margin. The NNGP experiments used Equation \ref{eq:nngp-norm-margin} to control normalised margin. The neural network experiments controlled Frobenius-normalised margin by minimising square loss with respect to rescaled labels, and projecting every weight matrix to a hypersphere of fixed Frobenius norm at each iteration.}
    \label{fig:nn-margin}
\end{figure}\clearpage

\section{Empirical comparisons between classification strategies}

This section reports an experimental comparison of various binary classification strategies for both Gaussian processes and neural networks. The task was binary classification of MNIST digits \citep{lecun2010mnist} using a 7-layer relu multilayer perceptron. The width was set to either 1000 or infinity (via the neural network--Gaussian process correspondence).

For neural network--Gaussian processes, the classification strategies tested were:
\begin{enumerate}
    \item \textit{Gibbs classifier:} the sign of a random posterior sample.
    \item \textit{Bayes classifier:} the majority vote over the posterior.
    \item \textit{BPM classifier:} the sign of the minimum RKHS norm kernel interpolator.
\end{enumerate}
The spherised Gaussian process posterior (Definition \ref{def:gpc-posterior-spherised}) was used for reasons of computational tractability. Also, several generalisation bounds for Gaussian processes and kernel classifiers are plotted:
\begin{enumerate}
    \item \textit{Rademacher bound:} a uniform convergence bound for kernel classifiers \citep[Theorem 21]{rademacher}.
    \item \textit{Gibbs bound:} Inequality \ref{eq:sph-bound} of Theorem \ref{thm:pac-bayes-gpc}.
    \item \textit{BPM bound:} Inequality \ref{eq:k-pb3} of Theorem \ref{thm:k-bpm}.
\end{enumerate}

For finite width neural networks, the classification strategies considered were:
\begin{enumerate}
    \item \textit{Gibbs classifier:} train a randomly initialised network to fit the train sample to small Frobenius-normalised margin.
    \item \textit{Bayes classifier:} take the majority vote over 501 networks trained from different random initialisations to small Frobenius-normalised margin.
    \item \textit{BPM classifier:} train a randomly initialised network to fit the train sample to large Frobenius-normalised margin.
\end{enumerate}

The results for Gaussian processes and kernel classifiers are presented in
Figure \ref{fig:k-bpm}, while the results for neural networks are presented in Figure \ref{fig:nn-bpm}. The results support the idea that minimum RKHS norm kernel interpolators, and large Frobenius-normalised margin neural networks, are Bayes point machines.

\begin{figure}[p]
    \centering
    \includegraphics{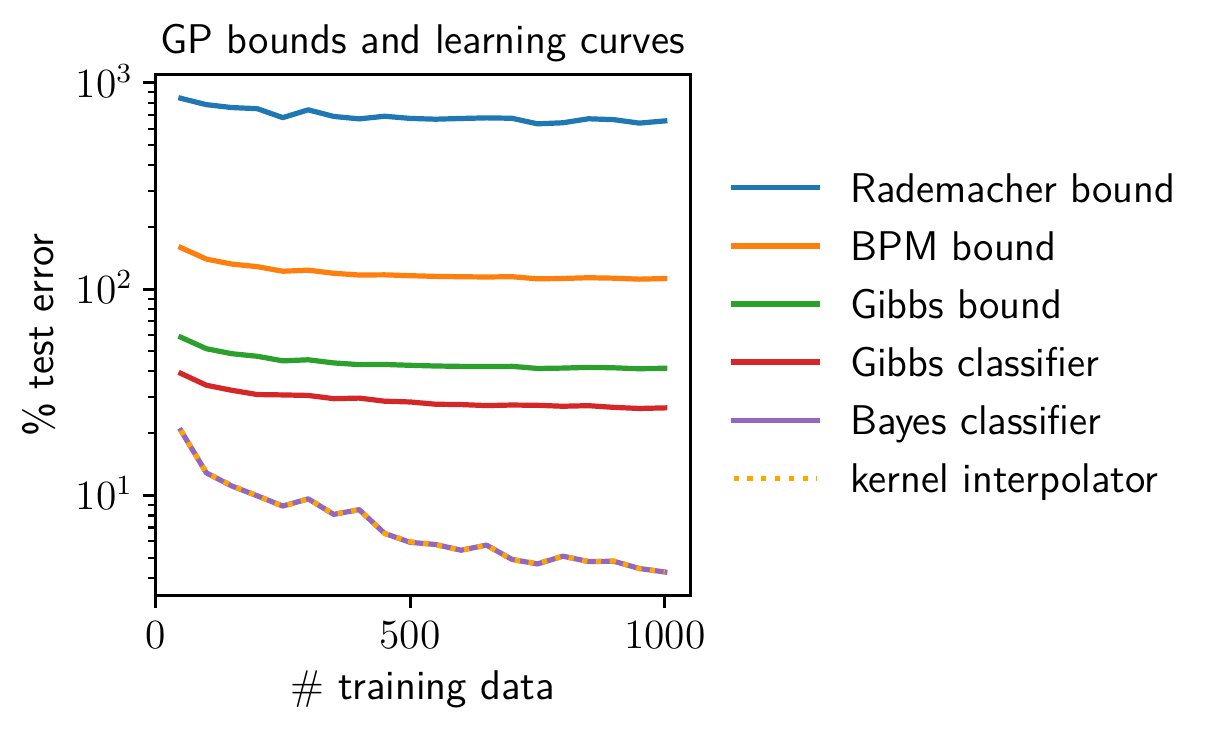}
    \caption[Testing classification strategies for Gaussian processes and kernels]{Testing classification strategies for Gaussian processes and kernels, on an MNIST \citep{lecun2010mnist} binary classification task. The Bayes classifier and kernel interpolator attain indistinguishable performance, supporting the claim that minimum RKHS norm kernel interpolation is a Bayes point machine. Despite the kernel interpolator significantly outperforming the Gibbs classifier in practice, the order of the Gibbs and BPM bounds are reversed. Still, the BPM bound is substantially smaller than the Rademacher bound.}
    \label{fig:k-bpm}
\end{figure}
\begin{figure}[p]
    \centering
    \hspace{.25em}\includegraphics{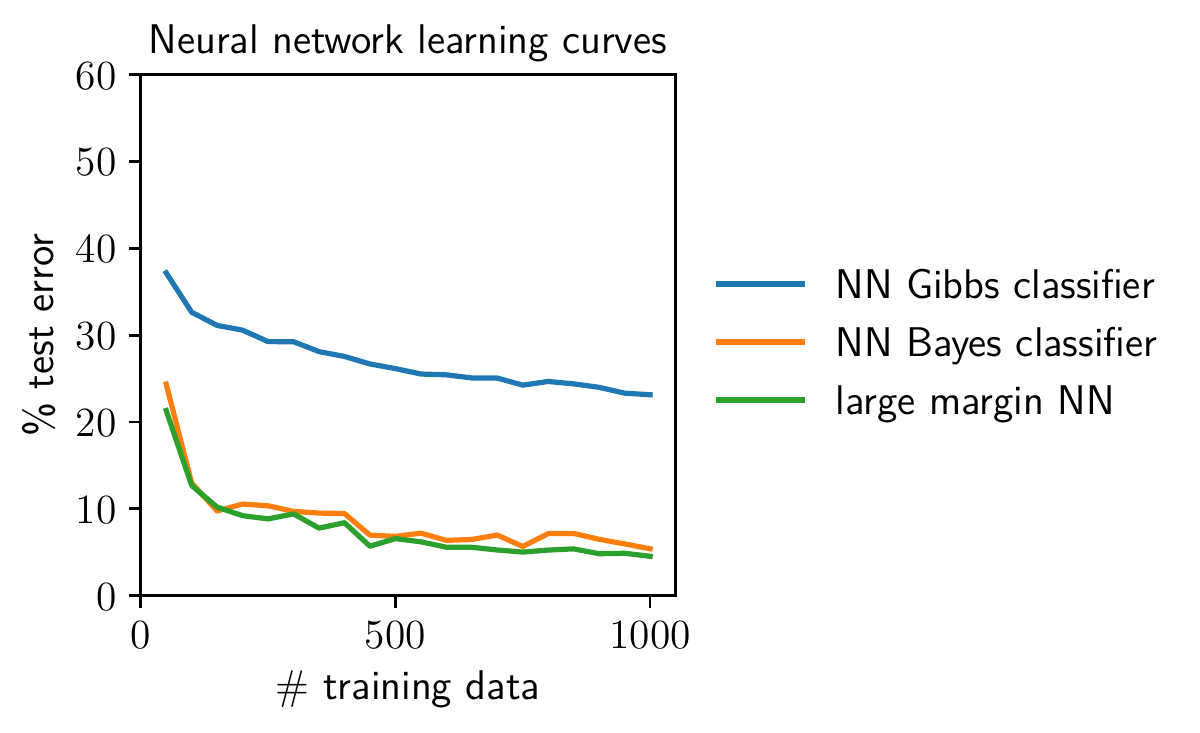}
    \caption[Testing classification strategies for neural networks]{Testing classification strategies for neural networks, on the same task as Figure \ref{fig:k-bpm}. The Bayes classifier reports a majority vote over 501 small margin networks. It attains similar (though not identical) performance to a single neural network of large Frobenius-normalised margin. This supports the idea that large margin neural networks are Bayes point machines. Both classifiers substantially outperform the corresponding Gibbs classifier.}
    \label{fig:nn-bpm}
\end{figure}

\printbibliography[heading=subbibliography]
\end{refsection}

\end{document}